\def\eqref#1{equation~\ref{#1}}
\def\1{\bm{1}}
\DeclareMathAlphabet{\mathsfit}{\encodingdefault}{\sfdefault}{m}{sl}
\SetMathAlphabet{\mathsfit}{bold}{\encodingdefault}{\sfdefault}{bx}{n}
\newtheorem{theorem}{Theorem}[section]
\newtheorem*{theorem*}{Theorem}
\newtheorem*{lemma*}{Lemma}
\newtheorem{definition}{Definition}
\newcommand{\dd}{\mathrm{d}}
\title{LoRA-Pro: Are Low-Rank Adapters Properly Optimized?}
\author{Zhengbo Wang$^{1,2}$\quad Jian Liang$^{2,3}$\thanks{Corresponding author.}\quad Ran He$^{2,3}$\quad Zilei Wang$^{1}$\quad Tieniu Tan$^{2,4}$ \\
\\
$^1$ University of Science and Technology of China \\
$^2$ NLPR \& MAIS, Institute of Automation, Chinese Academy of Sciences \\
$^3$ School of Artificial Intelligence, University of Chinese Academy of Sciences $^4$ Nanjing University \\
\texttt{zhengbowang@mail.ustc.edu.cn, liangjian92@gmail.com} \\
}
\begin{document}

\maketitle

\begin{abstract}

Low-rank adaptation, also known as LoRA, has emerged as a prominent method for parameter-efficient fine-tuning of foundation models.
Despite its computational efficiency, LoRA still yields inferior performance compared to full fine-tuning.
In this paper, we first uncover a fundamental connection between the optimization processes of LoRA and full fine-tuning: using LoRA for optimization is mathematically equivalent to full fine-tuning using a low-rank gradient for parameter updates.
And this low-rank gradient can be expressed in terms of the gradients of the two low-rank matrices in LoRA.
Leveraging this insight, we introduce LoRA-Pro, a method that enhances LoRA's performance by strategically adjusting the gradients of these low-rank matrices.
This adjustment allows the low-rank gradient to more accurately approximate the full fine-tuning gradient, thereby narrowing the performance gap between LoRA and full fine-tuning.
Furthermore, we theoretically derive the optimal solutions for adjusting the gradients of the low-rank matrices, applying them during fine-tuning in LoRA-Pro.
We conduct extensive experiments across natural language understanding, dialogue generation, mathematical reasoning, code generation, and image classification tasks, demonstrating that LoRA-Pro substantially improves LoRA's performance, effectively narrowing the gap with full fine-tuning.
Our code is publicly available at \url{https://github.com/mrflogs/LoRA-Pro}.

\end{abstract}

\section{Introduction}

Foundational models~\citep{radford2021learning, brown2020language, achiam2023gpt, kirillov2023segment, rombach2022high, touvron2023llama} have become the cornerstone of modern deep learning. 
By undergoing pre-training on massive datasets, these models typically exhibit excellent generalization and versatility. 
Remarkably, some foundation models even demonstrate emergent properties~\citep{hoffmann2022training, kaplan2020scaling}.
Due to these advantages, foundational models have been widely applied to various downstream applications.

Nevertheless, it still requires additional fine-tuning when applied to downstream tasks, where the huge parameter size of foundation models result in high cost in this stage.
To address this issue, recent research has focused on parameter-efficient fine-tuning (PEFT) methods~\citep{hu2022lora, houlsby2019parameter, lester2021power}.
PEFT methods reduce the fine-tuning cost by keeping the foundation models frozen and only fine-tuning small, additional lightweight adapters.
With the majority of parameters frozen, PEFT enables faster fine-tuning and requires fewer resources.

Low-rank adaptation~\citep{hu2022lora}, also known as LoRA, is one of the most famous PEFT methods, which has been widely adopted across various domains.
Inspired by previous works~\citep{aghajanyan2021intrinsic, li2018measuring}, LoRA hypothesizes that the changes in weights during model adaptation exhibit a low-rank structure.
To capture this, LoRA re-parameterizes these changes by expressing them as the product of two low-rank matrices: $W = W_0 + \Delta W \approx W_0 + sBA$, where $s$ is a scaling factor, and $A \in \mathbb{R}^{r \times n}$ and $B \in \mathbb{R}^{m \times r}$ are low-rank matrices with rank $r \ll \min(m, n)$.
LoRA reduces the number of trainable parameters from $m\times n$ to $r\times (m + n)$, thereby decreasing the cost of fine-tuning.
However, despite its efficiency, LoRA's fine-tuning performance often falls short compared to full fine-tuning~\citep{hu2022lora, liu2024dora, ding2023parameter}. 

In this paper, we propose a novel PEFT method, LoRA-Pro, aimed at bridging the gap between LoRA and full fine-tuning.
To begin with, we uncover a crucial connection between the optimization processes of LoRA and full fine-tuning: using LoRA for optimization is equivalent to full fine-tuning using a low-rank gradient for parameter updates.
In LoRA, we discover that the change in weight $W$ is connected to the changes in matrices $A$ and $B$, expressed as $\dd W = \frac{\partial W}{\partial A}\dd A + \frac{\partial W}{\partial B}\dd B$.
This relationship implies that updating matrices $A$ and $B$ with gradients $g^A$ and $g^B$ is equivalent to updating $W$ with a low-rank equivalent gradient $\tilde{g}$ in full fine-tuning, where:
\begin{equation}
    \tilde{g} = \frac{\partial W}{\partial A}g^A + \frac{\partial W}{\partial B}g^B
    = sBg^A + sg^BA.
\end{equation}
Leveraging this insight, our goal is to bridge LoRA's gap with full fine-tuning by minimizing the discrepancy between the low-rank equivalent gradient $\tilde{g}$ and the full fine-tuning gradient $g$, by adjusting the gradients of matrices $A$ and $B$, i.e., $\min_{g^A,g^B}\|\tilde{g} - g\|^2_F$.
Furthermore, we theoretically demonstrate that this optimization problem admits an optimal closed-form solution, as shown in Theorem~\ref{thm:close_form}.
\textbf{Notably, the optimal gradients for the low-rank matrices do not explicitly depend on the full fine-tuning gradient.}

Our main contributions are summarized as follows:
\begin{itemize}
    \item 
    We first uncover a crucial connection between LoRA and full fine-tuning in optimization process: optimizing with LoRA is mathematically equivalent to full fine-tuning using a low-rank gradient for updating.
    \item 
    We propose a novel PEFT method called LoRA-Pro.
    Our approach minimizes the distance between the true gradient and the low-rank gradient by adjusting the gradients of matrices A and B.
    We theoretically prove the optimal gradients and optimize using these gradients.
    \item Extensive experiments across tasks in natural language understanding, dialogue generation, mathematical reasoning, code generation, and image classification demonstrate the effectiveness of our method.
\end{itemize}

\section{Method}
\label{sec:method}
In this section, we begin by revisiting LoRA~\citep{hu2022lora} in Section~\ref{subsec:pre}. 
Following this, we conduct a comparison between LoRA and full fine-tuning, and point out their connection in the optimization process in Section~\ref{subsec:compare}. 
Finally, in Section~\ref{subsec:lorapro}, we introduce LoRA-Pro as a solution to bridge the gap between LoRA and full fine-tuning.

\subsection{Revisiting Low-Rank Adaptation}
\label{subsec:pre}
First of all, let's dive back into Low-Rank Adaptation (LoRA)~\citep{hu2022lora}.
LoRA's core idea revolves around recognizing the low-rank structure of the change matrix $\Delta W$ in the standard fine-tuning process. 
This insight allows LoRA~\citep{hu2022lora} to re-parameterize the change matrix into the product of two low-rank matrices, 
\begin{equation}
    W = W_0 + \Delta W \approx W_0 + sBA.
\end{equation}
Here, $W_0 \in \mathbb{R}^{m \times n}$ represents the pre-trained weight matrix, $B \in \mathbb{R}^{m \times r}$ and $A \in \mathbb{R}^{r \times n}$ are the low-rank matrices, and $s$ is a scaling factor. 
For LoRA~\citep{hu2022lora}, $s = \frac{\alpha}{r}$, while for rsLoRA~\citep{kalajdzievski2023rank}, $s = \frac{\alpha}{\sqrt{r}}$. 
Here, $\alpha$ is the hyper-parameter and $r\ll min(m, n)$ denotes the rank.
Consequently, LoRA significantly reduces the number of fine-tuning parameters from $m \times n$ to $r \times (m + n)$, thereby decreasing the computational cost of fine-tuning. 

\subsection{LoRA v.s. Full Fine-Tuning}
\label{subsec:compare}
Despite its widespread applications across various domains, LoRA's performance still falls short when compared to full fine-tuning.
In this part, we compare LoRA and full fine-tuning in the optimization process.
Then, we demonstrate that optimizing using LoRA is equivalent to using a low-rank gradient in full fine-tuning for updating the parameters.

\noindent
\textbf{Full fine-tuning.}
In full fine-tuning, we utilize differential to analyze the relationship between changes in the loss and changes in the weights:
\begin{equation}
\label{eq:fft}
    \dd L = \langle \frac{\partial L}{\partial W}, \dd W \rangle_F,
\end{equation}
where $\dd L$ and $\dd W$ denotes the changes of the parameter $W$ and the loss $L$, and $\langle\cdot, \cdot\rangle_F$ is the Frobenius inner product. 
To minimize the loss function, we typically set $\dd W = -\frac{\partial L}{\partial W}\triangleq -g$ (omitting the learning rate for simplicity), which results in $\dd L = -\|\frac{\partial L}{\partial W}\|^2_F \le 0$.

\noindent
\textbf{Low-rank adaptation.}
In LoRA optimization, given that $W = W_0 + sBA$, we compute the differential using the chain rule:
\begin{equation}
\label{eq:lora}
\begin{aligned}
    \dd L 
    & = \langle \frac{\partial L}{\partial W}, \dd W \rangle_F \\
    & = \langle \frac{\partial L}{\partial W}, \frac{\partial W}{\partial A}^T \dd A +  \frac{\partial W}{\partial B}^T \dd B\rangle_F \\
    & = \langle \frac{\partial L}{\partial W}\frac{\partial W}{\partial A}, \dd A \rangle_F + \langle \frac{\partial L}{\partial W}\frac{\partial W}{\partial B}, \dd B \rangle_F \\
    & = \langle \frac{\partial L}{\partial A}, \dd A \rangle_F + \langle \frac{\partial L}{\partial B}, \dd B \rangle_F.
\end{aligned}
\end{equation}
Similarly, LoRA sets $\dd A = -\frac{\partial L}{\partial A}\triangleq -g^A_{lora}$ and $\dd B = -\frac{\partial L}{\partial B}\triangleq -g^B_{lora}$, and thus $\dd L = -\|\frac{\partial L}{\partial A}\|^2_F -\|\frac{\partial L}{\partial B}\|^2_F \le 0$.
Moreover, employing the chain rule, we derive:
\begin{equation}
    g^A_{lora} = \frac{\partial L}{\partial W}\frac{\partial W}{\partial A} = sB^Tg, \qquad
    g^B_{lora} = \frac{\partial L}{\partial W}\frac{\partial W}{\partial B} = sgA^T. 
\end{equation}

\noindent
\textbf{Why LoRA performs worse than full fine-tuning.}
With Equation~(\ref{eq:fft}) and~(\ref{eq:lora}), we observe a critical connection between full fine-tuning and LoRA in the optimization process.
In LoRA, changes in matrices $A$ and $B$ are inherently linked to changes in matrix $W$, i.e., $\dd W= \frac{\partial W}{\partial A}^T \dd A +  \frac{\partial W}{\partial B}^T \dd B$.  
This indicates that updating $A$ and $B$ with gradient $g^A$ and $g^B$ is equivalent to performing full fine-tuning on $W$ via the following update:
\begin{equation}
\label{eq:lrg}
    \dd W = \frac{\partial W}{\partial A}^T \dd A +  \frac{\partial W}{\partial B}^T \dd B = -(sBg^A + sg^BA).
\end{equation}
Equation~(\ref{eq:lrg}) reveals that LoRA optimization is equivalent to full fine-tuning using a low-rank gradient $\tilde{g} = sBg^A + sg^BA$ (which rank is at most to $2r$\footnote{We provide the proof in Appendix~\ref{subsec:proof_low_rank}}) for optimization.
Consequently, the performance gap between LoRA and full fine-tuning may stem from differences between $\tilde{g}$ and the full gradient $g$.
The low-rank gradient $\tilde{g}$ may lose important information contained in $g$, leading to distinct optimization trajectories and ultimately causing LoRA to converge to a sub-optimal solution.

\subsection{Low-Rank Adaptation with Equivalent Gradient}
\label{subsec:lorapro}
\begin{tcolorbox}[colback=gray!20,colframe=gray]
\begin{definition}[Equivalent Gradient]
\label{def:eg}
In the context of LoRA optimization, we define the equivalent gradient as,
\begin{equation*}
    \tilde{g} \triangleq sBg^A + sg^BA,
\end{equation*}
where $s$ is the scaling factor, and $g^A$ and $g^B$ are gradients with respect to $A$ and $B$, respectively.
\end{definition}
\end{tcolorbox}
In this part, we introduce our LoRA-Pro method, which bridges the performance gap by minimizing the discrepancy between the gradients.
For convenience, we define the concept of equivalent gradient in Definition~\ref{def:eg}. 
Equivalent gradient describes the virtual low-rank gradient of the matrix $W$ in LoRA optimization process, despite $W$ not being a trainable parameter.
To narrow the performance gap, our goal is to carefully adjust $g^A$ and $g^B$ of matrices $A$ and $B$ to minimize the distance between the equivalent gradient $\tilde{g}$ and the full gradient $g$ in full fine-tuning. 
Hence, our objective is:
\begin{equation}
\label{eq:obj}
\begin{aligned}
    & \min_{g^A, g^B}\|\tilde{g} - g\|_F^2 \\
    \text{s.t.} \quad
    & \tilde{g} = sBg^A + sg^BA, \\
    & \dd L \le 0.
\end{aligned}
\end{equation}
Here, $\|\cdot\|_F$ denotes the Frobenius norm, and $\dd L$ denotes the change in loss when updating with gradients $g^A$ and $g^B$. 
The objective aims to minimize the distance of the gradients while ensuring a decrease in loss using the solutions for $g^A$ and $g^B$.
\begin{figure}[tbp]
    \centering
    \includegraphics[width=1.0\textwidth]{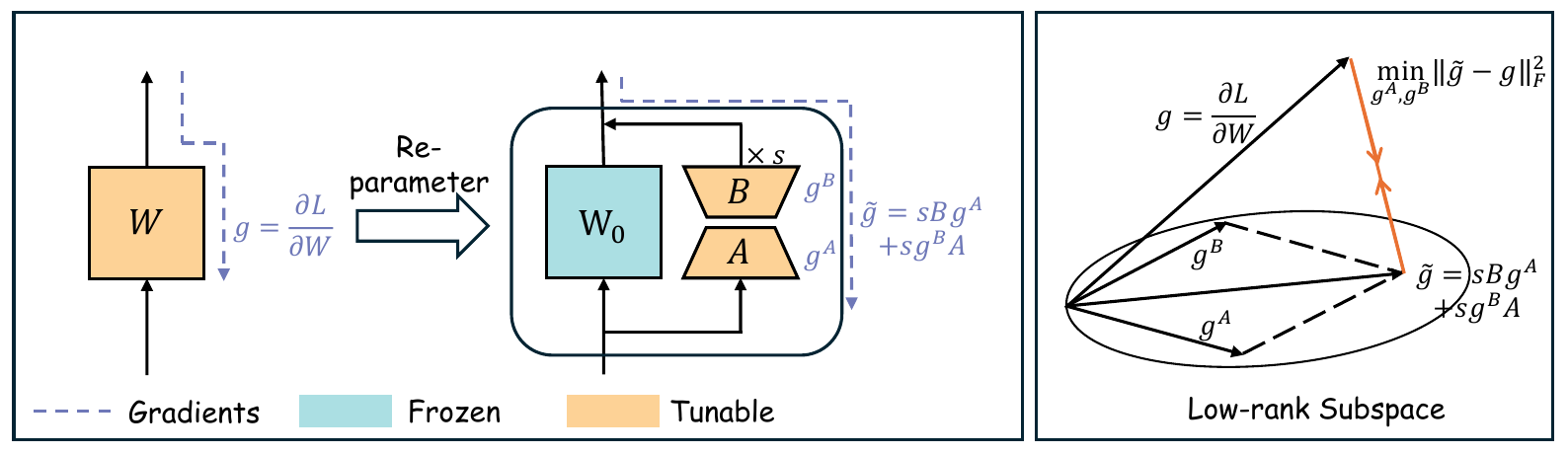}
    \caption{
    \textbf{Illustration of LoRA-Pro.}
    LoRA~\citep{hu2022lora} reduces the trainable parameter by re-parameterizing the weight into the product of two low-rank matrices, i.e., $W = W_0 + sBA$.
    We have discovered a connection between the optimization processes of full fine-tuning and LoRA.
    Updating matrices $B$ and $A$ using gradients $g^B$ and $g^A$ is equivalent to updating weight $W$ using a virtual low-rank gradient $\tilde{g} = sBg^A + sg^BA$.
    Therefore, in LoRA-Pro, we aim to adjust gradients $g^B$ and $g^A$ to minimize the distance between the equivalent gradient $\tilde{g}$ and the full fine-tuning gradient $g$, thereby reducing their performance gap.
    In Theorem~\ref{thm:close_form}, we provide the optimal update gradients, and in Appendix~\ref{sec:appendix_algorithm}, we present the pseudo-code for the optimization algorithm.
    }
    \label{fig:illustration}
\end{figure}

\noindent
\textbf{Closed-form solution.}
Fortunately, we prove that the optimization problem~(\ref{eq:obj}) admits an optimal closed-form solution, as stated in Theorem~\ref{thm:close_form}.
Additionally, an interesting observation arises from Theorem~\ref{thm:close_form}: while the full gradient $g$ serves as the ground truth in the objective, it does not necessarily explicit appear in the closed-form solution.
Instead, the closed-form solution for the optimal gradients can be expressed in terms of the gradients of LoRA.
This allows for an efficient gradient adjustment process, where we backpropagate using the standard LoRA and adjust the gradients of matrices $A$ and $B$ based on the closed-form solution presented in Theorem~\ref{thm:close_form}.
\footnote{We provide detailed algorithms in Appendix~\ref{sec:appendix_algorithm}.}
\begin{tcolorbox}[colback=gray!20,colframe=gray]
\begin{theorem}\label{thm:close_form}
Assume matrices $B\in\mathbb{R}^{m\times r}, A\in\mathbb{R}^{r\times n}$ are both full rank. 
For the objective $\min_{g^A, g^B} \|\tilde{g} - g\|^2_F$, the optimal solutions are given by:
\begin{align}
    g^A &= \frac{1}{s}(B^TB)^{-1}B^Tg + XA = \frac{1}{s^2}(B^TB)^{-1}g^A_{lora} + XA, \\ 
    g^B &= \frac{1}{s}[I - B(B^TB)^{-1}B^T]gA^T(AA^T)^{-1} - BX \\
        &= \frac{1}{s^2}[I - B(B^TB)^{-1}B^T]g^B_{lora}(AA^T)^{-1} - BX. 
\end{align}
Here, $X\in\mathbb{R}^{r\times r}$ represents an arbitrary matrix.
\end{theorem}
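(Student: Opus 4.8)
The objective $f(g^A,g^B)=\|sBg^A+sg^BA-g\|_F^2$ is a convex quadratic in the pair $(g^A,g^B)$, so every stationary point is a global minimizer; the plan is to write down the first-order (normal) equations, solve them, and then describe the \emph{whole} family of solutions. Writing the residual as $R=sBg^A+sg^BA-g$ and using $\langle Bg^A,M\rangle_F=\langle g^A,B^TM\rangle_F$ and $\langle g^BA,M\rangle_F=\langle g^B,MA^T\rangle_F$, the differential $\dd f = 2\langle R,\, sB\,\dd g^A + s\,\dd g^B A\rangle_F$ vanishes for all variations iff
\begin{align*}
B^T\big(sBg^A+sg^BA-g\big) &= 0,\\
\big(sBg^A+sg^BA-g\big)A^T &= 0.
\end{align*}
Since $B$ has full column rank and $A$ full row rank, $B^TB$ and $AA^T$ are invertible, which is exactly what makes this system solvable in closed form.

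Next I would eliminate $g^A$: the first equation gives $g^A=\frac1s(B^TB)^{-1}B^Tg-(B^TB)^{-1}B^Tg^BA$. Substituting into the second equation and abbreviating $P_B=B(B^TB)^{-1}B^T$ (the orthogonal projector onto $\mathrm{col}(B)$), the $g^B$-terms combine as $s(I-P_B)g^BAA^T$ and the $g$-terms as $(I-P_B)gA^T$, so $(I-P_B)g^B=\frac1s(I-P_B)gA^T(AA^T)^{-1}$. This pins down only the component of $g^B$ orthogonal to $\mathrm{col}(B)$; its component inside $\mathrm{col}(B)$ is unconstrained, so writing that part as $-BX$ for an arbitrary $X\in\mathbb{R}^{r\times r}$ yields
\[
g^B=\tfrac1s\big[I-B(B^TB)^{-1}B^T\big]gA^T(AA^T)^{-1}-BX .
\]
Plugging this back into the expression for $g^A$, the identities $(B^TB)^{-1}B^TB=I$ and $B^T(I-P_B)=0$ annihilate two terms and leave $g^A=\frac1s(B^TB)^{-1}B^Tg+XA$. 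Finally, substituting $g^A_{lora}=sB^Tg$ and $g^B_{lora}=sgA^T$ recasts both formulas in the stated "LoRA-gradient" form, e.g. $\frac1s(B^TB)^{-1}B^Tg=\frac1{s^2}(B^TB)^{-1}g^A_{lora}$.

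To finish, I would check the converse — that any pair of the stated form solves both normal equations — which is a one-line substitution using the same two identities $B^T(I-P_B)=0$ and $(B^TB)^{-1}B^TB=I$; combined with convexity this shows the displayed family is \emph{exactly} the minimizer set. One can additionally verify that these solutions respect the descent constraint $\dd L\le 0$ in \eqref{eq:obj} (it suffices to check the canonical representative $X=0$). The main obstacle is organizational rather than computational: one must run the elimination so that the projector $P_B$ emerges and the unknown full gradient $g$ cancels out of everything except the two explicit terms — the crucial observation being $B^T(I-P_B)=0$. A related point worth stating carefully is why the leftover freedom is precisely an arbitrary $r\times r$ matrix $X$: the kernel of the linear map $(g^A,g^B)\mapsto sBg^A+sg^BA$ is exactly $\{(XA,\,-BX):X\in\mathbb{R}^{r\times r}\}$, which again follows from the two full-rank hypotheses, so the particular solution above plus this kernel gives the complete description.
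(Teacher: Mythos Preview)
Your proposal is correct and follows essentially the same approach as the paper: set up the two first-order (normal) equations and solve by elimination, with the only organizational difference being that the paper eliminates $g^B$ first (so the free parameter $X$ emerges from the null space of the projector $I-A^T(AA^T)^{-1}A$ acting on the right of $g^A$), whereas you eliminate $g^A$ first and see $X$ as the unconstrained $\mathrm{col}(B)$-component of $g^B$. Your version is slightly more complete in that you explicitly invoke convexity, verify the converse, and identify the kernel $\{(XA,-BX)\}$ of the equivalent-gradient map, points the paper leaves implicit.
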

\begin{proof}
    See Appendix~\ref{subsec:proof_close_form}.
\end{proof}
\end{tcolorbox}

\textbf{Loss minimization.}
While Theorem~\ref{thm:close_form} offers a closed-form solution to the optimization problem $\min_{g^A, g^B} \|\tilde{g} - g\|^2_F$, it's crucial to understand that this solution does not inherently guarantee a decrease in loss when updating the matrices $A$ and $B$.
To address this concern, we introduce Theorem~\ref{thm:ge0}.
In this theorem, we prove that the change in loss $\dd L$ can be expressed as a negative sum of two Frobenius norms, which leads to $\dd L < 0$.
This result ensures that the optimization process consistently drives towards a lower loss.

\noindent
\textbf{Selection of matrix X.}
Although the equivalent gradient itself is not directly related to the matrix $X$, the presence of $X$ plays a significant role in the updates of matrices $A$ and $B$.
We select an appropriate $X$ such that $g^A$ and $g^B$ remain close to $g^A_{lora}$ and $g^B_{lora}$ respectively.
To achieve this, we minimize their Frobenius norm, as demonstrated in Equation~(\ref{eq:select_x}). 
In practical terms, $B^TB$ and $-AA^T$ do not share common eigenvalues. 
Therefore, according to Theorem~\ref{thm:x_select}, we can determine a unique optimal $X$ for updating matrices $A$ and $B$.
\begin{tcolorbox}[colback=gray!20,colframe=gray]
\begin{theorem}
\label{thm:ge0}
When updating matrices $A$ and $B$ using the closed-form solution from Theorem~\ref{thm:close_form}, we proceed as follows:
\begin{gather}
    A\leftarrow A - \gamma g^A \\
    B\leftarrow B - \gamma g^B, 
\end{gather}
where $\gamma \ge 0$ denotes the learning rate.
Our method ensures a decrease in the loss, akin to the standard gradient descent algorithm, expressed by:
\begin{equation}
    \dd L = -\gamma \{\langle g^A_{lora}, \frac{1}{s^2}(B^TB)^{-1}g^A_{lora} \rangle_F + \langle g^B_{lora}, \frac{1}{s^2}[I - B(B^TB)^{-1}B^T]g^B_{lora}(AA^T)^{-1} \rangle_F\} \le 0.
\end{equation}
\end{theorem}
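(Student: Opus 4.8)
The plan is to reduce everything to the first-order identity already in hand and then do a short positive-semidefiniteness computation. Recall that Equation~(\ref{eq:lora}) is just the chain rule, hence valid for \emph{arbitrary} increments: $\dd L = \langle g^A_{lora}, \dd A\rangle_F + \langle g^B_{lora}, \dd B\rangle_F$. I would substitute the specific step $\dd A = -\gamma g^A$, $\dd B = -\gamma g^B$ with $g^A, g^B$ the closed-form solutions of Theorem~\ref{thm:close_form}, which immediately gives $\dd L = -\gamma\big(\langle g^A_{lora}, g^A\rangle_F + \langle g^B_{lora}, g^B\rangle_F\big)$. So the whole statement reduces to evaluating these two Frobenius inner products and checking their combined sign.

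Next I would expand each solution into its two pieces, $g^A = \tfrac{1}{s^2}(B^TB)^{-1}g^A_{lora} + XA$ and $g^B = \tfrac{1}{s^2}[I - B(B^TB)^{-1}B^T]g^B_{lora}(AA^T)^{-1} - BX$. Then $\langle g^A_{lora}, g^A\rangle_F + \langle g^B_{lora}, g^B\rangle_F$ splits into exactly the two ``main'' terms appearing in the theorem plus the cross terms $\langle g^A_{lora}, XA\rangle_F - \langle g^B_{lora}, BX\rangle_F$. The one step that is not pure bookkeeping is showing these cross terms cancel for \emph{every} $X$: using $g^A_{lora} = sB^Tg$ and $g^B_{lora} = sgA^T$ together with the cyclic invariance of the trace, both equal $s\,\Tr(g^TBXA)$. (Equivalently, one can note that this update produces $\dd W = -\gamma(sBg^A + sg^BA) = -\gamma\tilde g$, and $\tilde g$ is independent of $X$ by construction, so $\dd L = \langle g, \dd W\rangle_F$ cannot depend on $X$.) This yields the displayed equality for $\dd L$.

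It then remains to check each main term is nonnegative. The first term, $\tfrac{1}{s^2}\langle g^A_{lora}, (B^TB)^{-1}g^A_{lora}\rangle_F$, is nonnegative because $B$ full rank makes $(B^TB)^{-1}$ symmetric positive definite, and $\langle M, SM\rangle_F = \Tr(M^TSM) = \sum_i m_i^T S m_i \ge 0$ for $S \succ 0$, summing over the columns $m_i$ of $M$. For the second term, I would observe that $P := I - B(B^TB)^{-1}B^T$ is the orthogonal projector onto the orthogonal complement of the column space of $B$, hence symmetric and idempotent, so $(g^B_{lora})^T P\, g^B_{lora} = (P g^B_{lora})^T (P g^B_{lora}) \succeq 0$; after using cyclicity of the trace to pair this against $(AA^T)^{-1} \succ 0$, the term equals $\tfrac{1}{s^2}\Tr\big((g^B_{lora})^T P\, g^B_{lora}\,(AA^T)^{-1}\big) \ge 0$, invoking the standard fact that $\Tr(ST) \ge 0$ whenever $S, T$ are symmetric positive semidefinite (write $\Tr(ST) = \Tr(S^{1/2} T S^{1/2})$). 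Hence $\dd L = -\gamma(\text{nonneg} + \text{nonneg}) \le 0$.

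I expect the cancellation of the $X$-dependent cross terms to be the only genuinely non-mechanical step; everything after it is projector identities and the trace-of-product-of-PSD-matrices lemma. I would also add a one-line remark that the inequality is strict, $\dd L < 0$, except at points where $B^Tg = 0$ and $P g^B_{lora} = 0$, which matches the informal ``$\dd L < 0$'' claim in the surrounding text while the theorem records the non-strict bound.
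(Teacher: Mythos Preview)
Your proposal is correct and follows essentially the same two-part structure as the paper: first expand $\dd L = -\gamma(\langle g^A_{lora}, g^A\rangle_F + \langle g^B_{lora}, g^B\rangle_F)$, show the $X$-dependent cross terms cancel via $g^A_{lora}A^T = sB^TgA^T = B^Tg^B_{lora}$, then verify each remaining term is nonnegative. The only cosmetic difference is in Part~II: the paper factors $(B^TB)^{-1}$ and $P$ via Cholesky to rewrite each term as an explicit squared Frobenius norm, whereas you use the column-sum identity $\Tr(M^TSM)=\sum_i m_i^TSm_i$ and the $\Tr(ST)\ge 0$ lemma for PSD matrices---both are standard and equivalent.
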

\begin{proof}
    See Appendix~\ref{subsec:proof_ge0}.
\end{proof}
\end{tcolorbox}
\begin{tcolorbox}[colback=gray!20,colframe=gray]
\begin{theorem}
\label{thm:x_select}
Consider the optimization problem,
\begin{equation}
\label{eq:select_x}
    \min_X \|g^A - g^A_{lora}\|_F^2 + \|g^B - g^B_{lora}\|_F^2,
\end{equation}
where $g^A$ and $g^B$ are the optimal solutions as stated in Theorem~\ref{thm:close_form}.
The optimal $X$ can be determined by solving the Sylvester equation:
\begin{equation}
    B^TBX + XAA^T = -\frac{1}{s^2}(B^TB)^{-1}g^A_{lora}A^T,
\end{equation}
which has a unique solution $X$ provided that $B^TB$ and $-AA^T$ do not have any shared eigenvalues.
\end{theorem}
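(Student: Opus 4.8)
The plan is to treat the objective in~(\ref{eq:select_x}) as an unconstrained quadratic minimization in the matrix variable $X\in\R^{r\times r}$ and to locate its unique stationary point. First I would substitute the closed-form expressions from Theorem~\ref{thm:close_form}. Writing $C_1 \triangleq [\tfrac{1}{s^2}(B^TB)^{-1} - I]g^A_{lora}$ and $C_2 \triangleq \tfrac{1}{s^2}[I - B(B^TB)^{-1}B^T]g^B_{lora}(AA^T)^{-1} - g^B_{lora}$, the residuals become $g^A - g^A_{lora} = C_1 + XA$ and $g^B - g^B_{lora} = C_2 - BX$, so the objective is $f(X) = \|C_1 + XA\|_F^2 + \|C_2 - BX\|_F^2$. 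Since $f$ is a sum of squared Frobenius norms of affine functions of $X$ it is convex, and because $A$ has full row rank (equivalently $AA^T$ invertible) it is in fact strictly convex; hence any critical point is the unique global minimizer, and it suffices to solve the first-order condition.

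The second step is to set the gradient to zero. Using the standard identities $\nabla_X \|C_1 + XA\|_F^2 = 2(C_1+XA)A^T$ and $\nabla_X \|C_2 - BX\|_F^2 = 2B^T(BX - C_2)$, the condition $\nabla_X f(X) = 0$ rearranges to the Sylvester-type equation
\[
    B^TB\,X + X\,AA^T = B^TC_2 - C_1 A^T.
\]
The key simplification — and the step I expect to require the most care — is showing the right-hand side collapses to $-\tfrac{1}{s^2}(B^TB)^{-1}g^A_{lora}A^T$. For the $B^TC_2$ term, the factor $B^T[I - B(B^TB)^{-1}B^T]$ vanishes identically because $B^TB(B^TB)^{-1} = I$, leaving $B^TC_2 = -B^Tg^B_{lora}$. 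For the rest one invokes the LoRA gradient identities $g^A_{lora} = sB^Tg$ and $g^B_{lora} = sgA^T$ from Section~\ref{subsec:compare}, which give $B^Tg^B_{lora} = sB^TgA^T = g^A_{lora}A^T$; expanding $C_1A^T = \tfrac{1}{s^2}(B^TB)^{-1}g^A_{lora}A^T - g^A_{lora}A^T$, the two copies of $g^A_{lora}A^T$ cancel against $-B^Tg^B_{lora}$, yielding exactly the stated equation.

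Finally, I would invoke the classical solvability criterion for Sylvester equations: $PX + XQ = R$ has a unique solution if and only if $P$ and $-Q$ have no common eigenvalue. Taking $P = B^TB$ and $Q = AA^T$ reproduces precisely the hypothesis in the statement — that $B^TB$ and $-AA^T$ share no eigenvalue — under which the first-order condition has a unique solution. Combined with the strict convexity established in the first step, this unique solution is the optimal $X$. (One may also note this condition is automatic under the full-rank assumption, since $B^TB$ and $AA^T$ are then positive definite, so their spectra lie in $(0,\infty)$ while that of $-AA^T$ lies in $(-\infty,0)$.)
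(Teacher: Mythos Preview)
Your proposal is correct and follows essentially the same route as the paper: set the first-order condition $\nabla_X f(X)=0$, use the identity $B^Tg^B_{lora}=g^A_{lora}A^T$ (equivalently $B^T[I-B(B^TB)^{-1}B^T]=0$) to collapse the right-hand side, and appeal to the Sylvester solvability criterion. Your version is in fact more complete than the paper's, which simply sets the derivative to zero without arguing convexity or that the stationary point is a minimizer; your strict-convexity observation via the full row rank of $A$ and your closing remark that the eigenvalue condition is automatic under the full-rank hypothesis are both welcome additions.
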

\begin{proof}
    See Appendix~\ref{subsec:proof_x_select}.
\end{proof}
\end{tcolorbox}
\section{Experimental Results}
In this section, we present extensive experiments to evaluate the effectiveness of LoRA-Pro across various tasks and models.
First, we assess natural language understanding capabilities using the GLUE benchmark by fine-tuning the T5-base~\citep{raffel2020exploring} model in Section~\ref{subsec:nlu}.
Next, we evaluate its capabilities in dialogue generation, mathematical reasoning, and code generation using the Llama-2-7B model~\citep{touvron2023llama}, covered in Section~\ref{subsec:instruction-tuning}.
We then examine LoRA-Pro's effectiveness on image classification tasks using the CLIP-ViT-B/16~\citep{radford2021learning} model in Section~\ref{subsec:image_classification}.
Finally, we conduct an ablation study of LoRA-Pro in Section~\ref{subsec:ablation}.

\textbf{Training details.}
To ensure a fair comparison, we align our experimental setup with that of LoRA-GA~\citep{wang2024loraga}.
By default, we fine-tune the model using the AdamW optimizer \citep{loshchilov2019decoupled} with hyper-parameters $\beta_1=0.9$, $\beta_2=0.999$, and weight decay set to 0.
We implement a cosine learning rate schedule with a warmup ratio of 0.03.
LoRA is applied to all linear modules, excluding the embedding layer, normalization layer, and classification head. 
By default, we set the rank $r=8$ and $\alpha=16$.

For natural language understanding tasks, we fine-tune T5-base~\citep{raffel2020exploring} model with learning rate 1e-4.
The sequence length is set to 128, and the training batch size is 32.  
For dialogue generation, mathematical reasoning and code generation tasks, we fine-tune the Llama-2-7B~\citep{touvron2023llama} model with a learning rate of 2e-5. 
We set the sequence length to 1024 and the macro batch size to 32.
For image classification tasks, we fine-tune the CLIP-ViT-B/16~\citep{radford2021learning} model with learning rate 1e-4. 
The classifier is obtained using prompts such as ``a photo of a \{class\}" and kept frozen during fine-tuning.
And the training batch size is set to 64.

All experiments are conducted on NVIDIA RTX A6000 GPUs.
To obtain a reliable estimate of model performance, we perform three runs with different random seeds and report the average and standard deviation of the results.

\subsection{Results on Natural Language Understanding Tasks}
\label{subsec:nlu}
\setlength{\tabcolsep}{6pt}
\begin{table}[!htbp]
  \centering
  \caption{
  Results of fine-tuning T5-base using full fine-tuning and various LoRA variants on a subset of the GLUE datasets.
  The LoRA rank is set to 8 by default.
  \textbf{Bold} and \underline{underline} indicate the highest and second-highest scores, respectively.
  }
    \begin{tabular}{cc|lllll|c}
    \toprule
    \multicolumn{2}{l|}{Method} & \multicolumn{1}{c}{\textbf{MNLI}} & \multicolumn{1}{c}{\textbf{SST2}} & \multicolumn{1}{c}{\textbf{CoLA}} & \multicolumn{1}{c}{\textbf{QNLI}} & \multicolumn{1}{c|}{\textbf{MRPC}} & \multicolumn{1}{c}{\textbf{Average}} \\
    \midrule
    \multicolumn{2}{l|}{Full FT} & \textbf{86.33±0.00} & \textbf{94.75±0.21} & \underline{80.70±0.24} & \underline{93.19±0.22} & 84.56±0.73 & \underline{87.91}  \\
\multicolumn{2}{l|}{LoRA} & 85.30±0.04 & 94.04±0.11 & 69.35±0.05 & 92.96±0.09 & 68.38±0.01 & 82.08  \\
    \midrule
    \multicolumn{2}{l|}{PiSSA} & 85.75±0.07 & 94.07±0.06 & 74.27±0.39 & 93.15±0.14 & 76.31±0.51 & 84.71  \\
    \multicolumn{2}{l|}{rsLoRA} & 85.73±0.10 & \underline{94.19±0.23} & 72.32±1.12 & 93.12±0.09 & 52.86±2.27 & 79.64  \\
    \multicolumn{2}{l|}{LoRA+} & 85.81±0.09 & 93.85±0.24 & 77.53±0.20 & 93.14±0.03 & 74.43±1.39 & 84.95  \\
    \multicolumn{2}{l|}{LoRA-GA} & 85.70±0.09 & 94.11±0.18 & 80.57±0.20 & 93.18±0.06 & \underline{85.29±0.24} & 87.77  \\
    \midrule
    \multicolumn{2}{l|}{DoRA} & 85.67±0.09 & 94.04±0.53 & 72.04±0.94 & 93.04±0.06 & 68.08±0.51 & 82.57  \\
    \multicolumn{2}{l|}{AdaLoRA} & 85.45±0.11 & 93.69±0.20 & 69.16±0.24 & 91.66±0.05 & 68.14±0.28 & 81.62  \\
    \midrule
    \rowcolor[RGB]{204, 230, 230}
    \multicolumn{2}{c|}{LoRA-Pro} & \underline{86.03±0.19} & \underline{94.19±0.13} & \textbf{81.94±0.24} & \textbf{93.42±0.05} & \textbf{86.60±0.14} & \textbf{88.44}  \\
    \bottomrule
    \end{tabular}%
  \label{tab:nlu}%
\end{table}%

In this section, we evaluate our LoRA-Pro across various natural language understanding datasets.
To provide a comprehensive comparison, we include several baseline methods: 1) full fine-tuning and the standard LoRA~\citep{hu2022lora}. 2) LoRA variants maintaining the original structure, such as rsLoRA~\citep{kalajdzievski2023rank}, LoRA+~\citep{hayou2024lora+}, PiSSA~\citep{meng2024pissa}, and LoRA-GA~\citep{wang2024loraga}, 3) LoRA variants with modified structures, including DoRA~\citep{liu2024dora} and AdaLoRA~\citep{zhang2023adaptive}.

The results are presented in Table~\ref{tab:nlu}.
We fine-tune the T5-base model~\citep{raffel2020exploring} with the baseline methods on a subset of GLUE datasets: MNLI, SST2, CoLA, QNLI, and MRPC.
As shown in Table~\ref{tab:nlu}, we observe that LoRA-Pro achieves the highest scores on 3 out of 5 datasets and the highest average score across all 5 datasets, and achieves the highest accuracy on average over the 5 datasets.
Specifically, on average over 5 datasets, LoRA-Pro surpasses standard LoRA~\citep{hu2022lora} with a margin of 6.36.
And LoRA-Pro even achieve higher than full fine-tuning. 
This superior performance may be attributed to overfitting in full fine-tuning, where optimizing all model parameters can lead to overfitting on the training data, thus reducing the model's generalization to the test set. 
This effect is particularly pronounced on small datasets, such as MRPC, which contains only 3.7k training data.
These results validate the effectiveness of our methods.

\setlength{\tabcolsep}{6pt}
\begin{table}[!htbp]
  \centering
  \caption{
  Fine-tuning results of Llama-2-7B model. 
  We fine-tune the Llama-2-7B model using full fine-tuning and LoRA variants on subsets of the WizardLM~\citep{xu2024wizardlm}, MetaMathQA~\citep{yu2024metamath}, and CodeFeedback~\citep{zheng2024opencodeinterpreter} datasets, respectively.
  And we assess dialogue generation, mathematical reasoning, and coding abilities on MT-Bench, GSM8K, and HumanEval datasets.
  \textbf{Bold} and \underline{underline} indicate the highest and second-highest scores, respectively.
  }
  \resizebox{0.75\textwidth}{!}{
    \begin{tabular}{lllll}
    \toprule
    \multicolumn{2}{l|}{Method} & \multicolumn{1}{c}{MT-Bench} & \multicolumn{1}{c}{GSM8K} & \multicolumn{1}{c}{HumanEval} \\
    \midrule
    \multicolumn{2}{l|}{Full FT} & \multicolumn{1}{l}{5.30±0.11} & \textbf{59.36±0.85} & \textbf{35.31±2.13} \\
    \multicolumn{2}{l|}{LoRA} & \multicolumn{1}{l}{5.61±0.10} & 42.08±0.04 & 14.76±0.17 \\
    \midrule
    \multicolumn{2}{l|}{PiSSA} & \multicolumn{1}{l}{5.30±0.02} & 44.54±0.27 & 16.02±0.78 \\
    \multicolumn{2}{l|}{rsLoRA} & \multicolumn{1}{l}{5.25±0.03} & 45.62±0.10 & 16.01±0.79 \\
    \multicolumn{2}{l|}{LoRA+} & \multicolumn{1}{l}{5.71±0.08} & 52.11±0.62 & 18.17±0.52 \\
    \midrule
    \multicolumn{2}{l|}{DoRA} & \multicolumn{1}{l}{\textbf{5.97±0.02}} & 53.07±0.75 & 19.75±0.41 \\
    \multicolumn{2}{l|}{AdaLoRA} & \multicolumn{1}{l}{5.57±0.05} & 50.72±1.39 & 17.80±0.44 \\
    \midrule
    \multicolumn{2}{l|}{LoRA-GA} & \multicolumn{1}{l}{\underline{5.95±0.16}} & 53.60±0.30 & 19.81±1.46 \\
    \multicolumn{2}{l|}{LoRA-GA (rank=32)} & \multicolumn{1}{l}{5.79±0.09} & 55.12±0.30 & 20.18±0.19 \\
    \multicolumn{2}{l|}{LoRA-GA (rank=128)} & \multicolumn{1}{l}{6.13±0.07} & 55.07±0.18 & 23.05±0.37 \\
    \midrule
    \rowcolor[RGB]{204, 230, 230}
    \multicolumn{2}{l|}{LoRA-Pro} & 5.72±0.03 & \underline{57.57±0.50} & \underline{22.97±0.35} \\
    \rowcolor[RGB]{204, 230, 230}
    \multicolumn{2}{l|}{LoRA-Pro (rank=32)} & 5.57±0.51 & 57.97±0.50 & 26.63±0.35 \\
    \rowcolor[RGB]{204, 230, 230}
    \multicolumn{2}{l|}{LoRA-Pro (rank=128)} &    5.67±0.11   & 61.08±0.19 & 30.28±0.93 \\
    \bottomrule
    \end{tabular}%
    }
  \label{tab:nlg}%
\end{table}%

\subsection{Results on Large Language Models}
\label{subsec:instruction-tuning}
In this section, we evaluate the performance of LoRA-Pro on large language models, focusing on dialogue generation, mathematical reasoning, and code generation capabilities. 
Our experimental setup follows the configuration used in LoRA-GA~\citep{wang2024loraga}.
\begin{itemize}
    \item For the dialogue generation task, we fine-tune the Llama-2-7B~\citep{touvron2023llama} model on a 52k subset of the WizardLM dataset~\citep{xu2024wizardlm} and evaluate it using the MT-Bench dataset~\citep{zheng2024judging}. 
    GPT-4 is used to assess the quality of the model’s responses, and we report the first-turn score as the metric.
    \item For the math task, we fine-tune the Llama-2-7B~\citep{touvron2023llama} model on a 100k sample from the MetaMathQA dataset~\citep{yu2024metamath}. 
    The model is then evaluated on the GSM8K test set~\citep{cobbe2021training}, and we report the accuracy as the metric.
    \item For the coding task, we fine-tune the Llama-2-7B~\citep{touvron2023llama} model on a 100k subset of the CodeFeedback dataset~\citep{zheng2024opencodeinterpreter} and test it on the HumanEval dataset~\citep{chen2021evaluating}, reporting the PASS@1 metric.
\end{itemize}

We compare LoRA-Pro with several baselines, including full fine-tuning, LoRA~\citep{hu2022lora}, PiSSA~\citep{meng2024pissa}, rsLoRA~\citep{kalajdzievski2023rank}, LoRA+~\citep{hayou2024lora+}, DoRA~\citep{liu2024dora}, AdaLoRA~\citep{zhang2023adaptive}, and LoRA-GA~\citep{wang2024loraga}. 
By default, we set the rank to 8 and $\alpha=16$. 
Following LoRA-GA~\citep{wang2024loraga}, we initialize the scaling factor as in rsLoRA~\citep{kalajdzievski2023rank}, i.e., $s = \frac{\alpha}{\sqrt{r}}$.

Table~\ref{tab:nlg} presents our experimental results, which demonstrate LoRA-Pro's superior performance.
With a rank of 8, LoRA-Pro achieves notable improvements over the original LoRA: 0.11 on MT-Bench, 15.49 on GSM8K, and 8.21 on HumanEval. 
When compared to the second-best PEFT method, LoRA-GA, LoRA-Pro shows consistent gains: 3.97 on GSM8K and a substantial 3.16 on HumanEval. 
These results validate the effectiveness of our LoRA-Pro method.

Interestingly, we observe that full fine-tuning unexpectedly underperforms on MT-Bench. 
We attribute this to potential discrepancies between the WizardLM training data distribution and the MT-Bench evaluation set.
The extensive learning capacity of full fine-tuning may lead to overfitting on the training distribution, compromising generalization to MT-Bench.
Since LoRA-Pro aligns more closely with full fine-tuning during optimization, its relatively poor performance on MT-Bench may also be attributed to overfitting.

To further explore the scalability of our method, we increase the rank in LoRA-Pro from 8 to 128.
Our observations reveal a clear trend: as the rank increases, the performance gap between LoRA-Pro and full fine-tuning narrows rapidly.
Notably, LoRA-Pro consistently outperforms LoRA-GA at the same ranks on both GSM8K and HumanEval datasets.
At rank 32, LoRA-Pro surpasses LoRA-GA by 2.85 on GSM8K and 6.45 on HumanEval. 
This performance disparity becomes even more pronounced at rank 128, where LoRA-Pro outperforms LoRA-GA by 6.01 on GSM8K and 7.23 on HumanEval.
These results demonstrate the superior scalability and effectiveness of LoRA-Pro across various rank settings.

\subsection{Results on Image Classification Tasks}
\label{subsec:image_classification}
\setlength{\tabcolsep}{6pt}
\begin{table}[!htbp]
  \centering
  \caption{
  Fine-tuning results of CLIP-ViT-B/16 on image classification tasks.
  We fine-tune CLIP-ViT-B/16 using full fine-tuning and LoRA variants across StanfordCars, DTD, EuroSAT, GTSRB, RESISC45, SUN397, and SVHN datasets.
  \textbf{Bold} indicates the highest results, while \underline{underline} represents the second-highest results.
  }
  \resizebox{1.0\textwidth}{!}{
    \begin{tabular}{ll|ccccccc|c}
    \toprule
    \multicolumn{2}{l|}{Method} & \textbf{Cars} & \textbf{DTD	} & \textbf{EuroSAT	} & \textbf{GTSRB} & \textbf{RESISC45} & \textbf{SUN397} & \textbf{SVHN} & \textbf{Average} \\
    \midrule
    \multicolumn{2}{l|}{Zero-shot} & 63.75  & 44.39  & 42.22  & 35.22  & 56.46  & 62.56  & 15.53  & 45.73  \\
    \multicolumn{2}{l|}{Full FT} & {84.23±0.06} & 77.44±0.19 & \underline{98.09±0.03} & 94.31±0.28 & 93.95±0.0 & 75.35±0.10 & {93.04±0.18} & {88.06}  \\
    \midrule
    \multicolumn{2}{l|}{LoRA} & 72.81±0.13 & 73.92±0.38 & 96.93±0.07 & 92.40±0.10 & 90.03±0.14 & 70.12±0.18 & 88.02±0.07 & 83.46  \\
    \multicolumn{2}{l|}{rsLoRA} & 82.38±0.20 & \underline{78.03±0.76} & 98.06±0.08 & {95.04±0.11} & {93.96±0.18} & {75.38±0.24} & 92.74±0.18 & 87.94  \\
    \multicolumn{2}{l|}{LoRA+} & 72.87±0.18 & 74.07±0.45 & 97.01±0.02 & 92.42±0.18 & 89.96±0.11 & 70.17±0.15 & 88.08±0.05 & 83.51  \\
    \multicolumn{2}{l|}{DoRA} & 73.72±0.06 & 73.72±0.33 & 96.95±0.01 & 92.38±0.17 & 90.03±0.08 & 70.20±0.19 & 88.23±0.05 & 83.48  \\
    \multicolumn{2}{l|}{{LoRA-GA}} & {\underline{85.18±0.41}} & {77.50±0.12} & {98.05±0.27} & {\underline{95.28±0.10}} & {\underline{94.43±0.19}} & {\underline{75.44±0.06}} & {\underline{93.68±0.35}} & {\underline{88.51}} \\
    \midrule
    \rowcolor[RGB]{204, 230, 230}
    \multicolumn{2}{l|}{LoRA-Pro} & \textbf{85.87±0.08} & \textbf{78.64±0.25} & \textbf{98.46±0.03} & \textbf{95.66±0.05} & \textbf{94.75±0.21} & \textbf{76.42±0.14} & \textbf{94.63±0.20} & \textbf{89.20}  \\
    \bottomrule
    \end{tabular}%
    }
  \label{tab:clip}%
\end{table}%
In this section, we assess the performance of LoRA-Pro on image classification tasks.
To provide a comprehensive comparison, we compare it with several baselines: zero-shot CLIP~\citep{radford2021learning}, full fine-tuning, vanilla LoRA~\citep{hu2022lora}, rsLoRA~\citep{kalajdzievski2023rank}, LoRA+~\citep{hayou2024lora+}, DoRA~\citep{liu2024dora}, and LoRA-GA~\citep{wang2024loraga}.

We fine-tune the CLIP-ViT-B/16~\citep{radford2021learning} model on various datasets, including StanfordCars~\citep{krause20133d}, DTD~\citep{cimpoi2014describing}, EuroSAT~\citep{helber2019eurosat}, GTSRB~\citep{houben2013detection}, RESISC45~\citep{cheng2017remote}, SUN397~\citep{xiao2010sun}, and SVHN~\citep{netzer2011reading}.
Accuracy is used as the evaluation metric.
During fine-tuning, only the visual backbone of the CLIP-ViT-B/16 model is updated, while the classifier, derived from prompts such as ``a photo of a \{class\}", remains frozen.

The results are presented on Table~\ref{tab:clip}.
LoRA-Pro achieves the highest accuracy across all seven datasets.
Specifically, on average, LoRA-Pro surpasses zero-shot classification by 43.47, outperforms LoRA~\citep{hu2022lora} by 5.74, and exceeds rsLoRA~\citep{kalajdzievski2023rank} by 1.26.
These results validate the effectiveness of our LoRA-Pro method.

\subsection{Ablation Study}
\label{subsec:ablation}

\begin{minipage}{0.5\textwidth}
\textbf{Ablation study on the full-rank assumption.} 
In Theorem~\ref{thm:close_form}, we assume that the matrices $A\in\mathbb{R}^{r\times n}$ and $B\in\mathbb{R}^{m\times r}$ are full-rank during training.
Our goal here is to verify whether this assumption holds in practice.
We track the rank changes of all $A$ and $B$ matrices during the fine-tuning process of Llama-2-7B on the MetaMathQA~\citep{yu2024metamath} dataset.

In Figure~\ref{fig:rank_vis}, we illustrate the rank changes of matrices $A$ and $B$ from the $q$ projection of layer 9 during the training process, with rank set to 8 and 32, respectively.
We observed that, although $A$ and $B$ do not initially satisfy the full rank assumption (with matrix $B$ initialized as a zero matrix), both matrices achieve full rank after the first update step.
The rank behavior of $A$ and $B$ in other layers also exhibits similar results.
\end{minipage}
\begin{minipage}{0.48\textwidth}
    \centering
    \includegraphics[width=\textwidth]{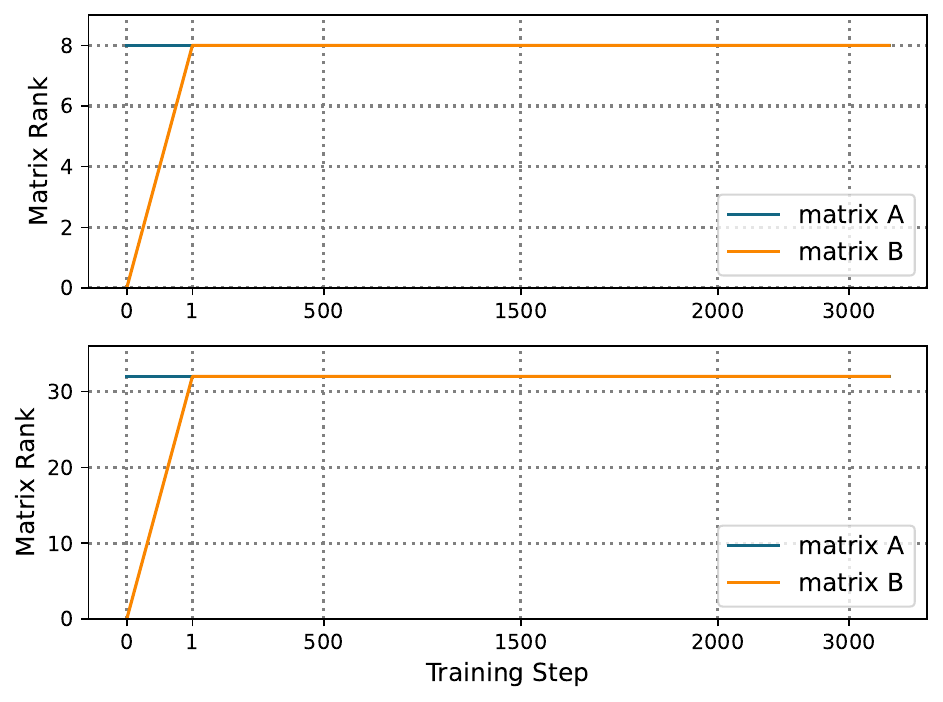}
    \captionof{figure}{Visualization of matrix ranks of A and B during training, with ranks set to $8$ and $32$, respectively.
    }
    \label{fig:rank_vis}
\end{minipage}
This observation provides practical evidence that the assumption in Theorem~\ref{thm:close_form} is reasonable and supports the validity of the proposed solutions.

\setlength{\tabcolsep}{6pt}
\begin{table}[!tbp]
  \centering
  \caption{
  We compare LoRA, LoRA-Pro, and Full Fine-Tuning in terms of memory cost, training time, and performance on the MT-Bench, GSM8K, and HumanEval datasets.
  Memory cost is measured using a single A6000 GPU with a batch size of 1.
  Training time is recorded on the MetaMathQA dataset using 8 A100 GPUs with DeepSpeed ZeRO-2 stage optimization.
  }
  \resizebox{0.85\textwidth}{!}{
    \begin{tabular}{l|c|c|ccc}
    \toprule
          \multicolumn{1}{l|}{Method} & \multicolumn{1}{c|}{Memory Cost} & \multicolumn{1}{c|}{Training Time} & \multicolumn{1}{c}{MT-Bench} & \multicolumn{1}{c}{GSM8K} & \multicolumn{1}{c}{HumanEval} \\
    \midrule
    Full FT   &   $\sim8\times 40$ GB    &  4h 20min     & \multicolumn{1}{c}{5.30±0.11} & \multicolumn{1}{c}{59.36±0.85} & \multicolumn{1}{c}{35.31±2.13} \\
    LoRA  &   $\sim8\times 17$ GB   
    &    1h 30min   & \multicolumn{1}{c}{5.61±0.10} & \multicolumn{1}{c}{42.08±0.04} & \multicolumn{1}{c}{14.76±0.17} \\
    LoRA-GA  &   $\sim8\times 17$ GB
    &   1h 31min    & \multicolumn{1}{c}{5.95±0.16} & \multicolumn{1}{c}{53.60±0.30} & \multicolumn{1}{c}{19.81±1.46} \\
    \rowcolor[RGB]{204, 230, 230}
    LoRA-Pro &   $\sim8\times 21$ GB
    &    1h 41min   &   5.72±0.03 & 57.57±0.50 & 22.97±0.35 \\
    \bottomrule
    \end{tabular}%
    }
  \label{tab:add_cost}%
\end{table}%

\textbf{Memory footprint and training time.}
Here, we evaluate the additional costs associated with using LoRA-Pro compared to LoRA.
We primarily focus on comparing the differences in memory cost and training time between LoRA-Pro, LoRA, and full fine-tuning.
The results of the experiments are presented in Table~\ref{tab:add_cost}.
We measure memory cost using 8 A6000 GPUs with a batch size of 1. 
Training time is recorded on the WizardLM dataset using 8 A100 GPUs with DeepSpeed~\citep{rasley2020deepspeed} ZeRO-2 stage optimization.

The results are shown in Table~\ref{tab:add_cost}.
From the table, we observe the following: 1) LoRA-Pro requires approximately 4GB more GPU memory compared to LoRA.
This difference likely stems from the need to compute $B^TB$, $AA^T$, and their inverses during the calculation of the optimal gradients.
2) Surprisingly, the training time for LoRA-Pro is nearly identical to that of LoRA, with only about 10 minutes increase in additional training time.
We attribute this to the fact that the matrices $A$ and $B$ in LoRA are low-rank. 
Consequently, the extra computations required by LoRA-Pro (such as matrix inversion and the calculation of $X$) are performed on small $r \times r$ matrices, making the extra computational overhead manageable.

Considering that LoRA-Pro uses less memory and trains faster than full fine-tuning, while also providing performance improvements over LoRA, we believe that the additional memory and training time costs are acceptable.

\begin{figure}[!htbp]
    \begin{minipage}{0.325\textwidth}
        \centering
        \includegraphics[width=1.0\textwidth]{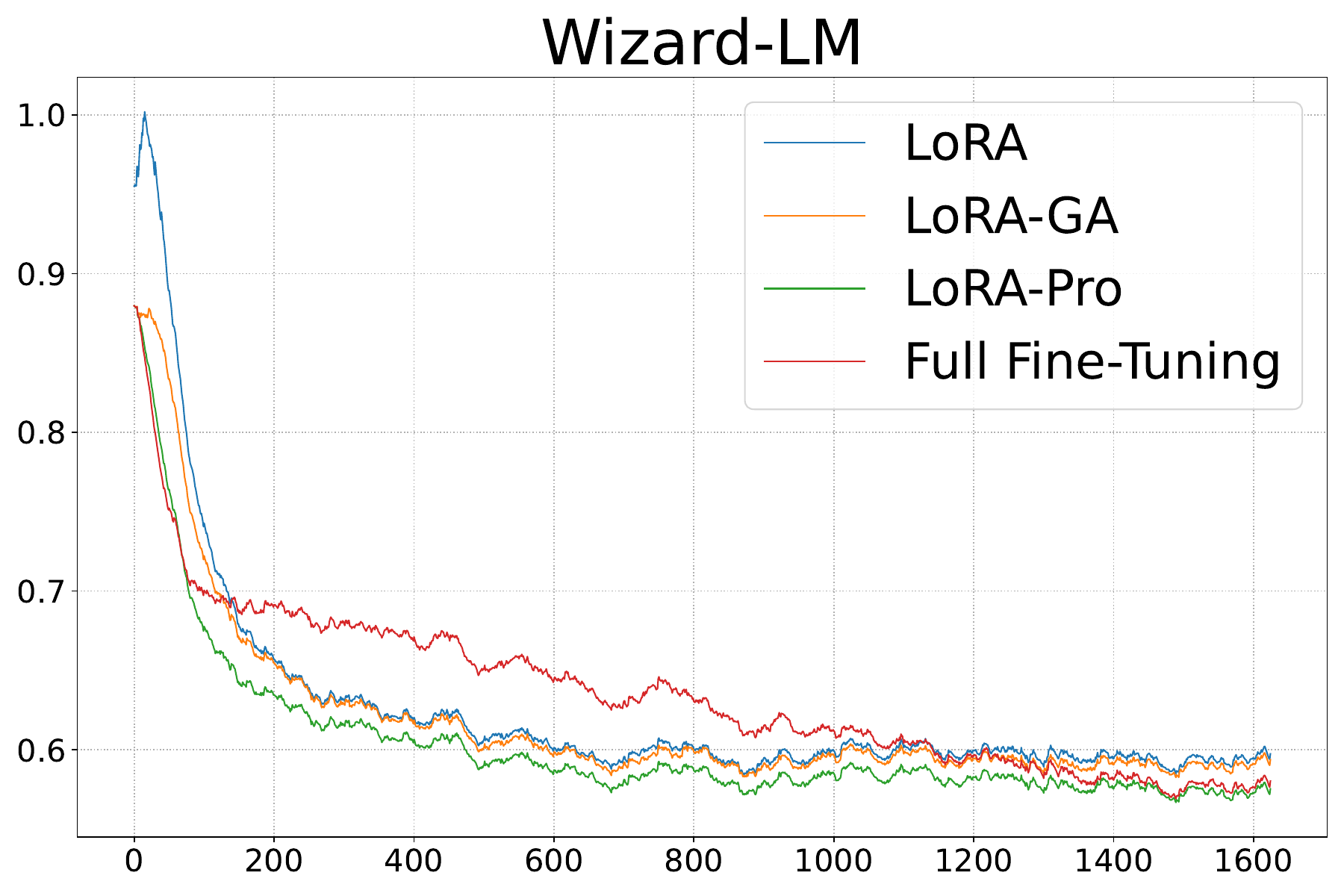}
    \end{minipage}
    \begin{minipage}{0.325\textwidth}
        \centering
        \includegraphics[width=1.0\textwidth]{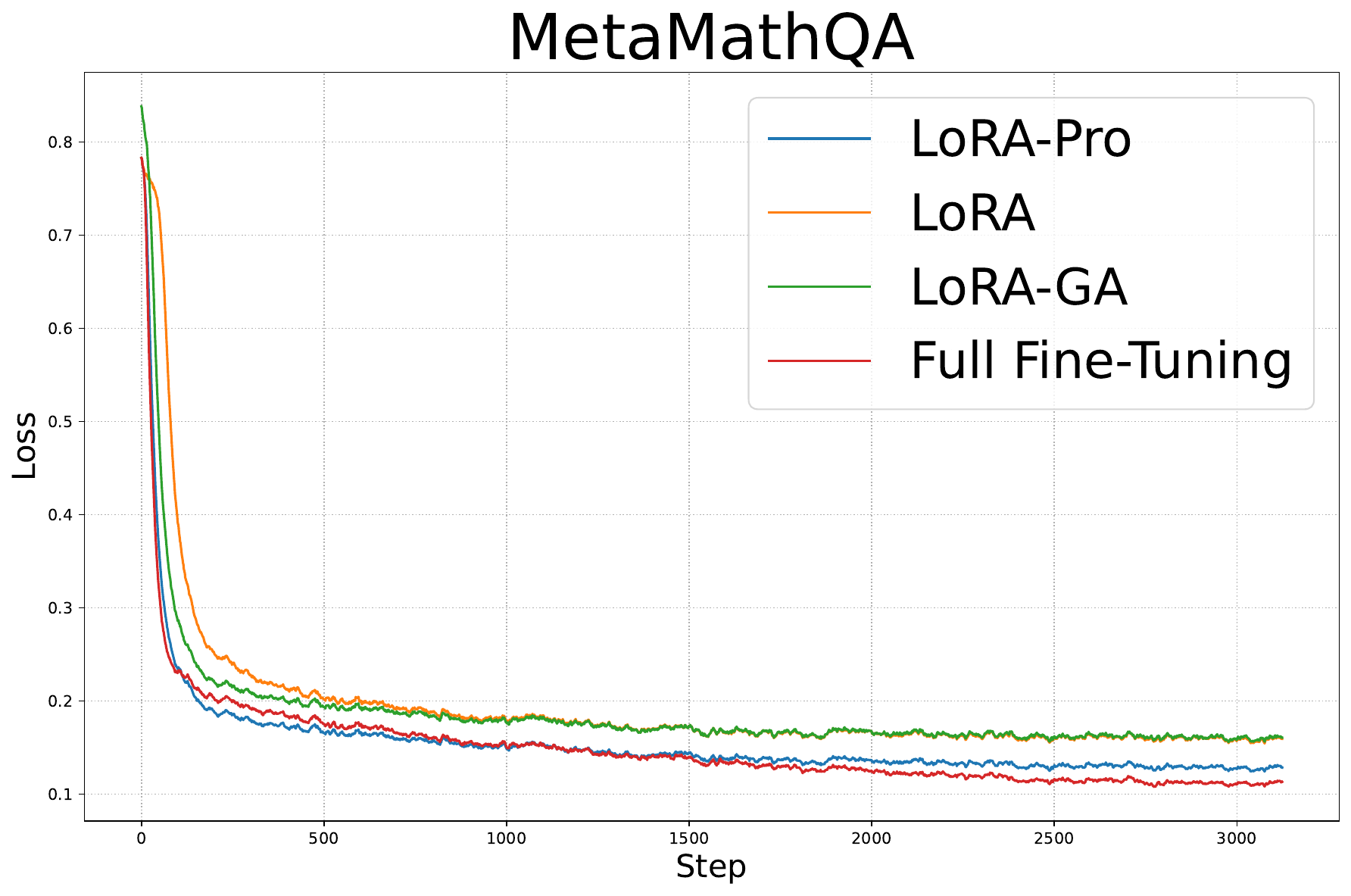}
    \end{minipage}
    \begin{minipage}{0.325\textwidth}
        \centering
        \includegraphics[width=1.0\textwidth]{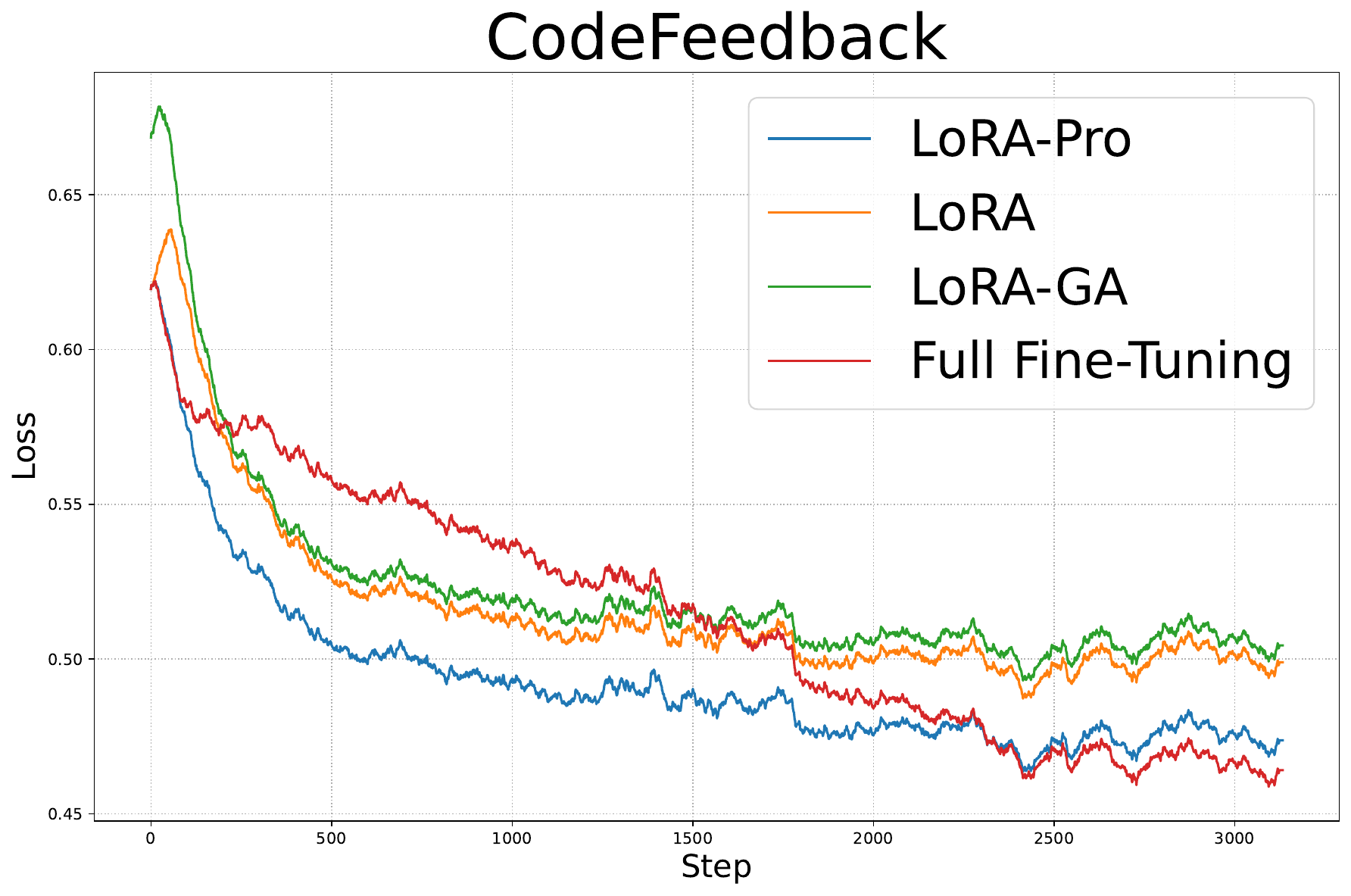}
    \end{minipage}
    \caption{Training loss curves of LoRA, LoRA-GA, LoRA-Pro, and Full Fine-tuning on WizardLM, MetaMathQA, and CodeFeedback.}
    \label{fig:loss_curve}
\end{figure}
\textbf{Training curves of LoRA-Pro.}
In this part, we present the training loss curves for LoRA, LoRA-GA, LoRA-Pro, and Full Fine-tuning across WizardLM, MetaMathQA, and CodeFeedback datasets. 
As illustrated in Figure~\ref{fig:loss_curve}, LoRA-Pro demonstrates a faster convergence speed compared to LoRA and LoRA-GA. 
Furthermore, LoRA-Pro achieves a lower final loss value upon convergence, indicating its improved efficiency and effectiveness.

\section{Related Work}
\noindent
\textbf{Parameter-Efficient Fine-Tuning.}
Given the huge size of foundation models, recent research has focused on developing parameter-efficient fine-tuning methods~\citep{hu2022lora, liu2024dora, ding2023parameter, houlsby2019parameter, liu2023pre, lester2021power, wang2024baseline}. 
These methods aim to reduce the cost of fine-tuning by adjusting only a small portion of the model's parameters.
Generally, these methods fall into two main categories.
The first category is adapter tuning~\citep{houlsby2019parameter, sung2022vl,he2021effectiveness, zhang2024llama, bapna2019simple, hu2022lora}, which involves inserting small neural network modules, called adapters, into specific layers of the model.
During fine-tuning, we keep the model frozen and only fine-tune the lightweight adapter modules, significantly reducing the memory footprint for fine-tuning.
The second category is prompt tuning~\citep{lester2021power, zhou2022learning, li2021prefix, liu2022p, wang2023improving, wang2024craft, liang2024realistic}. 
Prompt tuning adapts the models to specific tasks by adding specially designed prompts or learnable tokens to the input data, rather than directly modifying the internal parameters of foundation models.
In this paper, we focus on LoRA~\citep{hu2022lora}, a prominent method within the realm of adapter tuning.

\noindent
\textbf{Low-Rank Adaptation.}
Low-rank adaptation, initially referred to as LoRA~\citep{hu2022lora}, has evolved into a broad category encompassing parameter-efficient fine-tuning methods based on low-rank approximations~\citep{hu2022lora, liu2024dora, hayou2024lora+, kalajdzievski2023rank, zhang2023adaptive, kopiczko2024vera, hyeon2022fedpara, zhang2024riemannian, wang2024loraga, zhao2024galore, wang2024loraga}.
LoRA~\citep{hu2022lora} assumes that the changes in the weights of pre-trained models exhibit a low-rank structure. 
Consequently, it re-parameterizes these changes as the product of low-rank matrices, thereby reducing the cost during fine-tuning.

Several variants of LoRA have been proposed to address different aspects of this approach. 
For example, DoRA~\citep{liu2024dora} improves LoRA~\citep{hu2022lora} by incorporating a learnable magnitude vector to re-scale the normalized product of low-rank matrices.
Another variant, rsLoRA~\citep{kalajdzievski2023rank}, introduces a new scaling factor to stabilize training in high-rank scenarios.
LoRA+~\citep{hayou2024lora+} improves upon LoRA by applying different learning rates to the two low-rank matrices.
Additionally, Galore~\citep{zhao2024galore} employs SVD to project the gradients and its first and second momentum of full training into a low-rank space, thereby reducing the memory footprint during pre-training and fine-tuning.

\section{Conclusion}
In this paper, we introduce LoRA-Pro, a novel approach designed to bridge the performance gap between LoRA and full fine-tuning.
We have discovered that using LoRA for fine-tuning is equivalent to fine-tuning the original weights with a virtual equivalent low-rank gradient.
Based on this insight, we propose adjusting the gradients of matrices $A$ and $B$ to make the equivalent gradient match the true full fine-tuning gradient, thereby reducing their performance gap.
Fortunately, we theoretically prove that there exists an optimal closed-form solution for updating matrices $A$ and $B$, which are applied during fine-tuning in LoRA-Pro.
To validate the effectiveness of our method, we conduct extensive experiments across various domains, including natural language understanding, dialogue generation, mathematical reasoning, code generation, and image classification tasks.
The results demonstrate that LoRA-Pro significantly improves LoRA performance and narrows the performance gap with full fine-tuning.

\noindent
\textbf{Limitations}.
LoRA-Pro still have some limitations: 
(1) LoRA-Pro still adheres to LoRA's assumption that $\Delta W$ is of low rank. 
However, this assumption may break down in cases of pre-training or when there is a large amount of fine-tuning data, potentially leading to suboptimal results. 
(2) So far, we have only applied LoRA-Pro to variants that have a structure similar to LoRA. 
It currently cannot be applied to structurally different LoRA models, such as DoRA~\citep{liu2024dora} or FLoRA~\citep{wen2024batched}.
We plan to explore these directions in future research.

\section*{Acknowledgement}
This work was funded by the National Natural Science Foundation of China under Grants (62276256, U2441251) and the Young Elite Scientists Sponsorship Program by CAST (2023QNRC001).
We thank Jie Cheng and Yongcan Yu for providing computational resources critical to this work.

\bibliography{iclr2025_conference}

\begin{thebibliography}{55}
\providecommand{\natexlab}[1]{#1}
\providecommand{\url}[1]{\texttt{#1}}
\expandafter\ifx\csname urlstyle\endcsname\relax
  \providecommand{\doi}[1]{doi: #1}\else
  \providecommand{\doi}{doi: \begingroup \urlstyle{rm}\Url}\fi

\bibitem[Achiam et~al.(2023)Achiam, Adler, Agarwal, Ahmad, Akkaya, Aleman, Almeida, Altenschmidt, Altman, Anadkat, et~al.]{achiam2023gpt}
Josh Achiam, Steven Adler, Sandhini Agarwal, Lama Ahmad, Ilge Akkaya, Florencia~Leoni Aleman, Diogo Almeida, Janko Altenschmidt, Sam Altman, Shyamal Anadkat, et~al.
\newblock Gpt-4 technical report.
\newblock \emph{arXiv preprint arXiv:2303.08774}, 2023.

\bibitem[Aghajanyan et~al.(2021)Aghajanyan, Gupta, and Zettlemoyer]{aghajanyan2021intrinsic}
Armen Aghajanyan, Sonal Gupta, and Luke Zettlemoyer.
\newblock Intrinsic dimensionality explains the effectiveness of language model fine-tuning.
\newblock In \emph{ACL-IJCNLP}, 2021.

\bibitem[Bapna \& Firat(2019)Bapna and Firat]{bapna2019simple}
Ankur Bapna and Orhan Firat.
\newblock Simple, scalable adaptation for neural machine translation.
\newblock In \emph{EMNLP-IJCNLP}, 2019.

\bibitem[Brown et~al.(2020)Brown, Mann, Ryder, Subbiah, Kaplan, Dhariwal, Neelakantan, Shyam, Sastry, Askell, et~al.]{brown2020language}
Tom Brown, Benjamin Mann, Nick Ryder, Melanie Subbiah, Jared~D Kaplan, Prafulla Dhariwal, Arvind Neelakantan, Pranav Shyam, Girish Sastry, Amanda Askell, et~al.
\newblock Language models are few-shot learners.
\newblock In \emph{NeurIPS}, 2020.

\bibitem[Chen et~al.(2021)Chen, Tworek, Jun, Yuan, Pinto, Kaplan, Edwards, Burda, Joseph, Brockman, et~al.]{chen2021evaluating}
Mark Chen, Jerry Tworek, Heewoo Jun, Qiming Yuan, Henrique Ponde De~Oliveira Pinto, Jared Kaplan, Harri Edwards, Yuri Burda, Nicholas Joseph, Greg Brockman, et~al.
\newblock Evaluating large language models trained on code.
\newblock \emph{arXiv preprint arXiv:2107.03374}, 2021.

\bibitem[Cheng et~al.(2017)Cheng, Han, and Lu]{cheng2017remote}
Gong Cheng, Junwei Han, and Xiaoqiang Lu.
\newblock Remote sensing image scene classification: Benchmark and state of the art.
\newblock \emph{Proceedings of the IEEE}, 105\penalty0 (10):\penalty0 1865--1883, 2017.

\bibitem[Cimpoi et~al.(2014)Cimpoi, Maji, Kokkinos, Mohamed, and Vedaldi]{cimpoi2014describing}
Mircea Cimpoi, Subhransu Maji, Iasonas Kokkinos, Sammy Mohamed, and Andrea Vedaldi.
\newblock Describing textures in the wild.
\newblock In \emph{CVPR}, 2014.

\bibitem[Cobbe et~al.(2021)Cobbe, Kosaraju, Bavarian, Chen, Jun, Kaiser, Plappert, Tworek, Hilton, Nakano, et~al.]{cobbe2021training}
Karl Cobbe, Vineet Kosaraju, Mohammad Bavarian, Mark Chen, Heewoo Jun, Lukasz Kaiser, Matthias Plappert, Jerry Tworek, Jacob Hilton, Reiichiro Nakano, et~al.
\newblock Training verifiers to solve math word problems.
\newblock \emph{arXiv preprint arXiv:2110.14168}, 2021.

\bibitem[Ding et~al.(2023)Ding, Qin, Yang, Wei, Yang, Su, Hu, Chen, Chan, Chen, et~al.]{ding2023parameter}
Ning Ding, Yujia Qin, Guang Yang, Fuchao Wei, Zonghan Yang, Yusheng Su, Shengding Hu, Yulin Chen, Chi-Min Chan, Weize Chen, et~al.
\newblock Parameter-efficient fine-tuning of large-scale pre-trained language models.
\newblock \emph{Nature Machine Intelligence}, 5\penalty0 (3):\penalty0 220--235, 2023.

\bibitem[Dubey et~al.(2024)Dubey, Jauhri, Pandey, Kadian, Al-Dahle, Letman, Mathur, Schelten, Yang, Fan, et~al.]{dubey2024llama}
Abhimanyu Dubey, Abhinav Jauhri, Abhinav Pandey, Abhishek Kadian, Ahmad Al-Dahle, Aiesha Letman, Akhil Mathur, Alan Schelten, Amy Yang, Angela Fan, et~al.
\newblock The llama 3 herd of models.
\newblock \emph{arXiv preprint arXiv:2407.21783}, 2024.

\bibitem[Hayou et~al.(2024)Hayou, Ghosh, and Yu]{hayou2024lora+}
Soufiane Hayou, Nikhil Ghosh, and Bin Yu.
\newblock Lora+: Efficient low rank adaptation of large models.
\newblock \emph{arXiv preprint arXiv:2402.12354}, 2024.

\bibitem[He et~al.(2021)He, Liu, Ye, Tan, Ding, Cheng, Low, Bing, and Si]{he2021effectiveness}
Ruidan He, Linlin Liu, Hai Ye, Qingyu Tan, Bosheng Ding, Liying Cheng, Jiawei Low, Lidong Bing, and Luo Si.
\newblock On the effectiveness of adapter-based tuning for pretrained language model adaptation.
\newblock In \emph{ACL-IJCNLP}, 2021.

\bibitem[Helber et~al.(2019)Helber, Bischke, Dengel, and Borth]{helber2019eurosat}
Patrick Helber, Benjamin Bischke, Andreas Dengel, and Damian Borth.
\newblock Eurosat: A novel dataset and deep learning benchmark for land use and land cover classification.
\newblock \emph{IEEE Journal of Selected Topics in Applied Earth Observations and Remote Sensing}, 12\penalty0 (7):\penalty0 2217--2226, 2019.

\bibitem[Hoffmann et~al.(2022)Hoffmann, Borgeaud, Mensch, Buchatskaya, Cai, Rutherford, de~Las~Casas, Hendricks, Welbl, Clark, et~al.]{hoffmann2022training}
Jordan Hoffmann, Sebastian Borgeaud, Arthur Mensch, Elena Buchatskaya, Trevor Cai, Eliza Rutherford, Diego de~Las~Casas, Lisa~Anne Hendricks, Johannes Welbl, Aidan Clark, et~al.
\newblock Training compute-optimal large language models.
\newblock In \emph{NeurIPS}, 2022.

\bibitem[Houben et~al.(2013)Houben, Stallkamp, Salmen, Schlipsing, and Igel]{houben2013detection}
Sebastian Houben, Johannes Stallkamp, Jan Salmen, Marc Schlipsing, and Christian Igel.
\newblock Detection of traffic signs in real-world images: The german traffic sign detection benchmark.
\newblock In \emph{IJCNN}, 2013.

\bibitem[Houlsby et~al.(2019)Houlsby, Giurgiu, Jastrzebski, Morrone, De~Laroussilhe, Gesmundo, Attariyan, and Gelly]{houlsby2019parameter}
Neil Houlsby, Andrei Giurgiu, Stanislaw Jastrzebski, Bruna Morrone, Quentin De~Laroussilhe, Andrea Gesmundo, Mona Attariyan, and Sylvain Gelly.
\newblock Parameter-efficient transfer learning for nlp.
\newblock In \emph{ICML}, 2019.

\bibitem[Hu et~al.(2022)Hu, Wallis, Allen-Zhu, Li, Wang, Wang, Chen, et~al.]{hu2022lora}
Edward~J Hu, Phillip Wallis, Zeyuan Allen-Zhu, Yuanzhi Li, Shean Wang, Lu~Wang, Weizhu Chen, et~al.
\newblock Lora: Low-rank adaptation of large language models.
\newblock In \emph{ICLR}, 2022.

\bibitem[Hyeon-Woo et~al.(2022)Hyeon-Woo, Ye-Bin, and Oh]{hyeon2022fedpara}
Nam Hyeon-Woo, Moon Ye-Bin, and Tae-Hyun Oh.
\newblock Fedpara: Low-rank hadamard product for communication-efficient federated learning.
\newblock In \emph{ICLR}, 2022.

\bibitem[Kalajdzievski(2023)]{kalajdzievski2023rank}
Damjan Kalajdzievski.
\newblock A rank stabilization scaling factor for fine-tuning with lora.
\newblock \emph{arXiv preprint arXiv:2312.03732}, 2023.

\bibitem[Kaplan et~al.(2020)Kaplan, McCandlish, Henighan, Brown, Chess, Child, Gray, Radford, Wu, and Amodei]{kaplan2020scaling}
Jared Kaplan, Sam McCandlish, Tom Henighan, Tom~B Brown, Benjamin Chess, Rewon Child, Scott Gray, Alec Radford, Jeffrey Wu, and Dario Amodei.
\newblock Scaling laws for neural language models.
\newblock \emph{arXiv preprint arXiv:2001.08361}, 2020.

\bibitem[Kirillov et~al.(2023)Kirillov, Mintun, Ravi, Mao, Rolland, Gustafson, Xiao, Whitehead, Berg, Lo, et~al.]{kirillov2023segment}
Alexander Kirillov, Eric Mintun, Nikhila Ravi, Hanzi Mao, Chloe Rolland, Laura Gustafson, Tete Xiao, Spencer Whitehead, Alexander~C Berg, Wan-Yen Lo, et~al.
\newblock Segment anything.
\newblock In \emph{ICCV}, 2023.

\bibitem[Kopiczko et~al.(2024)Kopiczko, Blankevoort, and Asano]{kopiczko2024vera}
Dawid~Jan Kopiczko, Tijmen Blankevoort, and Yuki~M Asano.
\newblock Vera: Vector-based random matrix adaptation.
\newblock In \emph{ICLR}, 2024.

\bibitem[Krause et~al.(2013)Krause, Stark, Deng, and Fei-Fei]{krause20133d}
Jonathan Krause, Michael Stark, Jia Deng, and Li~Fei-Fei.
\newblock 3d object representations for fine-grained categorization.
\newblock In \emph{ICCV Workshop}, 2013.

\bibitem[Lester et~al.(2021)Lester, Al-Rfou, and Constant]{lester2021power}
Brian Lester, Rami Al-Rfou, and Noah Constant.
\newblock The power of scale for parameter-efficient prompt tuning.
\newblock In \emph{EMNLP}, 2021.

\bibitem[Li et~al.(2018)Li, Farkhoor, Liu, and Yosinski]{li2018measuring}
Chunyuan Li, Heerad Farkhoor, Rosanne Liu, and Jason Yosinski.
\newblock Measuring the intrinsic dimension of objective landscapes.
\newblock In \emph{ICLR}, 2018.

\bibitem[Li \& Liang(2021)Li and Liang]{li2021prefix}
Xiang~Lisa Li and Percy Liang.
\newblock Prefix-tuning: Optimizing continuous prompts for generation.
\newblock In \emph{ACL-IJCNLP}, 2021.

\bibitem[Liang et~al.(2024)Liang, Sheng, Wang, He, and Tan]{liang2024realistic}
Jian Liang, Lijun Sheng, Zhengbo Wang, Ran He, and Tieniu Tan.
\newblock Realistic unsupervised clip fine-tuning with universal entropy optimization.
\newblock In \emph{ICML}, 2024.

\bibitem[Liu et~al.(2023)Liu, Yuan, Fu, Jiang, Hayashi, and Neubig]{liu2023pre}
Pengfei Liu, Weizhe Yuan, Jinlan Fu, Zhengbao Jiang, Hiroaki Hayashi, and Graham Neubig.
\newblock Pre-train, prompt, and predict: A systematic survey of prompting methods in natural language processing.
\newblock \emph{ACM Computing Surveys}, 55\penalty0 (9):\penalty0 1--35, 2023.

\bibitem[Liu et~al.(2024)Liu, Wang, Yin, Molchanov, Wang, Cheng, and Chen]{liu2024dora}
Shih-yang Liu, Chien-Yi Wang, Hongxu Yin, Pavlo Molchanov, Yu-Chiang~Frank Wang, Kwang-Ting Cheng, and Min-Hung Chen.
\newblock Dora: Weight-decomposed low-rank adaptation.
\newblock In \emph{ICML}, 2024.

\bibitem[Liu et~al.(2022)Liu, Ji, Fu, Tam, Du, Yang, and Tang]{liu2022p}
Xiao Liu, Kaixuan Ji, Yicheng Fu, Weng Tam, Zhengxiao Du, Zhilin Yang, and Jie Tang.
\newblock P-tuning: Prompt tuning can be comparable to fine-tuning across scales and tasks.
\newblock In \emph{ACL}, 2022.

\bibitem[Loshchilov \& Hutter(2019)Loshchilov and Hutter]{loshchilov2019decoupled}
Ilya Loshchilov and Frank Hutter.
\newblock Decoupled weight decay regularization.
\newblock In \emph{ICLR}, 2019.

\bibitem[Meng et~al.(2024)Meng, Wang, and Zhang]{meng2024pissa}
Fanxu Meng, Zhaohui Wang, and Muhan Zhang.
\newblock Pissa: Principal singular values and singular vectors adaptation of large language models.
\newblock \emph{arXiv preprint arXiv:2404.02948}, 2024.

\bibitem[Netzer et~al.(2011)Netzer, Wang, Coates, Bissacco, Wu, Ng, et~al.]{netzer2011reading}
Yuval Netzer, Tao Wang, Adam Coates, Alessandro Bissacco, Baolin Wu, Andrew~Y Ng, et~al.
\newblock Reading digits in natural images with unsupervised feature learning.
\newblock In \emph{NeurIPS workshop}, 2011.

\bibitem[Radford et~al.(2021)Radford, Kim, Hallacy, Ramesh, Goh, Agarwal, Sastry, Askell, Mishkin, Clark, et~al.]{radford2021learning}
Alec Radford, Jong~Wook Kim, Chris Hallacy, Aditya Ramesh, Gabriel Goh, Sandhini Agarwal, Girish Sastry, Amanda Askell, Pamela Mishkin, Jack Clark, et~al.
\newblock Learning transferable visual models from natural language supervision.
\newblock In \emph{ICML}, 2021.

\bibitem[Raffel et~al.(2020)Raffel, Shazeer, Roberts, Lee, Narang, Matena, Zhou, Li, and Liu]{raffel2020exploring}
Colin Raffel, Noam Shazeer, Adam Roberts, Katherine Lee, Sharan Narang, Michael Matena, Yanqi Zhou, Wei Li, and Peter~J Liu.
\newblock Exploring the limits of transfer learning with a unified text-to-text transformer.
\newblock \emph{Journal of machine learning research}, 21\penalty0 (140):\penalty0 1--67, 2020.

\bibitem[Rasley et~al.(2020)Rasley, Rajbhandari, Ruwase, and He]{rasley2020deepspeed}
Jeff Rasley, Samyam Rajbhandari, Olatunji Ruwase, and Yuxiong He.
\newblock Deepspeed: System optimizations enable training deep learning models with over 100 billion parameters.
\newblock In \emph{SIGKDD}, pp.\  3505--3506, 2020.

\bibitem[Rombach et~al.(2022)Rombach, Blattmann, Lorenz, Esser, and Ommer]{rombach2022high}
Robin Rombach, Andreas Blattmann, Dominik Lorenz, Patrick Esser, and Bj{\"o}rn Ommer.
\newblock High-resolution image synthesis with latent diffusion models.
\newblock In \emph{CVPR}, 2022.

\bibitem[Sung et~al.(2022)Sung, Cho, and Bansal]{sung2022vl}
Yi-Lin Sung, Jaemin Cho, and Mohit Bansal.
\newblock Vl-adapter: Parameter-efficient transfer learning for vision-and-language tasks.
\newblock In \emph{CVPR}, 2022.

\bibitem[Sutskever et~al.(2013)Sutskever, Martens, Dahl, and Hinton]{sutskever2013importance}
Ilya Sutskever, James Martens, George Dahl, and Geoffrey Hinton.
\newblock On the importance of initialization and momentum in deep learning.
\newblock In \emph{ICML}, 2013.

\bibitem[Touvron et~al.(2023)Touvron, Martin, Stone, Albert, Almahairi, Babaei, Bashlykov, Batra, Bhargava, Bhosale, et~al.]{touvron2023llama}
Hugo Touvron, Louis Martin, Kevin Stone, Peter Albert, Amjad Almahairi, Yasmine Babaei, Nikolay Bashlykov, Soumya Batra, Prajjwal Bhargava, Shruti Bhosale, et~al.
\newblock Llama 2: Open foundation and fine-tuned chat models.
\newblock \emph{arXiv preprint arXiv:2307.09288}, 2023.

\bibitem[Wang et~al.(2024{\natexlab{a}})Wang, Yu, and Li]{wang2024loraga}
Shaowen Wang, Linxi Yu, and Jian Li.
\newblock Lora-ga: Low-rank adaptation with gradient approximation.
\newblock In \emph{NeurIPS}, 2024{\natexlab{a}}.

\bibitem[Wang et~al.(2023)Wang, Liang, He, Xu, Wang, and Tan]{wang2023improving}
Zhengbo Wang, Jian Liang, Ran He, Nan Xu, Zilei Wang, and Tieniu Tan.
\newblock Improving zero-shot generalization for clip with synthesized prompts.
\newblock In \emph{ICCV}, 2023.

\bibitem[Wang et~al.(2024{\natexlab{b}})Wang, Liang, He, Wang, and Tan]{wang2024craft}
Zhengbo Wang, Jian Liang, Ran He, Zilei Wang, and Tieniu Tan.
\newblock Connecting the dots: Collaborative fine-tuning for black-box vision-language models.
\newblock In \emph{ICML}, 2024{\natexlab{b}}.

\bibitem[Wang et~al.(2024{\natexlab{c}})Wang, Liang, Sheng, He, Wang, and Tan]{wang2024baseline}
Zhengbo Wang, Jian Liang, Lijun Sheng, Ran He, Zilei Wang, and Tieniu Tan.
\newblock A hard-to-beat baseline for training-free clip-based adaptation.
\newblock In \emph{ICLR}, 2024{\natexlab{c}}.

\bibitem[Wen \& Chaudhuri(2024)Wen and Chaudhuri]{wen2024batched}
Yeming Wen and Swarat Chaudhuri.
\newblock Batched low-rank adaptation of foundation models.
\newblock In \emph{ICLR}, 2024.

\bibitem[Xiao et~al.(2010)Xiao, Hays, Ehinger, Oliva, and Torralba]{xiao2010sun}
Jianxiong Xiao, James Hays, Krista~A Ehinger, Aude Oliva, and Antonio Torralba.
\newblock Sun database: Large-scale scene recognition from abbey to zoo.
\newblock In \emph{CVPR}, 2010.

\bibitem[Xu et~al.(2024)Xu, Sun, Zheng, Geng, Zhao, Feng, Tao, Lin, and Jiang]{xu2024wizardlm}
Can Xu, Qingfeng Sun, Kai Zheng, Xiubo Geng, Pu~Zhao, Jiazhan Feng, Chongyang Tao, Qingwei Lin, and Daxin Jiang.
\newblock Wizardlm: Empowering large pre-trained language models to follow complex instructions.
\newblock In \emph{ICLR}, 2024.

\bibitem[Yu et~al.(2024)Yu, Jiang, Shi, Jincheng, Liu, Zhang, Kwok, Li, Weller, and Liu]{yu2024metamath}
Longhui Yu, Weisen Jiang, Han Shi, YU~Jincheng, Zhengying Liu, Yu~Zhang, James Kwok, Zhenguo Li, Adrian Weller, and Weiyang Liu.
\newblock Metamath: Bootstrap your own mathematical questions for large language models.
\newblock In \emph{ICLR}, 2024.

\bibitem[Zhang \& Pilanci(2024)Zhang and Pilanci]{zhang2024riemannian}
Fangzhao Zhang and Mert Pilanci.
\newblock Riemannian preconditioned lora for fine-tuning foundation models.
\newblock In \emph{ICML}, 2024.

\bibitem[Zhang et~al.(2023)Zhang, Chen, Bukharin, He, Cheng, Chen, and Zhao]{zhang2023adaptive}
Qingru Zhang, Minshuo Chen, Alexander Bukharin, Pengcheng He, Yu~Cheng, Weizhu Chen, and Tuo Zhao.
\newblock Adaptive budget allocation for parameter-efficient fine-tuning.
\newblock In \emph{ICLR}, 2023.

\bibitem[Zhang et~al.(2024)Zhang, Han, Liu, Zhou, Lu, Qiao, Li, and Gao]{zhang2024llama}
Renrui Zhang, Jiaming Han, Chris Liu, Aojun Zhou, Pan Lu, Yu~Qiao, Hongsheng Li, and Peng Gao.
\newblock Llama-adapter: Efficient fine-tuning of large language models with zero-initialized attention.
\newblock In \emph{ICLR}, 2024.

\bibitem[Zhao et~al.(2024)Zhao, Zhang, Chen, Wang, Anandkumar, and Tian]{zhao2024galore}
Jiawei Zhao, Zhenyu Zhang, Beidi Chen, Zhangyang Wang, Anima Anandkumar, and Yuandong Tian.
\newblock Galore: Memory-efficient llm training by gradient low-rank projection.
\newblock In \emph{ICML}, 2024.

\bibitem[Zheng et~al.(2024{\natexlab{a}})Zheng, Chiang, Sheng, Zhuang, Wu, Zhuang, Lin, Li, Li, Xing, et~al.]{zheng2024judging}
Lianmin Zheng, Wei-Lin Chiang, Ying Sheng, Siyuan Zhuang, Zhanghao Wu, Yonghao Zhuang, Zi~Lin, Zhuohan Li, Dacheng Li, Eric Xing, et~al.
\newblock Judging llm-as-a-judge with mt-bench and chatbot arena.
\newblock In \emph{NeurIPS}, 2024{\natexlab{a}}.

\bibitem[Zheng et~al.(2024{\natexlab{b}})Zheng, Zhang, Shen, Liu, Lin, Fu, Chen, and Yue]{zheng2024opencodeinterpreter}
Tianyu Zheng, Ge~Zhang, Tianhao Shen, Xueling Liu, Bill~Yuchen Lin, Jie Fu, Wenhu Chen, and Xiang Yue.
\newblock {O}pen{C}ode{I}nterpreter: Integrating code generation with execution and refinement.
\newblock In \emph{Findings of ACL}, 2024{\natexlab{b}}.

\bibitem[Zhou et~al.(2022)Zhou, Yang, Loy, and Liu]{zhou2022learning}
Kaiyang Zhou, Jingkang Yang, Chen~Change Loy, and Ziwei Liu.
\newblock Learning to prompt for vision-language models.
\newblock \emph{International Journal of Computer Vision}, 130\penalty0 (9):\penalty0 2337--2348, 2022.

\end{thebibliography}
\bibliographystyle{iclr2025_conference}

\appendix
\newpage

\begin{center}
{\LARGE 
\textbf{LoRA-Pro: Are Low-Rank Adapters Properly Optimized?\\ $\;$ 
\\ ————Appendix————}
}
\end{center}

The structure of Appendix is as follows,
\begin{itemize}
    \item Appendix~\ref{sec:appendix_notation} contains the notation usage in our paper.
    \item Appendix~\ref{sec:appendix_proof} contains the proofs of the theorems in the main manuscript.
    \item Appendix~\ref{sec:appendix_algorithm} details the optimization algorithms of the proposed method.
    \item Appendix~\ref{sec:appendix_exp} presents additional experimental results.
\end{itemize}

\section{Notation}
\label{sec:appendix_notation}
In Table~\ref{tab:notation}, we detail the notations utilized in our paper.

\begin{table}[htbp]
  \centering
  \caption{Description of notations used in the paper.}
    \begin{tabular}{l|c}
    \toprule
    \multicolumn{1}{c|}{\textbf{Notation}} & \textbf{Description} \\
    \midrule
    $s$ & scaling factor in LoRA \\
    $B\in\mathbb{R}^{m\times r}$, $A\in\mathbb{R}^{r\times n}$ & low rank matrices in LoRA  \\ 
    $g = \frac{\partial L}{\partial W}\in\mathbb{R}^{m\times n}$ & gradients of full fine-tuning \\
    $g^A_{lora} = \frac{\partial L}{\partial A} = sB^Tg\in\mathbb{R}^{r\times n}$      & gradients of matrix A in LoRA \\
    $g^B_{lora} = \frac{\partial L}{\partial B} = sgA^T\in\mathbb{R}^{m\times r}$      & gradients of matrix B in LoRA \\
    $\dd L$      & differential of the loss function \\
    $\dd A$      & differential of the matrix A \\
    $\dd B$      & differential of the matrix B \\
    $\|\cdot\|_F$      & Frobenius Norm \\
    $\langle\cdot, \cdot \rangle_F$      & Frobenius inner product \\
    \bottomrule
    \end{tabular}%
  \label{tab:notation}%
\end{table}%

\section{Proof of Theoretical Results}
\label{sec:appendix_proof}
\subsection{Proof that the Equivalent Gradient is Low-Rank}
\label{subsec:proof_low_rank}
\begin{tcolorbox}[colback=gray!20,colframe=gray]
\begin{lemma*}
Assume $B\in\mathbb{R}^{m\times r}, A\in\mathbb{R}^{r\times n}$ and $g^B\in\mathbb{R}^{m\times r}, g^A\in\mathbb{R}^{r\times n}$ represent matrices and their corresponding gradients in LoRA optimization.
We demonstrate that the equivalent gradient:
\begin{equation}
    \tilde{g} = sg^BA + sBg^A,
\end{equation}
where $s>0$ is the scaling factor, has matrix rank at most $2r$.
\end{lemma*}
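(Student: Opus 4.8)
The plan is to bound the rank of $\tilde{g} = sg^BA + sBg^A$ by bounding the ranks of the two summands separately and then using subadditivity of rank. First I would recall the two standard facts: (i) for any conformable matrices $M$ and $N$, $\mathrm{rank}(MN) \le \min(\mathrm{rank}(M), \mathrm{rank}(N))$; and (ii) $\mathrm{rank}(M + N) \le \mathrm{rank}(M) + \mathrm{rank}(N)$.

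Next I would apply fact (i) to each term. The matrix $Bg^A$ is a product of $B \in \mathbb{R}^{m \times r}$ with $g^A \in \mathbb{R}^{r \times n}$, so it factors through $\mathbb{R}^r$ and hence $\mathrm{rank}(Bg^A) \le r$. Similarly $g^B A$ is a product of $g^B \in \mathbb{R}^{m \times r}$ with $A \in \mathbb{R}^{r \times n}$, so $\mathrm{rank}(g^B A) \le r$. Scaling by $s > 0$ does not change rank, so $\mathrm{rank}(s B g^A) \le r$ and $\mathrm{rank}(s g^B A) \le r$.

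Finally I would combine these via fact (ii): $\mathrm{rank}(\tilde{g}) = \mathrm{rank}(s g^B A + s B g^A) \le \mathrm{rank}(s g^B A) + \mathrm{rank}(s B g^A) \le r + r = 2r$, which is the claim.

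There is no real obstacle here; the only thing to be slightly careful about is that the bound is stated as "at most $2r$" rather than exactly $2r$, so no lower bound or genericity argument is needed — the two one-line inequalities above suffice. One could optionally remark that the bound is tight in general (e.g.\ when the column spaces of $B g^A$ and $g^B A$ intersect trivially), but this is not required by the statement.
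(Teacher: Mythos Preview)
Your proposal is correct and follows essentially the same approach as the paper: subadditivity of rank for the sum, then the product bound $\mathrm{rank}(MN)\le \min(\mathrm{rank}(M),\mathrm{rank}(N))$ to show each summand has rank at most $r$. The only cosmetic difference is that the paper does not bother noting that scaling by $s$ preserves rank, but your argument is otherwise identical.
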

\end{tcolorbox}
\begin{proof}
Since matrix rank satisfies the property of subadditivity, we have:
\begin{equation}
    rank(\tilde{g}) = rank(sg^BA + sBg^A) \le rank(g^BA) + rank(Bg^A).
\end{equation}
Furthermore, for any matrices $A$ and $B$, $rank(AB)\le\min(rank(A),rank(B))$.
Therefore, we can bound the ranks as follows:
\begin{align}
    rank(g^BA) & \le\min(rank(g^B), rank(A)) \le r \\
    rank(Bg^A) & \le\min(rank(B), rank(g^A)) \le r
\end{align}
Thus, in conclusion, the equivalent gradient has a rank of at most $2r$:
\begin{align}
    rank(\tilde{g})
    & \le rank(g^BA) + rank(Bg^A) \\
    & \le \min(rank(g^B), rank(A)) + \min(rank(B), rank(g^A)) \\
    & \le 2r.
\end{align}
\end{proof}

\subsection{Proof of Theorem~\ref{thm:close_form}}
\label{subsec:proof_close_form}
\begin{tcolorbox}[colback=gray!20,colframe=gray]
\begin{theorem*}
Assume matrices $B\in\mathbb{R}^{m\times r}, A\in\mathbb{R}^{r\times n}$ are both full rank. 
For the objective $\min_{g^A, g^B} \|\tilde{g} - g\|^2_F$, the solutions are given by:
\begin{align}
    g^A &= \frac{1}{s}(B^TB)^{-1}B^Tg + XA = \frac{1}{s^2}(B^TB)^{-1}g^A_{lora} + XA \\ 
    g^B &= \frac{1}{s}[I - B(B^TB)^{-1}B^T]gA^T(AA^T)^{-1} - BX \\
        &= \frac{1}{s^2}[I - B(B^TB)^{-1}B^T]g^B_{lora}(AA^T)^{-1} - BX. 
\end{align}
Here, $X\in\mathbb{R}^{r\times r}$ represents an arbitrary matrix.
\end{theorem*}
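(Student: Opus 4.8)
The objective $\Phi(g^A,g^B)=\|sBg^A+sg^BA-g\|_F^2$ is a convex quadratic, and since the map $(g^A,g^B)\mapsto sBg^A+sg^BA$ is linear its image is a linear subspace of $\mathbb{R}^{m\times n}$, so the minimum is attained and the minimizers are exactly the stationary points; geometrically, the optimal $\tilde g$ is the orthogonal projection of $g$ onto that subspace. The plan is therefore: (i) write down the first-order conditions, (ii) solve them in closed form using the full-rank hypotheses, and (iii) show that the residual degree of freedom is precisely the arbitrary matrix $X$.

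For step (i), differentiating $\Phi$ in the Frobenius inner product with respect to $g^A$ and $g^B$ and setting the derivatives to zero gives the normal equations
\begin{align*}
    B^{T}\bigl(sBg^A+sg^BA-g\bigr) &= 0, \\
    \bigl(sBg^A+sg^BA-g\bigr)A^{T} &= 0 .
\end{align*}
For step (ii), the full-rank assumptions make $B^TB\in\mathbb{R}^{r\times r}$ and $AA^T\in\mathbb{R}^{r\times r}$ invertible. Solving the first equation for $g^A$ yields $g^A=\frac{1}{s}(B^TB)^{-1}B^Tg-(B^TB)^{-1}B^Tg^B A$. Substituting this into the second equation, the contribution of $g^B$ inside the cross term collapses because $B^T\bigl(I-B(B^TB)^{-1}B^T\bigr)=0$, and solving for $g^B$ (using invertibility of $AA^T$) leaves $g^B=\frac{1}{s}\bigl(I-B(B^TB)^{-1}B^T\bigr)gA^T(AA^T)^{-1}+B(B^TB)^{-1}B^Tg^B$. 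Setting $X:=-(B^TB)^{-1}B^Tg^B\in\mathbb{R}^{r\times r}$ turns the last two displays into the claimed formulas; this choice of $X$ is self-consistent, since applying $-(B^TB)^{-1}B^T$ to the resulting expression for $g^B$ returns $X$ (again by $B^T\bigl(I-B(B^TB)^{-1}B^T\bigr)=0$), so every $X$ produces a valid stationary point. The two equivalent forms follow by substituting $g^A_{lora}=sB^Tg$ and $g^B_{lora}=sgA^T$.

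The step I expect to demand the most care is (iii): confirming that $X$ sweeps out the \emph{entire} minimizer set rather than a sub-family. This reduces to identifying the kernel of $(g^A,g^B)\mapsto sBg^A+sg^BA$ as exactly $\{(XA,-BX):X\in\mathbb{R}^{r\times r}\}$; indeed, if $Bg^A=-g^BA$, this common matrix has column space contained in $\mathrm{col}(B)$ and row space contained in $\mathrm{row}(A)$, hence equals $BXA$ for some $X\in\mathbb{R}^{r\times r}$, and then full column rank of $B$ and full row rank of $A$ force $g^A=XA$ and $g^B=-BX$. Adding this kernel to the particular solution found in step (ii) recovers the full affine family in the statement. (The loss-decrease constraint of the original problem is not used here; it is checked separately for these solutions in Theorem~\ref{thm:ge0}.)
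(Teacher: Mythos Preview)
Your proposal is correct and follows essentially the same route as the paper: derive the two first-order (normal) equations, eliminate one variable, and read off the affine family of solutions. The only cosmetic difference is the order of elimination---the paper solves the $A^T$-equation for $g^B$ first and lands on $g^A\bigl[I-A^T(AA^T)^{-1}A\bigr]=\frac{1}{s}(B^TB)^{-1}B^Tg\bigl[I-A^T(AA^T)^{-1}A\bigr]$, whereas you solve the $B^T$-equation for $g^A$ first---and your step~(iii) makes the kernel characterization (hence the exhaustiveness of the $X$-family) more explicit than the paper does.
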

\end{tcolorbox}
\begin{proof}
For simplicity, we denote $L = \|sBg^A + sg^BA - g\|_F^2$.
To solve the optimization problem, we need to satisfy the following conditions:
\begin{align}
    \frac{\partial L}{\partial g^A} &= 2sB^T(sBg^A + sg^BA - g) = 0 \label{eq:partial_a} \\
    \frac{\partial L}{\partial g^B} &= 2(sBg^A + sg^BA - g)sA^T = 0 \label{eq:partial_b}
\end{align}
Given that matrices $A$ and $B$ are full-rank, $AA^T$ and $B^TB$ are invertible.
And from Equation~(\ref{eq:partial_b}), we derive:
\begin{equation}
    \label{eq:temp}
    g^B = \frac{1}{s}gA^T(AA^T)^{-1} - Bg^AA^T(AA^T)^{-1}.
\end{equation}
Substituting this into Equation~(\ref{eq:partial_a}), we obtain the following linear equation:
\begin{equation}
    \label{eq:linear_equation}
    g^A[I - A^T(AA^T)^{-1}A]=\frac{1}{s}(B^TB)^{-1}B^Tg[I - A^T(AA^T)^{-1}A] .
\end{equation}
Here, we notice that the matrix $P = I - A^T(AA^T)^{-1}A$ is a projection matrix with rank $n - r$.
The solution to the linear equation~(\ref{eq:linear_equation}) is:
\begin{equation}
    \label{sol:a_temp}
    g^A = \frac{1}{s}(B^TB)^{-1}B^Tg + XA,
\end{equation}
where $X\in\mathbb{R}^{r\times r}$ represents an arbitrary matrix.
We take the solution~(\ref{sol:a_temp}) into Equation~(\ref{eq:temp}), we derive:
\begin{equation}
    g^B = \frac{1}{s}[I - B(B^TB)^{-1}B^T]gA^T(AA^T)^{-1} - BX
\end{equation}
While we have obtained closed-form solutions for $g^A$ and $g^B$, these solutions explicitly depend on the gradient of the matrix $W$, i.e., $g$, which is undesirable since $g$ is unknown during LoRA optimization.
Fortunately, the solutions can be transformed into the forms of the gradients of standard LoRA, where the gradients are:
\begin{align}
    g^A_{lora} = sB^Tg, \quad g^B_{lora} = sgA^T.
\end{align}
Therefore, the solutions to the optimization problem can be written as:
\begin{align}
    g^A & = \frac{1}{s^2}(B^TB)^{-1}g^A_{lora} + XA, \label{sol:a}\\
    g^B & = \frac{1}{s^2}[I - B(B^TB)^{-1}B^T]g^B_{lora}(AA^T)^{-1} - BX. \label{sol:b}
\end{align}
In our method, we perform the standard forward and backward passes of LoRA, then adjust the gradients of A and B using Solutions~(\ref{sol:a}) and~(\ref{sol:b}), and subsequently update them.
\end{proof}

\subsection{Proof of Theorem~\ref{thm:ge0}}
\label{subsec:proof_ge0}
\begin{tcolorbox}[colback=gray!20,colframe=gray]
\begin{theorem*}
When updating matrices $A$ and $B$ using the closed-form solution from Theorem~\ref{thm:close_form}, we proceed as follows:
\begin{gather}
    A\leftarrow A - \gamma g^A, \label{eq:update_a} \\
    B\leftarrow B - \gamma g^B, \label{eq:update_b}
\end{gather}
where $\gamma \ge 0$ denotes the learning rate.
Our method ensures a decrease in the loss, akin to the standard gradient descent algorithm, expressed by:
\begin{equation}
    \dd L = -\gamma \{\langle g^A_{lora}, \frac{1}{s^2}(B^TB)^{-1}g^A_{lora} \rangle_F + \langle g^B_{lora}, \frac{1}{s^2}[I - B(B^TB)^{-1}B^T]g^B_{lora}(AA^T)^{-1} \rangle_F\} \le 0
\end{equation}
\end{theorem*}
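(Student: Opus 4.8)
The plan is to start from the first-order expansion of the loss under the LoRA parametrization that was already derived in Equation~(\ref{eq:lora}): for arbitrary increments $\dd A,\dd B$ one has $\dd L = \langle g^A_{lora}, \dd A\rangle_F + \langle g^B_{lora}, \dd B\rangle_F$. Substituting the gradient-descent step $\dd A = -\gamma g^A$, $\dd B = -\gamma g^B$ with $g^A, g^B$ the closed-form solutions of Theorem~\ref{thm:close_form} immediately gives
\[
\dd L = -\gamma\Big(\langle g^A_{lora}, g^A\rangle_F + \langle g^B_{lora}, g^B\rangle_F\Big),
\]
so the whole task reduces to evaluating this pair of Frobenius inner products.

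Next I would plug in $g^A = \tfrac{1}{s^2}(B^TB)^{-1}g^A_{lora} + XA$ and $g^B = \tfrac{1}{s^2}[I - B(B^TB)^{-1}B^T]g^B_{lora}(AA^T)^{-1} - BX$ and separate the $X$-free part from the $X$-dependent part. The key algebraic point — and the step I expect to be the crux — is that the two $X$-contributions cancel. Using $\langle M,N\rangle_F = \Tr(M^TN)$ together with $g^A_{lora} = sB^Tg$ and $g^B_{lora} = sgA^T$, the cross term coming from $g^A$ is $\langle g^A_{lora}, XA\rangle_F = s\,\Tr(g^TBXA)$, while the cross term coming from $g^B$ is $\langle g^B_{lora}, -BX\rangle_F = -s\,\Tr(Ag^TBX)$, and these sum to zero by cyclicity of the trace. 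Hence the $-BX$ in $g^B$ exactly kills the $+XA$ in $g^A$, leaving precisely the stated $X$-independent expression for $\dd L$.

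Finally, to conclude $\dd L \le 0$ I would show each remaining inner product is nonnegative by rewriting it as a squared Frobenius norm. Since $B$ and $A$ are full rank, $B^TB$ and $AA^T$ are symmetric positive definite and admit symmetric positive-definite square roots; writing $(B^TB)^{-1} = (B^TB)^{-1/2}(B^TB)^{-1/2}$ and using cyclicity gives $\langle g^A_{lora}, \tfrac{1}{s^2}(B^TB)^{-1}g^A_{lora}\rangle_F = \tfrac{1}{s^2}\|(B^TB)^{-1/2}g^A_{lora}\|_F^2 \ge 0$. For the second term, $P := I - B(B^TB)^{-1}B^T$ is an orthogonal projector (symmetric and idempotent), so with $C := (AA^T)^{-1}$ and its square root $C^{1/2}$ one gets $\langle g^B_{lora}, \tfrac{1}{s^2}Pg^B_{lora}C\rangle_F = \tfrac{1}{s^2}\Tr\big((g^B_{lora}C^{1/2})^TP^TP(g^B_{lora}C^{1/2})\big) = \tfrac{1}{s^2}\|P g^B_{lora}C^{1/2}\|_F^2 \ge 0$. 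Summing, $\dd L$ is $-\gamma$ times a sum of two squared Frobenius norms, hence $\le 0$, matching both the displayed formula and the ``negative sum of two Frobenius norms'' description preceding the theorem. The only routine items to record along the way are the validity of the trace manipulations (transpose- and cyclic-invariance) and the existence of the symmetric square roots, both of which follow directly from the full-rank hypothesis.
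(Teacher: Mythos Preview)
Your proposal is correct and follows essentially the same two-part structure as the paper: first substitute the closed-form $g^A,g^B$ into $\dd L = -\gamma(\langle g^A_{lora},g^A\rangle_F+\langle g^B_{lora},g^B\rangle_F)$ and show the $X$-terms cancel (the paper does this via $\langle g^A_{lora}A^T - B^Tg^B_{lora},X\rangle_F=0$, you via trace cyclicity---equivalent), then rewrite each remaining term as a squared Frobenius norm using factorizations of the positive (semi)definite operators (the paper uses Cholesky, you use symmetric square roots and $P=P^TP$---again equivalent). No substantive difference.
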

\end{tcolorbox}

\begin{proof}
In summary, the proof of Theorem~\ref{thm:ge0} is divided into two distinct parts.
To begin with, we demonstrate that $\dd L$ can be expressed in the following form:
\begin{equation}
\label{eq:part_1}
    \dd L = -\gamma \{\langle g^A_{lora}, \frac{1}{s^2}(B^TB)^{-1}g^A_{lora} \rangle_F + \langle g^B_{lora}, \frac{1}{s^2}[I - B(B^TB)^{-1}B^T]g^B_{lora}(AA^T)^{-1} \rangle_F\}.
\end{equation}
In the second part, we prove that this expression for $\dd L$ is always less than or equal to zero: $\dd L \le 0$.

\noindent
\textbf{Part I.}
Therefore, in this part, we first prove Equation~(\ref{eq:part_1}).
During the optimization process, the differential change in the loss function, $\dd L$, can be expressed in terms of the differentials $\dd A$ and $\dd B$ as follows:
\begin{equation}
    \dd L = \langle \frac{\partial L}{\partial A}, \dd A \rangle_F + \langle \frac{\partial L}{\partial B}, \dd B \rangle_F.
\end{equation}
From Equation~(\ref{eq:update_a}) and~(\ref{eq:update_b}), we can derive that:
\begin{equation}
    \dd A = -\gamma g^A, \quad \dd B = -\gamma g^B.
\end{equation}
Given that $\frac{\partial L}{\partial A} = g^A_{lora}$ and $\frac{\partial L}{\partial B} = g^B_{lora}$, it follows that:
\begin{equation}
\begin{aligned}
    \dd L 
    & = -\gamma(\langle g^A_{lora}, g^A \rangle_F + \langle g^B_{lora}, g^B \rangle_F) \\
    & = -\gamma(\langle g^A_{lora}, \frac{1}{s^2}(B^TB)^{-1}g^A_{lora} \rangle_F + \langle g^B_{lora}, \frac{1}{s^2}[I - B(B^TB)^{-1}B^T]g^B_{lora}(AA^T)^{-1} \rangle_F \\
    & + \langle g^A_{lora}, XA \rangle_F
    - \langle g^B_{lora}, BX \rangle_F
    ).
\end{aligned}
\end{equation}
And we have the following equation:
\begin{equation}
\begin{aligned}
& \langle g^A_{lora}, XA \rangle_F
    - \langle g^B_{lora}, BX \rangle_F\\
= & \langle g^A_{lora}A^T, X \rangle_F
    - \langle B^Tg^B_{lora}, X \rangle_F \\
= & \langle g^A_{lora}A^T - B^Tg^B_{lora}, X\rangle_F \\
= & \langle (sB^Tg)A^T-B^T(sgA^T), X \rangle_F \\
= & 0.
\end{aligned}
\end{equation}
Therefore, we have:
\begin{equation}
\dd L = -\gamma \{\langle g^A_{lora}, \frac{1}{s^2}(B^TB)^{-1}g^A_{lora} \rangle_F + \langle g^B_{lora}, \frac{1}{s^2}[I - B(B^TB)^{-1}B^T]g^B_{lora}(AA^T)^{-1} \rangle_F\}.
\end{equation}

\noindent
\textbf{Part II.}
In this part, we aim to prove $\dd L \le 0$.
Given that the learning rate $\gamma > 0$,  it suffices to show the following inequalities:
\begin{gather}
    \langle g^A_{lora}, \frac{1}{s^2}(B^TB)^{-1}g^A_{lora} \rangle_F\ge 0, \\
    \langle g^B_{lora}, \frac{1}{s^2}[I - B(B^TB)^{-1}B^T]g^B_{lora}(AA^T)^{-1} \rangle_F\ge 0.
\end{gather}
By proving these inequalities, we can establish that $\dd L \le 0$ as derived from Equation~(\ref{eq:part_1}).

\noindent
\ding{172} Proof of $\langle g^A_{lora}, \frac{1}{s^2}(B^TB)^{-1}g^A_{lora} \rangle_F\ge 0$.

To begin with, we need to show that $(B^TB)^{-1}$ is positive definite.
To establish this, it is sufficient to show that $B^TB$ is positive definite, as the inverse of a positive definite matrix is also positive definite.
To achieve this, consider any non-zero vector $x$, and noting that $B$ is full-rank, we have,
\begin{equation}
    \langle x, B^TBx\rangle = \langle Bx, Bx\rangle = \|Bx\|^2 > 0.
\end{equation}
This shows that $B^TB$ is positive definite.
Consequently, $(B^TB)^{-1}$ is positive definite as well.
Since $(B^TB)^{-1}$ is positive definite, and thus we can apply Cholesky decomposition, and $(B^TB)^{-1} = UU^T$.
With this, we have,
\begin{equation}
\begin{aligned}
    \langle g^A_{lora}, \frac{1}{s^2}(B^TB)^{-1}g^A_{lora} \rangle_F 
    & = \frac{1}{s^2}\langle g^A_{lora}, UU^Tg^A_{lora} \rangle_F \\
    & = \frac{1}{s^2}\langle U^Tg^A_{lora}, U^Tg^A_{lora} \rangle_F \\
    & = \frac{1}{s^2}\|U^Tg^A_{lora}\|_F^2\ge 0
\end{aligned}
\end{equation}

\noindent
\ding{173} Proof of $\langle g^B_{lora}, \frac{1}{s^2}[I - B(B^TB)^{-1}B^T]g^B_{lora}(AA^T)^{-1} \rangle_F\ge 0$.

Similarly, we can prove that matrix $(AA^T)^{-1}$ is positive-definite.
By employing Cholesky decomposition, we express $(AA^T)^{-1} = UU^T$, where $U$ is a lower-triangle matrix.
Subsequently, we define $P = I - B(B^TB)^{-1}B^T$. 
It can be shown that $P^2 = P$ and $P$ is symmetry, indicating that $P$ is a projection matrix.
Consequently, the eigenvalues of $P$ are either 0 or 1, which implies that 
$P$ is positive semi-definite.
Utilizing the Cholesky decomposition, we derive that $P = VV^T$, where $V$ is a lower-triangle matrix. 
Finally, we have:
\begin{equation}
\begin{aligned}
    \langle g^B_{lora}, \frac{1}{s^2}[I - B(B^TB)^{-1}B^T]g^B_{lora}(AA^T)^{-1} \rangle_F
    & = \frac{1}{s^2}\langle g^B_{lora}, VV^Tg^B_{lora}UU^T \rangle_F \\
    & = \frac{1}{s^2}\langle V^Tg^B_{lora}U, V^Tg^B_{lora}U \rangle_F \\
    & = \frac{1}{s^2}\|V^Tg^B_{lora}U\|_F^2 \ge 0
\end{aligned}
\end{equation}

In summary, based on the above proofs, we have demonstrated that:
\begin{equation}
    \dd L = -\gamma \{\underbrace{\langle g^A_{lora}, \frac{1}{s^2}(B^TB)^{-1}g^A_{lora} \rangle_F}_{\textcolor{red}{\text{$\ge0$ as shown in \ding{172}}}} + \underbrace{\langle g^B_{lora}, \frac{1}{s^2}[I - B(B^TB)^{-1}B^T]g^B_{lora}(AA^T)^{-1} \rangle_F}_{\textcolor{red}{\text{$\ge0$ as shown in \ding{173}}}}\} \le 0
\end{equation}

\end{proof}

\subsection{Proof of Theorem~\ref{thm:x_select}}
\label{subsec:proof_x_select}
\begin{tcolorbox}[colback=gray!20,colframe=gray]
\begin{theorem*}
Consider the optimization problem,
\begin{equation}
    \min_X \|g^A - g^A_{lora}\|_F^2 + \|g^B - g^B_{lora}\|_F^2,
\end{equation}
where $g^A$ and $g^B$ are the optimal solutions as stated in Theorem~\ref{thm:close_form}.
The optimal $X$ can be determined by solving the Sylvester equation:
\begin{equation}
    B^TBX + XAA^T = -\frac{1}{s^2}(B^TB)^{-1}g^A_{lora}A^T,
\end{equation}
which has a unique solution $X$ provided that $B^TB$ and $-AA^T$ do not have any shared eigenvalues.
\end{theorem*}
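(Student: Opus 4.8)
The plan is to substitute the closed-form expressions for $g^A$ and $g^B$ from Theorem~\ref{thm:close_form} into the objective $f(X) = \|g^A - g^A_{lora}\|_F^2 + \|g^B - g^B_{lora}\|_F^2$, obtaining an unconstrained quadratic in the matrix variable $X$, and then set its matrix gradient to zero. Writing $g^A = \frac{1}{s^2}(B^TB)^{-1}g^A_{lora} + XA$ and $g^B = \frac{1}{s^2}[I - B(B^TB)^{-1}B^T]g^B_{lora}(AA^T)^{-1} - BX$, each term becomes $\|C_1 + XA\|_F^2$ and $\|C_2 - BX\|_F^2$ for constant matrices $C_1, C_2$ (independent of $X$), where $C_1 = \frac{1}{s^2}(B^TB)^{-1}g^A_{lora} - g^A_{lora}$ and $C_2 = \frac{1}{s^2}[I - B(B^TB)^{-1}B^T]g^B_{lora}(AA^T)^{-1} - g^B_{lora}$.

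Next I would compute $\partial f/\partial X$. Using the standard identities $\partial \|C_1 + XA\|_F^2/\partial X = 2(C_1 + XA)A^T$ and $\partial \|C_2 - BX\|_F^2/\partial X = -2B^T(C_2 - BX)$, setting the sum to zero gives
\begin{equation*}
(C_1 + XA)A^T - B^T(C_2 - BX) = 0,
\end{equation*}
i.e. $XAA^T + B^TBX = B^TC_2 - C_1A^T$. The remaining work is to simplify the right-hand side. The key cancellations are: $B^T C_2 = \frac{1}{s^2}B^T[I - B(B^TB)^{-1}B^T]g^B_{lora}(AA^T)^{-1} - B^Tg^B_{lora}$, and since $B^T[I - B(B^TB)^{-1}B^T] = B^T - B^T = 0$, the first piece vanishes, leaving $B^TC_2 = -B^Tg^B_{lora} = -B^T(sgA^T) = -sB^TgA^T$. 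For the other term, $C_1A^T = \frac{1}{s^2}(B^TB)^{-1}g^A_{lora}A^T - g^A_{lora}A^T = \frac{1}{s^2}(B^TB)^{-1}g^A_{lora}A^T - sB^TgA^T$. Subtracting, the $sB^TgA^T$ terms cancel and we are left with $B^TC_2 - C_1A^T = -\frac{1}{s^2}(B^TB)^{-1}g^A_{lora}A^T$, which yields exactly the claimed Sylvester equation $B^TBX + XAA^T = -\frac{1}{s^2}(B^TB)^{-1}g^A_{lora}A^T$.

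Finally, for existence and uniqueness I would invoke the standard theory of the Sylvester equation $MX + XN = Q$: it has a unique solution if and only if $M$ and $-N$ share no eigenvalue. Here $M = B^TB$ and $N = AA^T$, so the condition is that $B^TB$ and $-AA^T$ have no common eigenvalue — precisely the stated hypothesis. (One can note additionally that $B^TB$ and $AA^T$ are both positive definite under the full-rank assumption, so their spectra lie in $(0,\infty)$ and $(-\infty,0)$ respectively, making the hypothesis automatic; but I would keep the statement as given.) The main obstacle is purely bookkeeping: carefully tracking the projector $I - B(B^TB)^{-1}B^T$ through the gradient computation and spotting that left-multiplication by $B^T$ annihilates it, together with correctly applying the Frobenius-norm matrix-derivative identities (including the transpose placements) so that the two $sB^TgA^T$ contributions cancel cleanly.
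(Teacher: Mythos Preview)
Your proposal is correct and follows essentially the same approach as the paper: both set the matrix derivative of the objective with respect to $X$ to zero, obtain the normal equation $(g^A - g^A_{lora})A^T - B^T(g^B - g^B_{lora}) = 0$, and simplify using $B^T[I - B(B^TB)^{-1}B^T]=0$ together with $g^A_{lora}A^T = sB^TgA^T = B^Tg^B_{lora}$ to arrive at the stated Sylvester equation. Your write-up is in fact more explicit than the paper's in tracking the cancellations, and your parenthetical remark that the spectral condition is automatically satisfied (since $B^TB$ and $AA^T$ are both positive definite under the full-rank assumption) is a nice addition the paper does not mention.
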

\end{tcolorbox}

\begin{proof}
For simplicity, we denote $L = \|g^A - g^A_{lora}\|_F^2 + \|g^B - g^B_{lora}\|_F^2$.
To solve the optimization problem, we need to satisfy the following conditions:
\begin{equation}
    \frac{\partial L}{\partial X} = 0.
\end{equation}
Since $g^A$ and $g^B$ are solutions in Theorem~\ref{thm:close_form} and $g^A_{lora} = sB^Tg$ and $g^B_{lora} = sgA^T$, we obtain that:
\begin{equation}
\begin{aligned}
     2(g^A - g^A_{lora})A^T &- 2B^T(g^B - g^B_{lora}) = 0, \\
    \Rightarrow \quad
    g^AA^T - B^Tg^B & = g^A_{lora}A^T - B^Tg^B_{lora}, \\
    \Rightarrow \quad
    B^TBX + XAA^T & = -\frac{1}{s^2}(B^TB)^{-1}g^A_{lora}A^T,
\end{aligned}
\end{equation}
which is a Sylvester equation. This equation has a unique solution for $X$ if and only if $B^TB$ and $-AA^T$ have no shared eigenvalues.

\end{proof}

\section{Optimization Algorithms}
\label{sec:appendix_algorithm}
In this section, we present the pseudo-codes for implementing our LoRA-Pro method using the SGD~\citep{sutskever2013importance} and AdamW~\citep{loshchilov2019decoupled} optimizers.
The details are provided in Algorithm~\ref{alg:sgd} and Algorithm~\ref{alg:adamw_full}, respectively.

\textbf{LoRA-Pro with SGD optimizer.} In the standard SGD algorithm, as illustrated in Algorithm~\ref{alg:sgd}, all we need to do is adjusting the gradients of matrices $A$ and $B$ with the solutions in Theorem~\ref{thm:close_form}.

\begin{algorithm}
\caption{LoRA-Pro with SGD optimizer}
\begin{algorithmic}[1]
\label{alg:sgd}
\REQUIRE Given initial learning rate $\gamma$, scaling factor $s$.
\STATE Initialize time step $t\gets 0$, low-rank matrices $A_0\in\mathbb{R}^{r\times n}$ and $B_0\in\mathbb{R}^{m\times r}$
\REPEAT
    \STATE  $t \gets t + 1$
    \STATE $g^A_{lora}, g^B_{lora} \gets$ SelectBatch$(A_{t-1}, B_{t-1})$ \COMMENT{Select batch and return the corresponding gradients}
    \STATE $A, B\gets A_{t-1}, B_{t-1}$
    \COMMENT{Obtain the low-rank matrices $A$ and $B$}
    \STATE $X\gets$ SolveSylvester($B^TBX + XAA^T = -\frac{1}{s^2}(B^TB)^{-1}g^A_{lora}A^T$)
    \COMMENT{Compute X by solving the sylvester equation}
    \STATE $g^A = \frac{1}{s^2}(B^TB)^{-1}g^A_{lora} + XA$
    \COMMENT{Adjust the gradients of LoRA with Theorem~\ref{thm:close_form}}
    \STATE $g^B = \frac{1}{s^2}[I - B(B^TB)^{-1}B^T]g^B_{lora}(AA^T)^{-1} - BX$
    \STATE $A_t \gets A_{t - 1} - \gamma g^A$
    \STATE $B_t \gets B_{t - 1} - \gamma g^B$
\UNTIL \textit{stopping criterion is met}
\RETURN optimized parameters $A_t$ and $B_t$
\end{algorithmic}
\end{algorithm}

\textbf{LoRA-Pro with AdamW optimizer.} 
In AdamW optimizer, the implementation becomes more complex.
We aim to closely approximate full fine-tuning during optimization.
Several modifications are necessary.
Firstly, in order to mimic full fine-tuning, after adjusting the gradients of matrices $A$ and $B$, we need to compute the equivalent gradient,
\begin{equation}
    \tilde{g} = sg^BA + sBg^A.
\end{equation}
Subsequently, we calculate the first and second moments of this equivalent gradient to derive the corresponding AdamW gradient, $\tilde{g}^{AdamW}$.
Secondly, we determine the gradients with respect to matrices $A$ and $B$ as follows:
\begin{equation}
    \tilde{g}^A = sB^T\tilde{g}^{AdamW}, \quad
    \tilde{g}^B = s\tilde{g}^{AdamW}A^T. 
\end{equation}
Thirdly, the weight decay process must be adjusted. In line with full fine-tuning, the weight decay is given by:
\begin{equation}
    W \gets (1 - \gamma\lambda)(W_0 + sBA).
\end{equation}
This can be decomposed into:
\begin{equation}
    W_0 \gets (1 - \gamma\lambda)W_0,\quad
    B \gets \sqrt{1 - \gamma\lambda}B, \quad
    A \gets \sqrt{1 - \gamma\lambda}A 
\end{equation}

\begin{algorithm}
\caption{LoRA-Pro with AdamW optimizer}
\begin{algorithmic}[1]
\label{alg:adamw_full}
\REQUIRE Given initial learning rate $\gamma$, scaling factor $s$, original weight matrix $W_0\in\mathbb{R}^{m\times n}$, and $\beta_1=0.9, \beta_2=0.999, \epsilon=10^{-8}, \lambda\in\mathbb{R}$
\STATE Initialize time step $t\gets 0$, low-rank matrices $A_0\in\mathbb{R}^{r\times n}$ and $B_0\in\mathbb{R}^{m\times r}$,
first momentum $m_0\in\mathbb{R}^{m\times n}$, second momentum $v_t\in\mathbb{R}^{m\times n}$
\REPEAT
    \STATE  $t \gets t + 1$
    \STATE $g^A_{lora}, g^B_{lora} \gets$ SelectBatch$(A_{t-1}, B_{t-1})$ \COMMENT{Select batch and return the corresponding gradients}
    \STATE $A, B\gets A_{t-1}, B_{t-1}$
    \COMMENT{Obtain the low-rank matrices $A$ and $B$}
    \STATE $X\gets 0$
    \COMMENT{X's value does not affect equivalent gradient}
    \STATE $g^A = \frac{1}{s^2}(B^TB)^{-1}g^A_{lora} + XA$
    \COMMENT{Adjust the gradients of LoRA with Theorem~\ref{thm:close_form}}
    \STATE $g^B = \frac{1}{s^2}[I - B(B^TB)^{-1}B^T]g^B_{lora}(AA^T)^{-1} - BX$

    \STATE $\tilde{g} \gets sg^BA + sBg^A$ \COMMENT{Compute equivalent gradient}
    \STATE $m_t \gets \beta_1m_{t-1} + (1 - \beta_1)\tilde{g}$
    \STATE $v_t \gets \beta_2v_{t-1} + (1 - \beta_2)\tilde{g}^2$
    \STATE $\hat{m}_t \gets \frac{m_t}{1 - \beta_1^t}$
    \STATE $\hat{v}_t \gets \frac{v_t}{1 - \beta_2^t}$
    \STATE $\tilde{g}^{AdamW} \gets \frac{\hat{m}_t}{\sqrt{\hat{v}_t} + \epsilon}$
    \STATE $\tilde{g}^A_{lora} \gets sB^T\tilde{g}^{AdamW}$
    \STATE $\tilde{g}^B_{lora} \gets s\tilde{g}^{AdamW}A^T$

    \STATE $X\gets$ SolveSylvester($B^TBX + XAA^T = -\frac{1}{s^2}(B^TB)^{-1}\tilde{g}^A_{lora}A^T$)
    \COMMENT{Compute X by solving the sylvester equation}
    \STATE $\tilde{g}^A = \frac{1}{s^2}(B^TB)^{-1}\tilde{g}^A_{lora} + XA$
    \COMMENT{Adjust the gradients of LoRA with Theorem~\ref{thm:close_form}}
    \STATE $\tilde{g}^B = \frac{1}{s^2}[I - B(B^TB)^{-1}B^T]\tilde{g}^B_{lora}(AA^T)^{-1} - BX$

    \STATE $A \gets \sqrt{1 - \gamma\lambda}A$ \COMMENT{Weight Decay}
    \STATE $B \gets \sqrt{1 - \gamma\lambda}B$
    \STATE $W_0 \gets (1 - \gamma\lambda)W_0$
    
    \STATE $A_t \gets A_{t - 1} - \gamma \tilde{g}^A$
    \STATE $B_t \gets B_{t - 1} - \gamma \tilde{g}^B$
\UNTIL \textit{stopping criterion is met}
\RETURN optimized parameters $A_t$ and $B_t$
\end{algorithmic}
\end{algorithm}

\section{Additional Experiments}
\label{sec:appendix_exp}

\subsection{Ablation Study of the Selection of X}
Based on Theorem~\ref{thm:close_form}, in LoRA-Pro, the matrix $X$ can be chosen arbitrarily. 
While its selection does not affect the equivalent gradient, it does influence the updates of matrices $A$ and $B$ in LoRA. 
Here, we conduct an ablation study on the choice of $X$.

We compare three possible values for $X$.
1) Zero solution: In this simplest case, we set $X = \mathbf{0}$.
2) Sylvester solution: Here, $X$ is obtained by solving the Sylvester equation, as described in Theorem~\ref{thm:x_select}.
3) Symmetry solution: This approach aims to balance the contributions of both terms in the equation $\tilde{g}=sg^BA + sBg^A$, enforcing the condition $g^BA = Bg^A$.
For the symmetry solution, solving for $X$ yields:
\begin{equation}
\label{eq:symmetry}
    X = -\frac{1}{2s}(B^TB)^{-1}B^TgA(A^TA)^{-1} = -\frac{1}{2s^2}(B^TB)^{-1}B^Tg^B_{lora}(A^TA)^{-1}.
\end{equation}
The comparison of the selection of $X$ is presented in Table~\ref{tab:select_x}.
As shown in the table, the choice of $X$ significantly impacts LoRA-Pro's performance, particularly evident in the GSM8K dataset.
Different $X$ selections influence the subspaces of $A$ and $B$, ultimately affecting the optimization process described in Theorem~\ref{thm:close_form}.
Our experiments demonstrate that the Sylvester solution consistently outperforms both the zero and symmetry solutions across all three evaluation tasks.
We attribute the superior performance of the Sylvester solution to its ability to select subspaces for $A$ and $B$ that enable faster gradient descent (i.e., maximizing the approximation between the modified gradients $g^A, g^B$ and the LoRA gradients $g^A_{lora}, g^B_{lora}$).

\setlength{\tabcolsep}{6pt}
\begin{table}[htbp]
  \centering
  \caption{Ablation study on the selection of different X in LoRA-Pro.}
    \begin{tabular}{ll|ccc}
    \toprule
    \multicolumn{2}{l|}{Choice of X} & MT-Bench & GSM8K & HumanEval \\
    \midrule
    \multicolumn{2}{l|}{Zero} & 5.58±0.14 & 31.74±0.69 & 17.28±0.35 \\
    \multicolumn{2}{l|}{Symmetry (Eq.~(\ref{eq:symmetry}))} & 5.71±0.11 & 42.81±0.62 & 17.88±0.35 \\
    \rowcolor[RGB]{204, 230, 230}
    \multicolumn{2}{l|}{Sylvester (Thm.~\ref{thm:x_select})} & 5.72±0.03 & 57.57±0.50 & 22.97±0.35 \\
    \bottomrule
    \end{tabular}%
  \label{tab:select_x}%
\end{table}%

\subsection{Visualization of Differences Between Equivalent Gradients and Full Gradients}
In this section, we fine-tune Llama-2-7B on the MetaMathQA100k dataset and visualize the discrepancies between the equivalent gradients of LoRA and LoRA-Pro and the full gradients during training, i.e., the differences before and after gradient adjustments.
We present visualizations for different optimization modules, including the Q, K, V, O, Up, Down, and Gate layers, and provide results for these modules across the shallow (1), medium (15), and deep (31) layers of Llama-2-7B.

The results are shown in Figure~\ref{fig:discrepancy}.
From the figure, we can draw the following conclusions:
\begin{itemize}
    \item After gradient adjustments in LoRA-Pro, we observe a significant reduction in the distance between the equivalent gradients and the full gradients. 
    \item In certain layers, the discrepancy between LoRA's equivalent gradients and the full gradients continues to increase (e.g., Layer 1 O, Up, Gate projections; Layer 15 Up and Gate projections; and Layer 31 O projection). However, in these layers, the discrepancy for LoRA-Pro remains stable, indicating that LoRA-Pro can consistently align with the full gradients during training, preventing the model from settling into sub-optimal solutions.
    \item In deep layers, the discrepancy between equivalent gradients and full gradients decreases as training progresses, whereas in shallow and medium layers, the discrepancy first increases and then stabilizes.
    The cause of this phenomenon is not yet clear, and we plan to investigate it further in future research.
\end{itemize} 
These findings highlight that LoRA-Pro effectively reduces the distance between LoRA and full gradients during training and ensures continuous alignment with full gradients, underscoring the efficacy of LoRA-Pro.

\begin{figure}[!htbp]
    \centering
    \begin{minipage}{0.325\textwidth}
        \centering
        \includegraphics[width=1.0\textwidth]{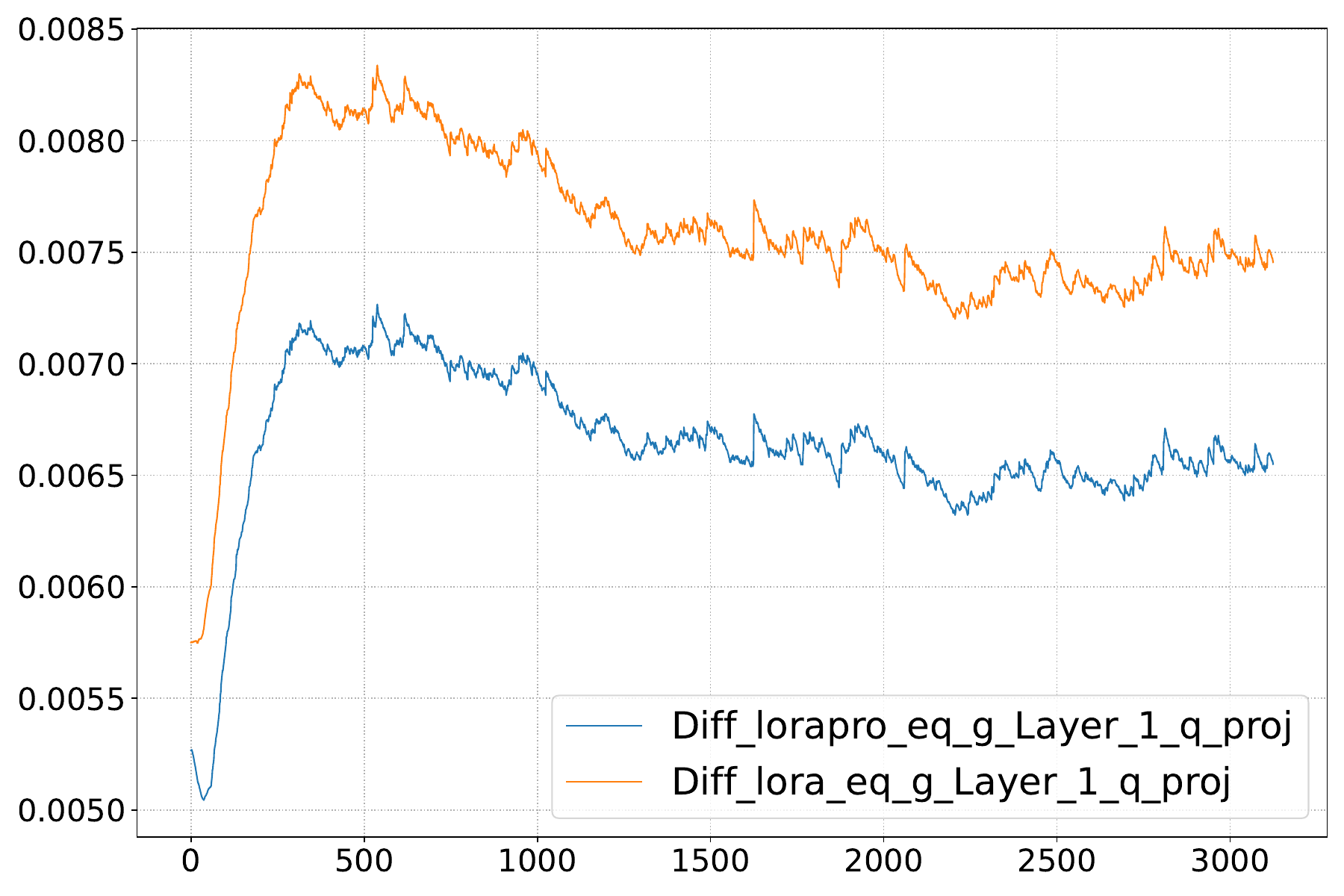}
    \end{minipage}
    \begin{minipage}{0.325\textwidth}
        \centering
        \includegraphics[width=1.0\textwidth]{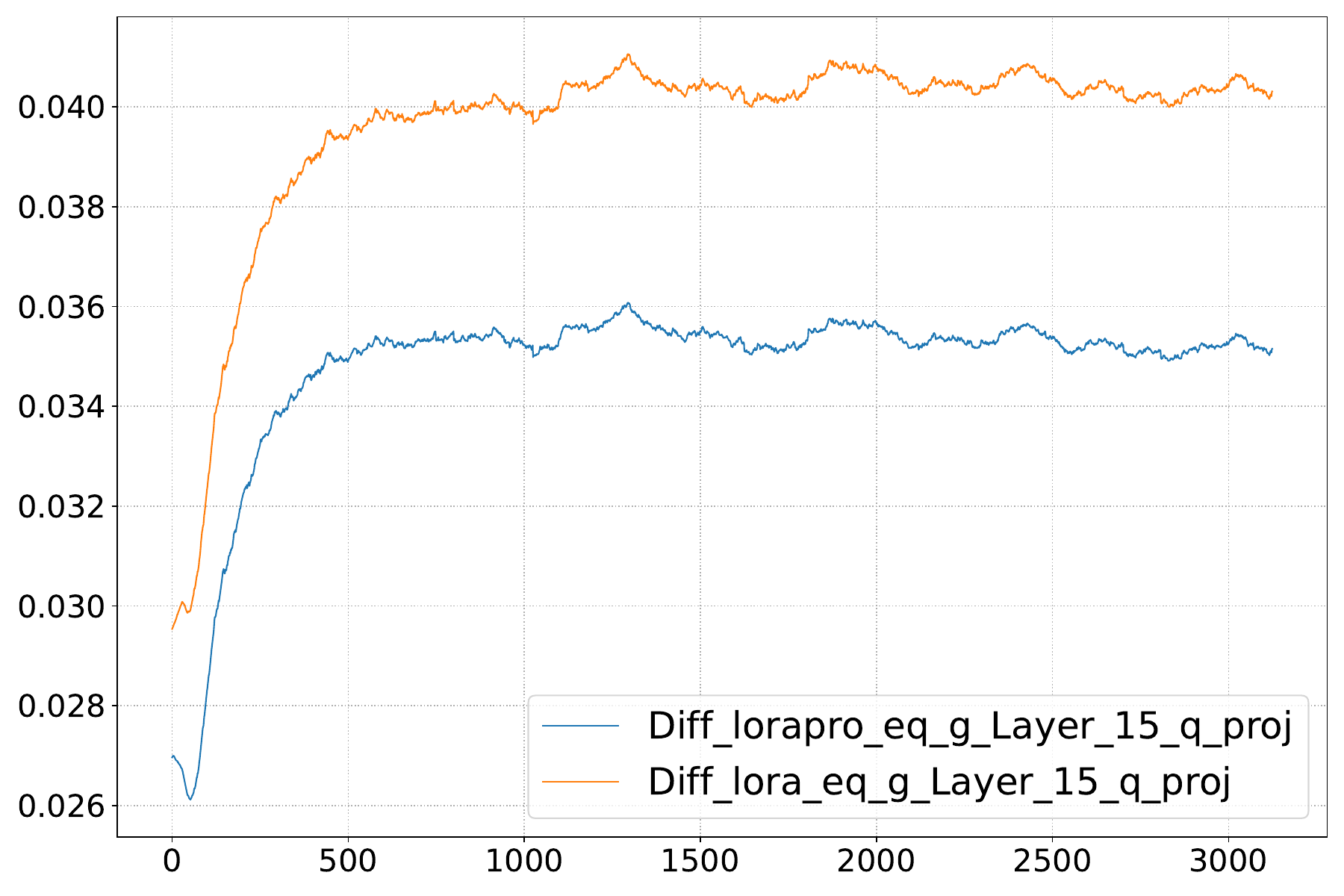}
    \end{minipage}
    \begin{minipage}{0.325\textwidth}
        \centering
        \includegraphics[width=1.0\textwidth]{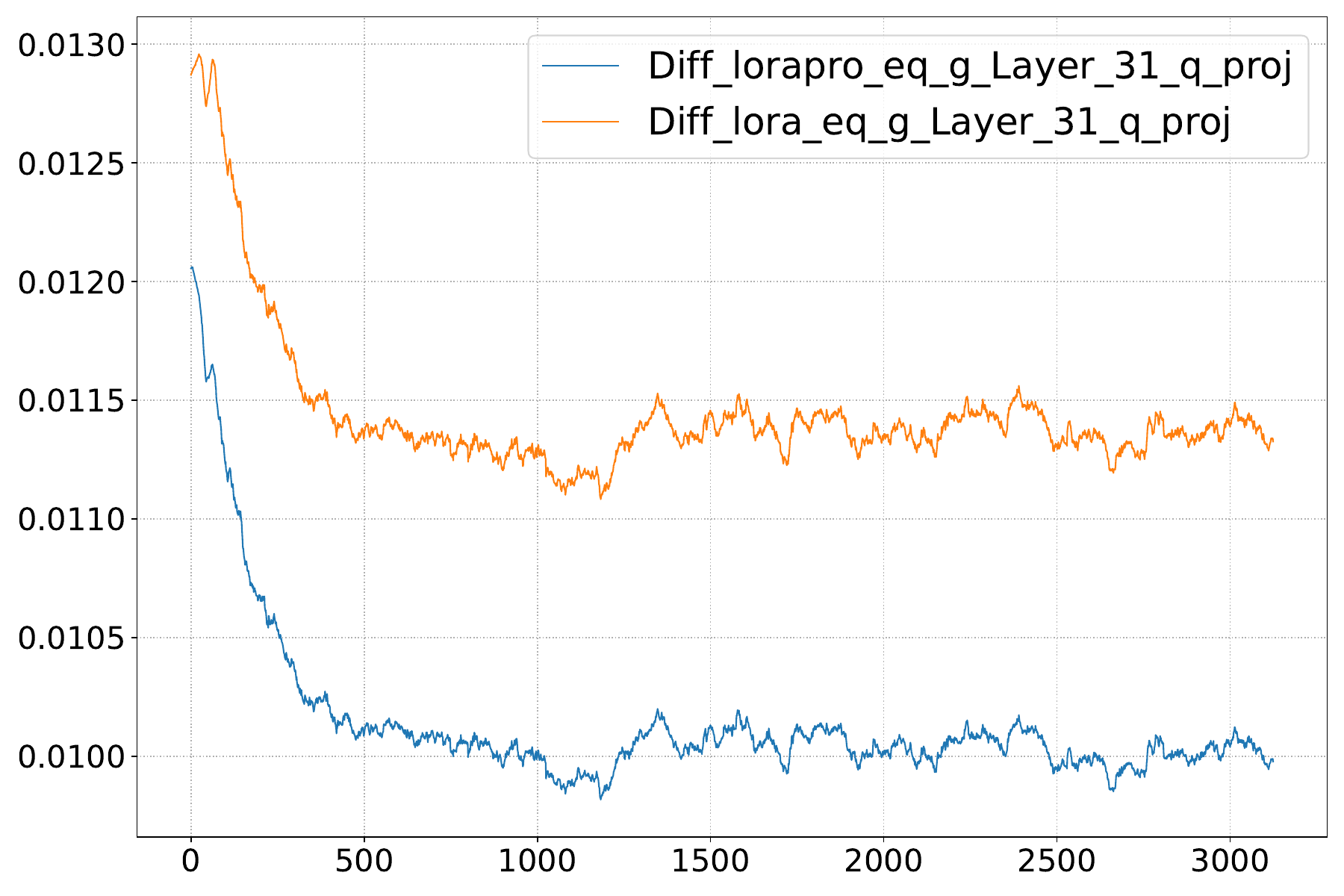}
    \end{minipage}
    \\
    \begin{minipage}{0.325\textwidth}
        \centering
        \includegraphics[width=1.0\textwidth]{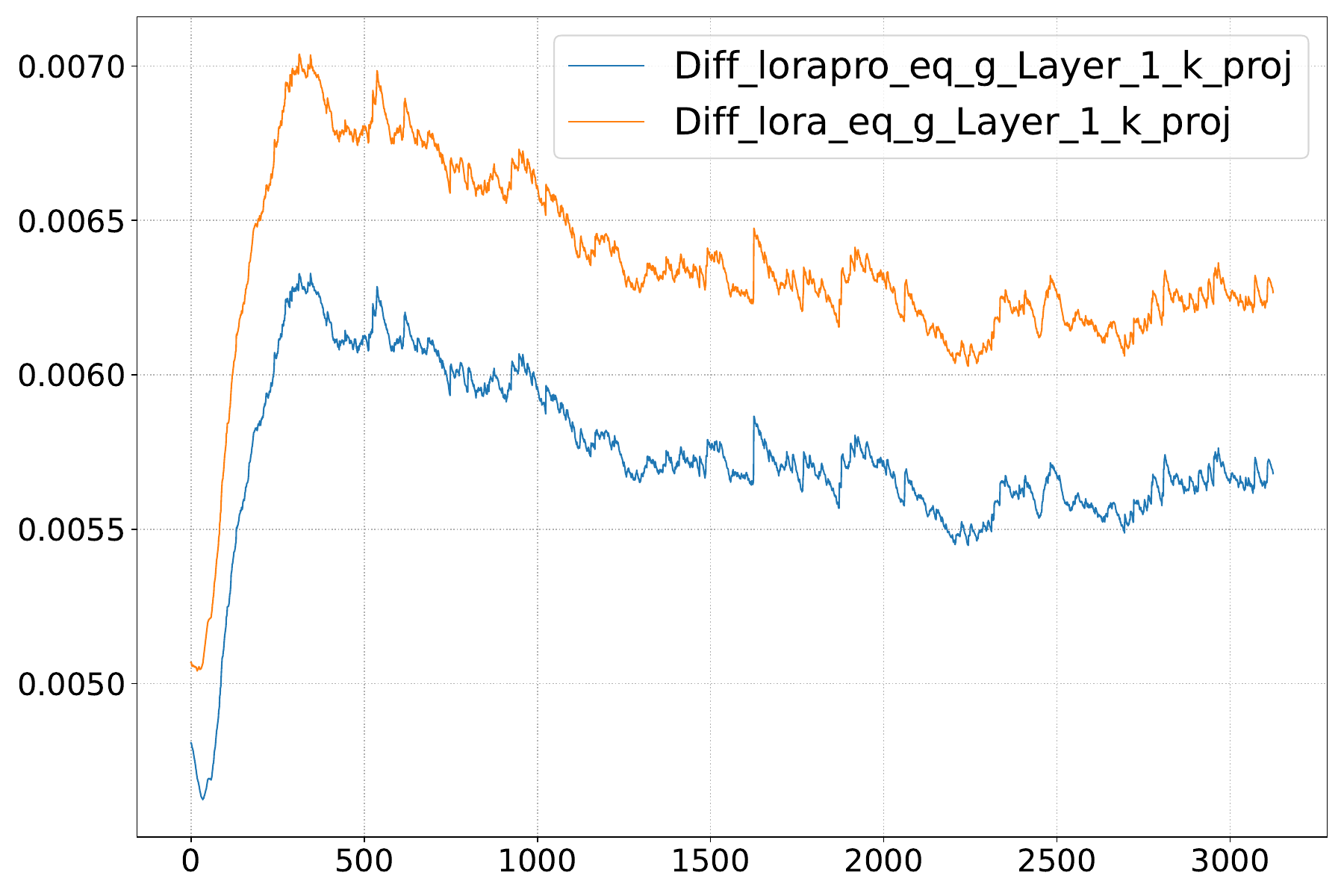}
    \end{minipage}
    \begin{minipage}{0.325\textwidth}
        \centering
        \includegraphics[width=1.0\textwidth]{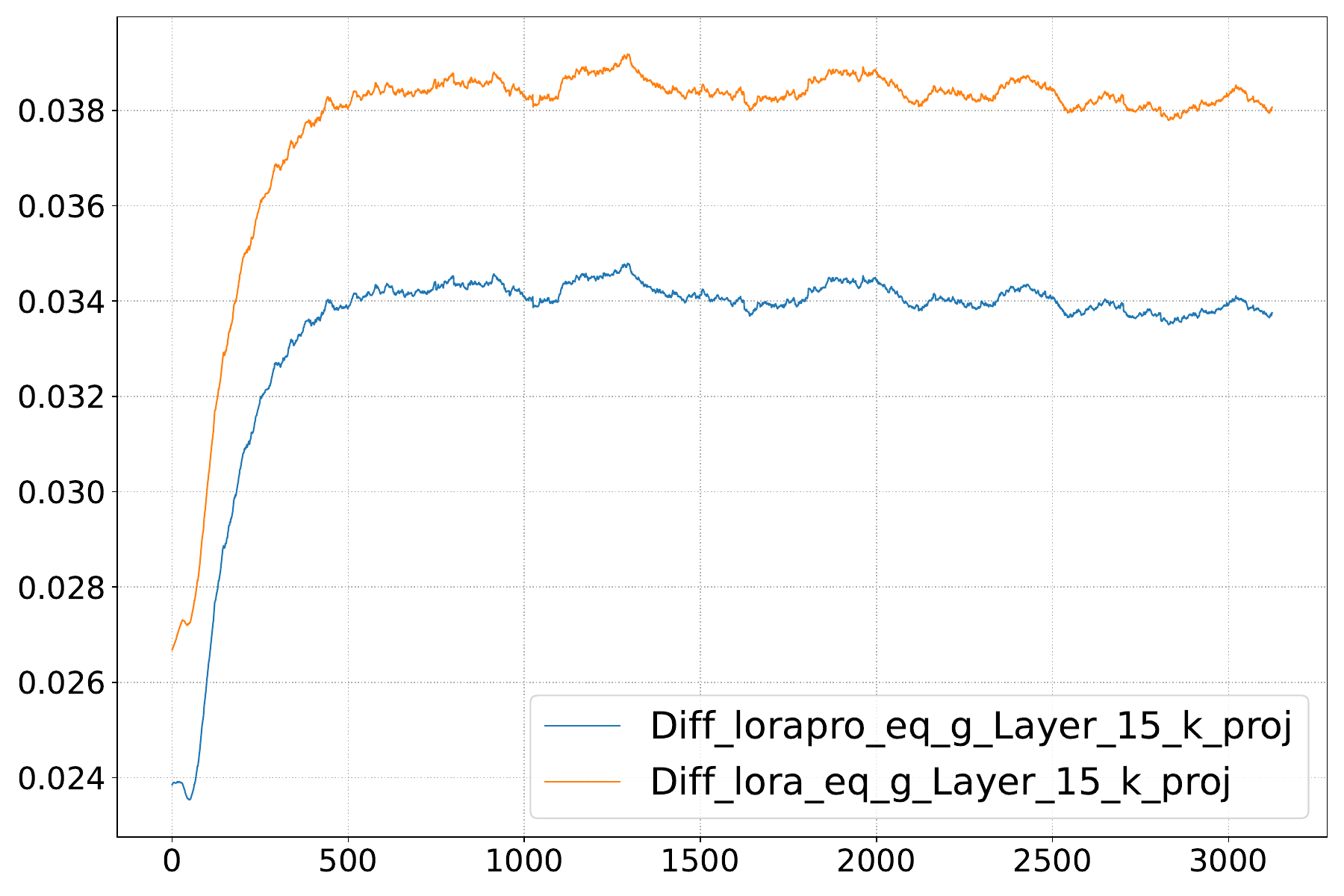}
    \end{minipage}
    \begin{minipage}{0.325\textwidth}
        \centering
        \includegraphics[width=1.0\textwidth]{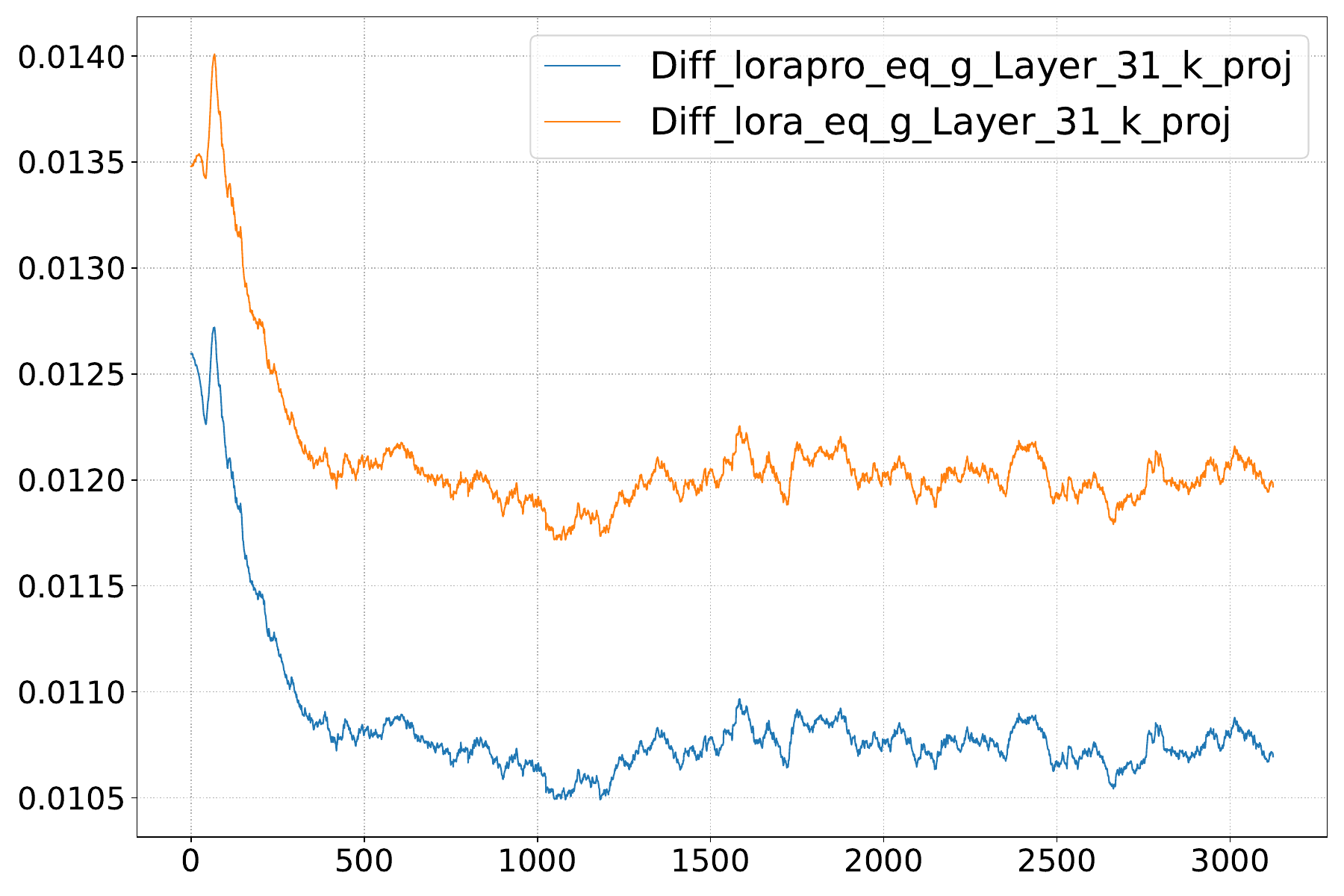}
    \end{minipage}
    \\
    \begin{minipage}{0.325\textwidth}
        \centering
        \includegraphics[width=1.0\textwidth]{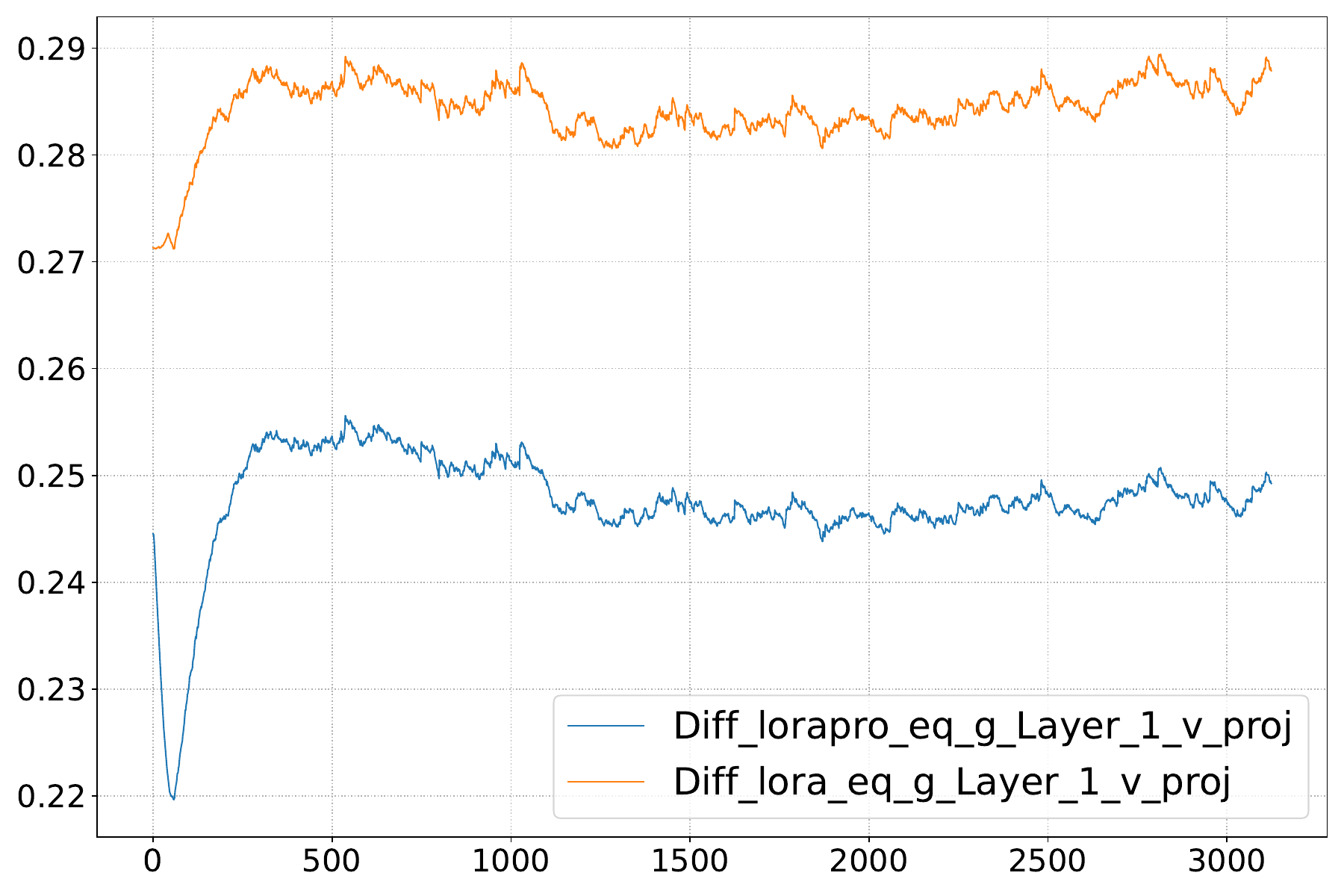}
    \end{minipage}
    \begin{minipage}{0.325\textwidth}
        \centering
        \includegraphics[width=1.0\textwidth]{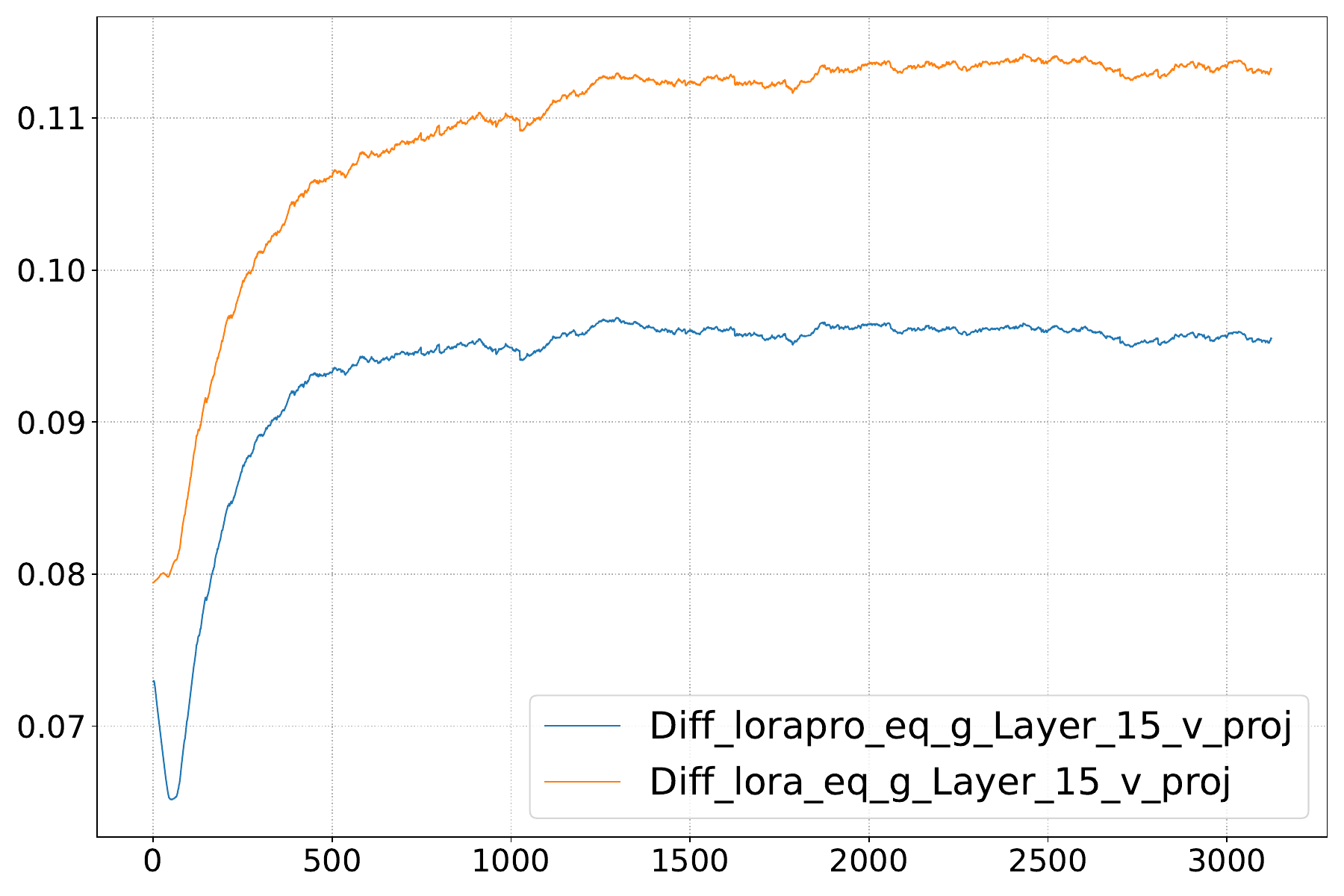}
    \end{minipage}
    \begin{minipage}{0.325\textwidth}
        \centering
        \includegraphics[width=1.0\textwidth]{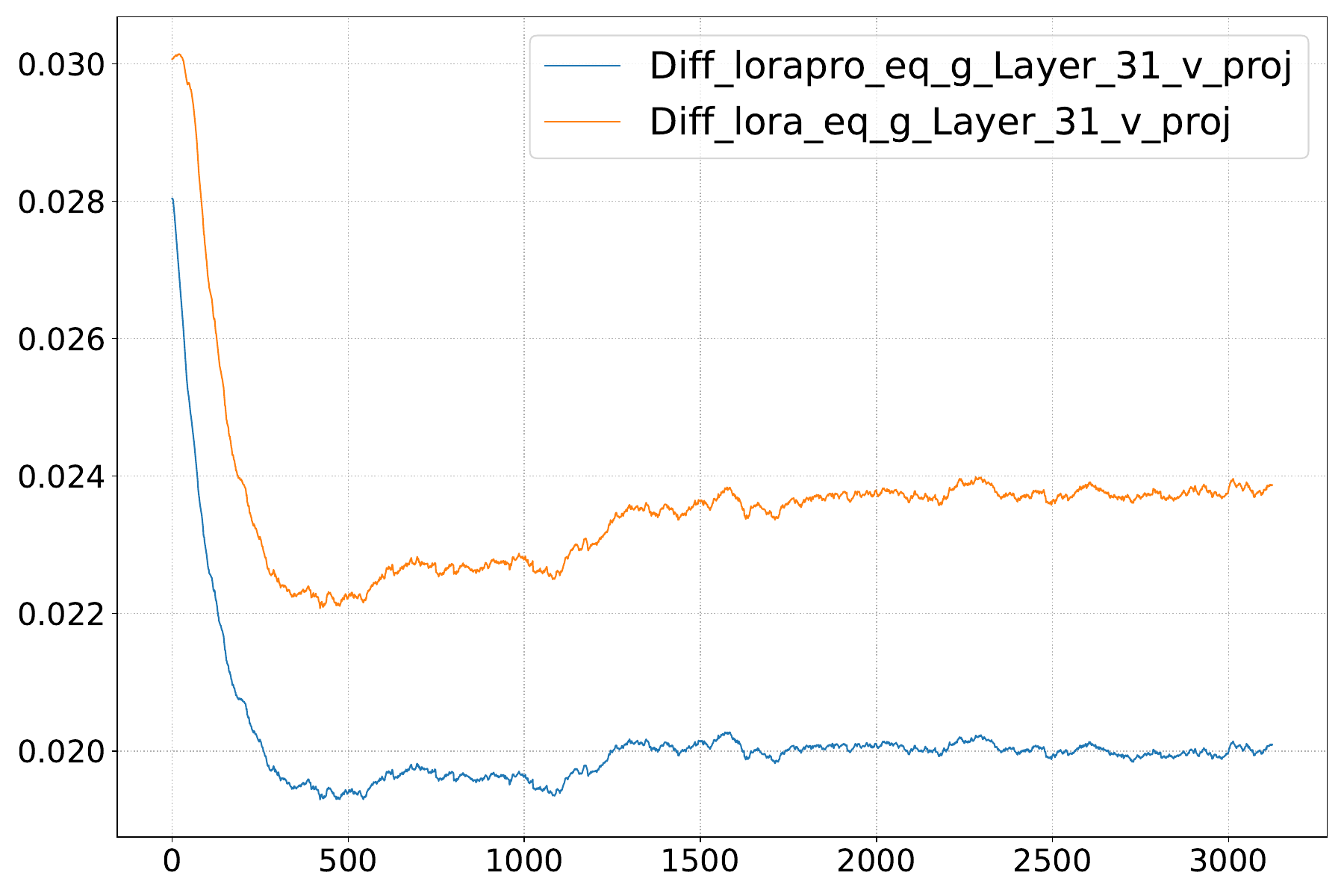}
    \end{minipage}
    \\
    \begin{minipage}{0.325\textwidth}
        \centering
        \includegraphics[width=1.0\textwidth]{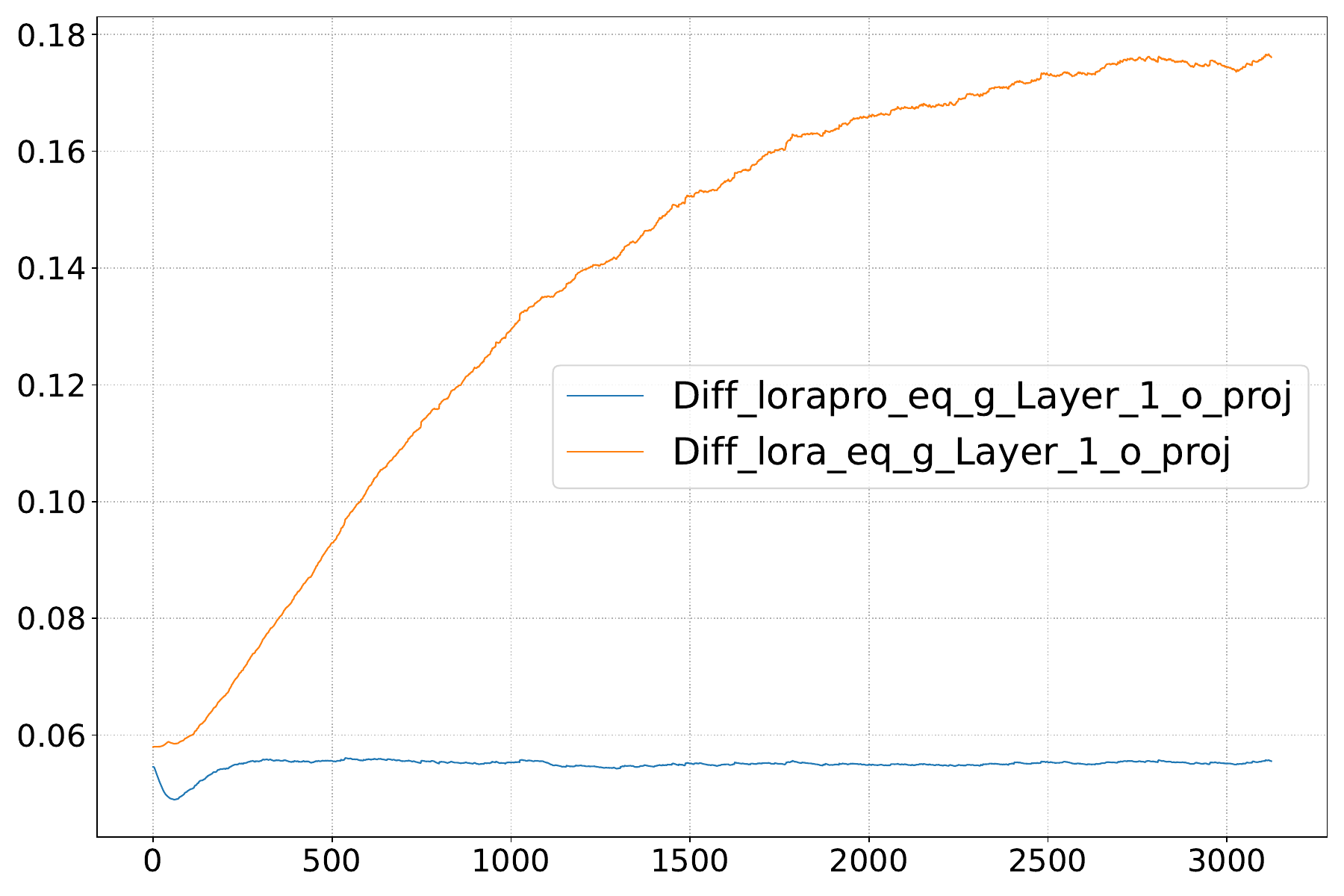}
    \end{minipage}
    \begin{minipage}{0.325\textwidth}
        \centering
        \includegraphics[width=1.0\textwidth]{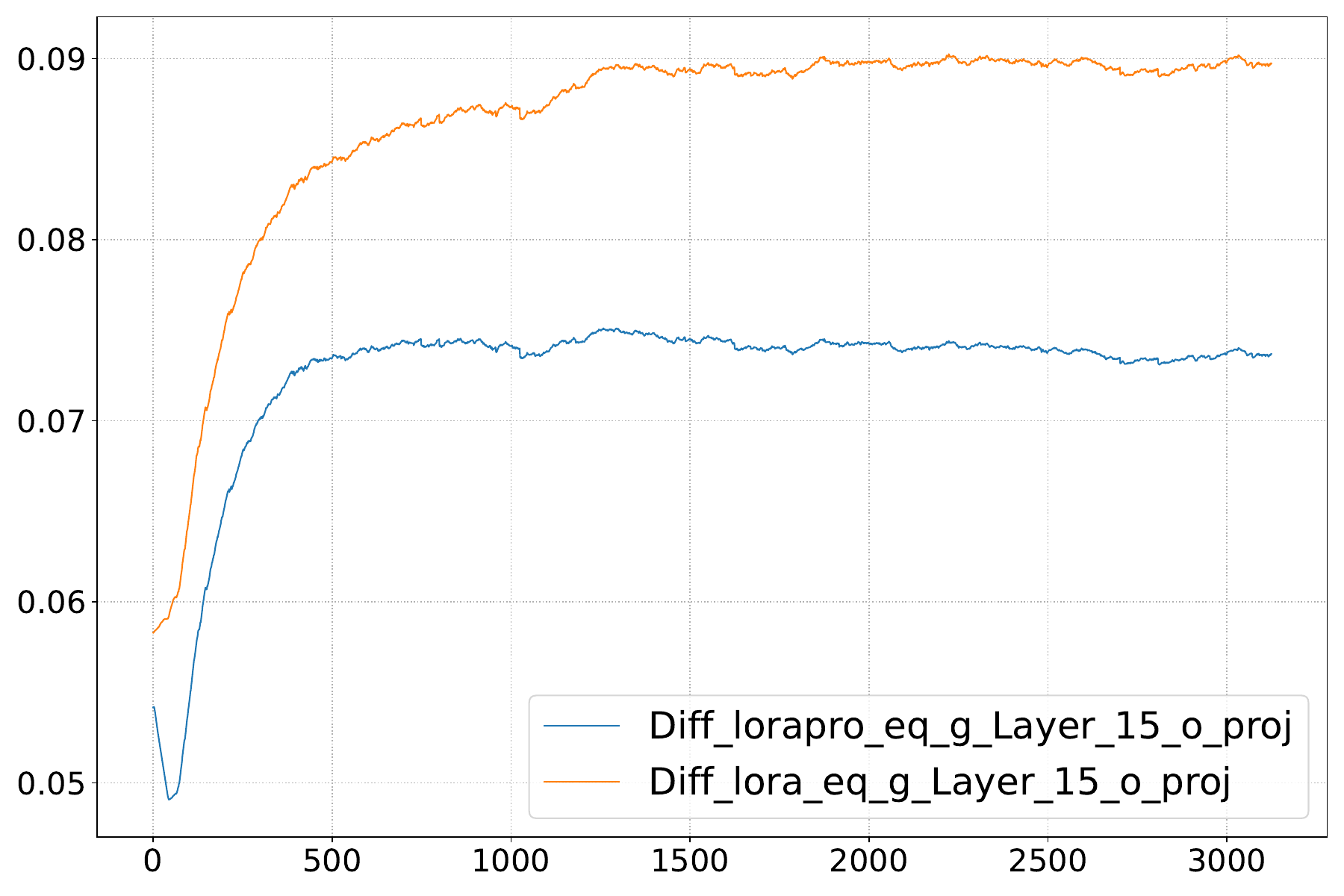}
    \end{minipage}
    \begin{minipage}{0.325\textwidth}
        \centering
        \includegraphics[width=1.0\textwidth]{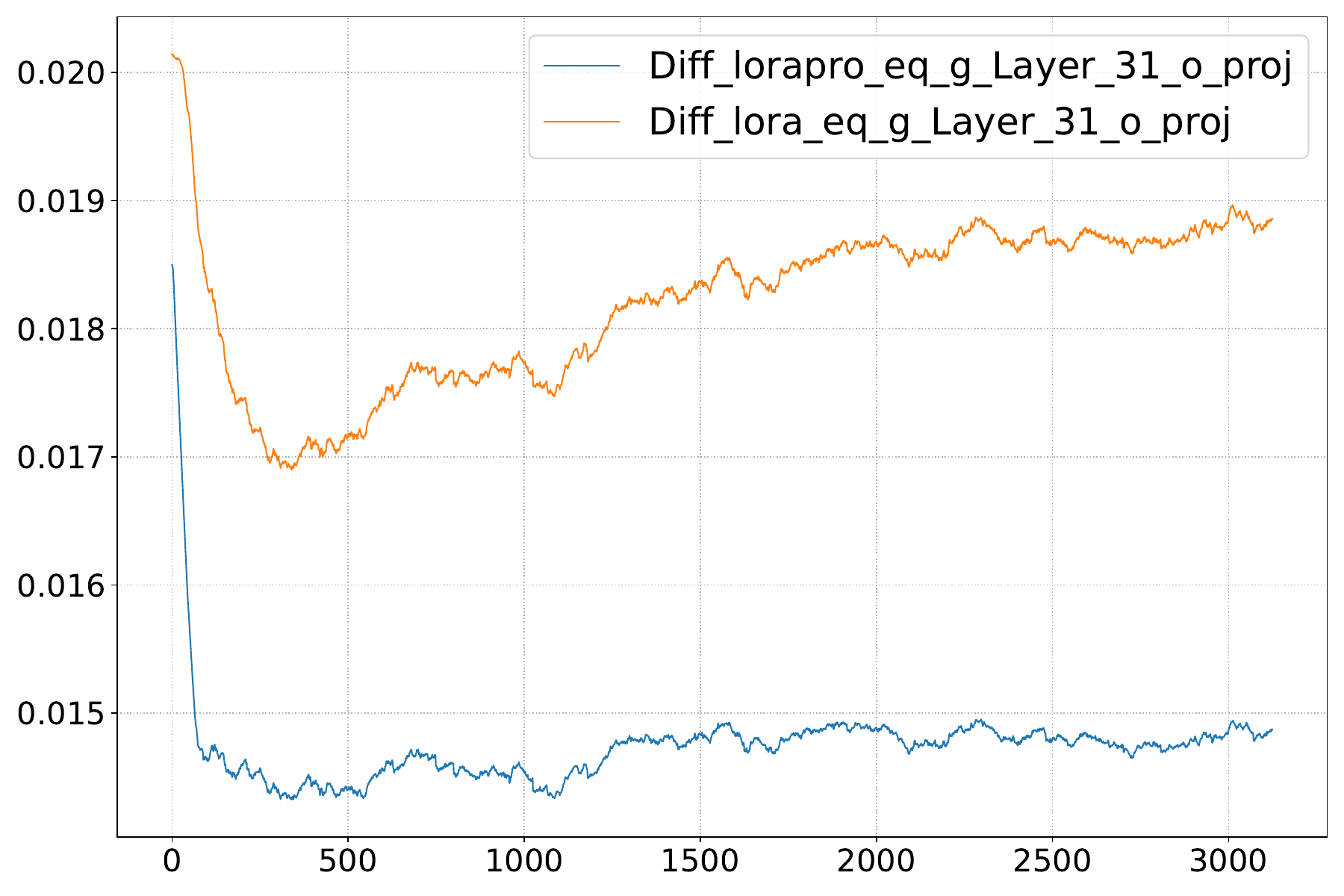}
    \end{minipage}
    \\
    \begin{minipage}{0.325\textwidth}
        \centering
        \includegraphics[width=1.0\textwidth]{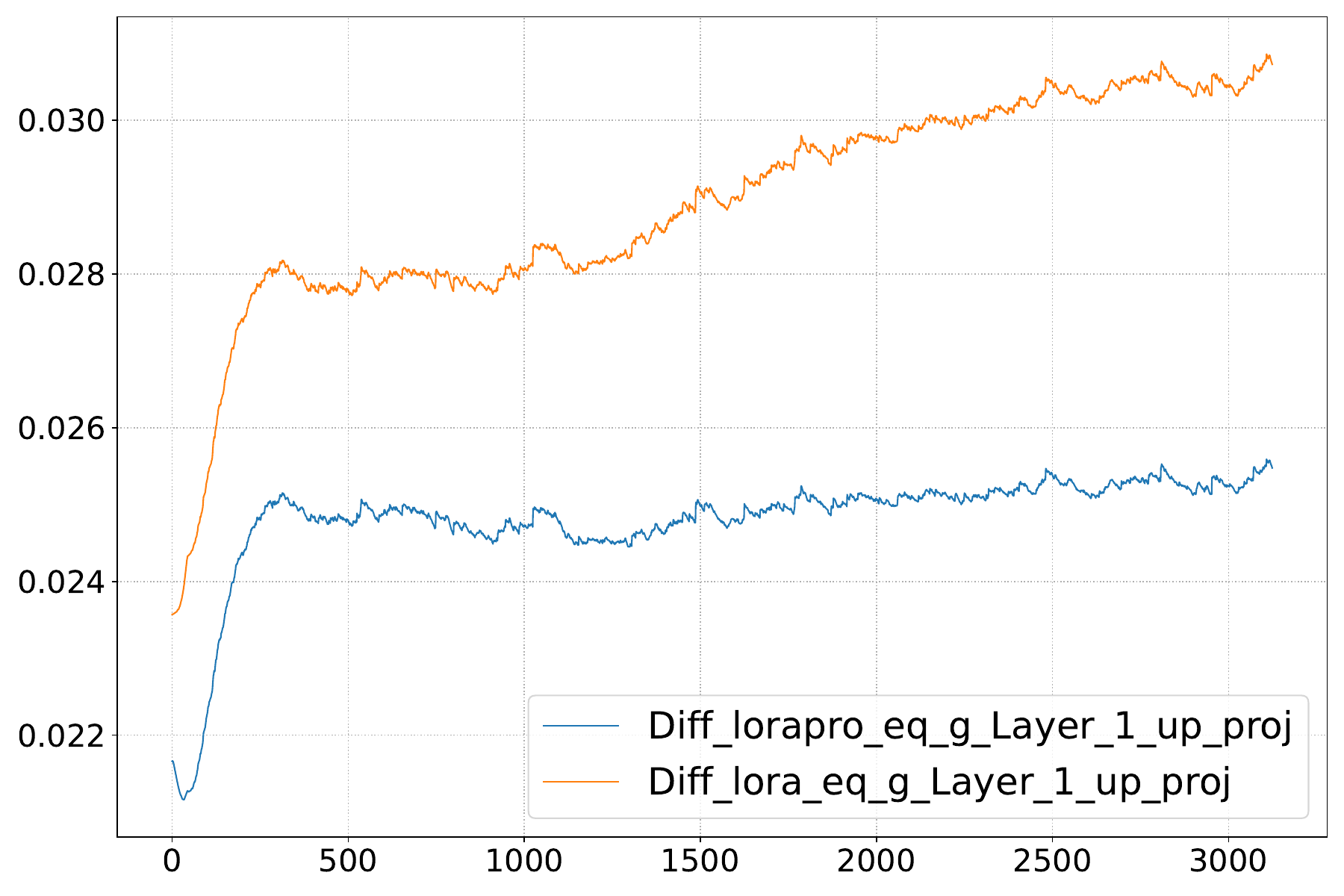}
    \end{minipage}
    \begin{minipage}{0.325\textwidth}
        \centering
        \includegraphics[width=1.0\textwidth]{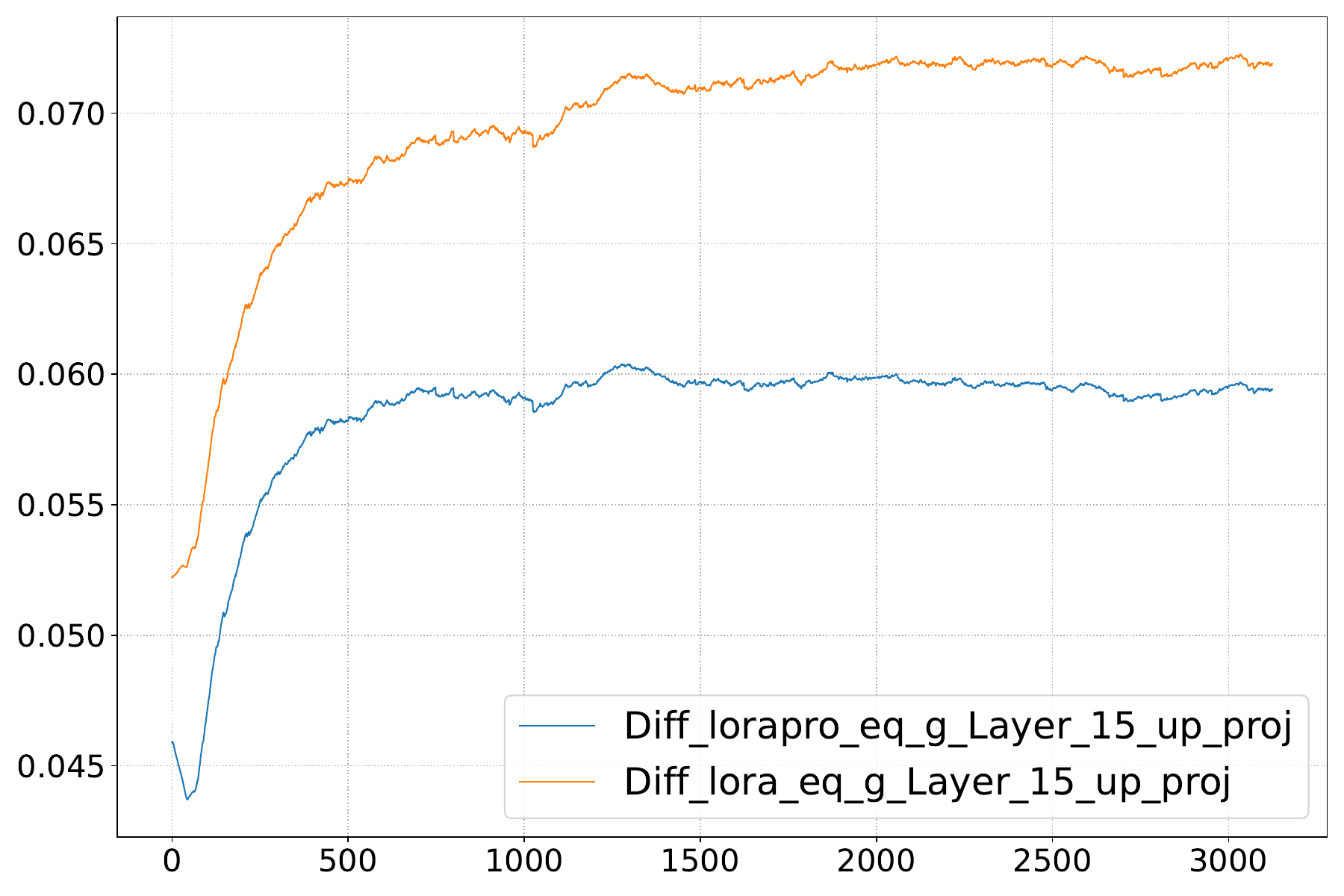}
    \end{minipage}
    \begin{minipage}{0.325\textwidth}
        \centering
        \includegraphics[width=1.0\textwidth]{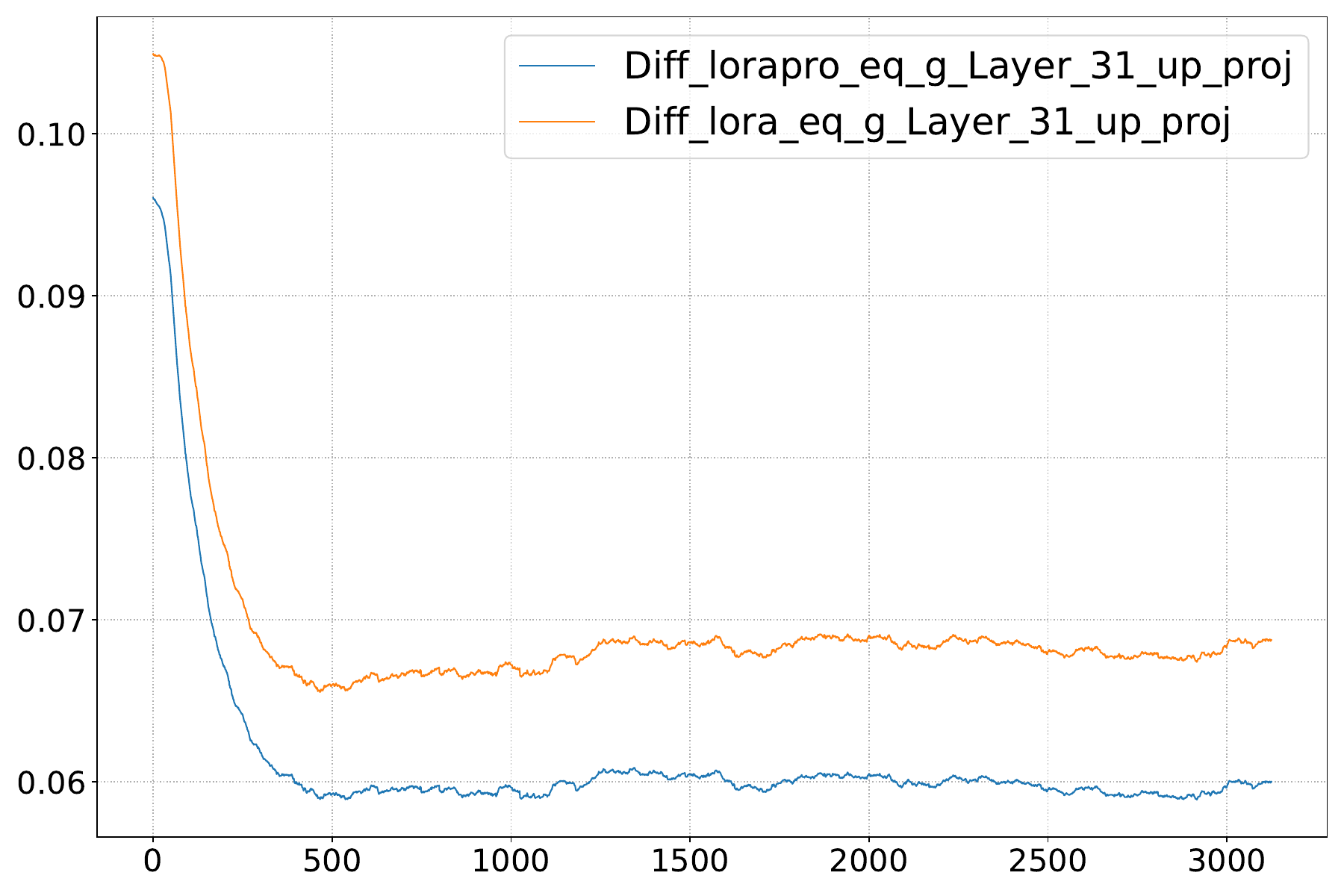}
    \end{minipage}
    \\
    \begin{minipage}{0.325\textwidth}
        \centering
        \includegraphics[width=1.0\textwidth]{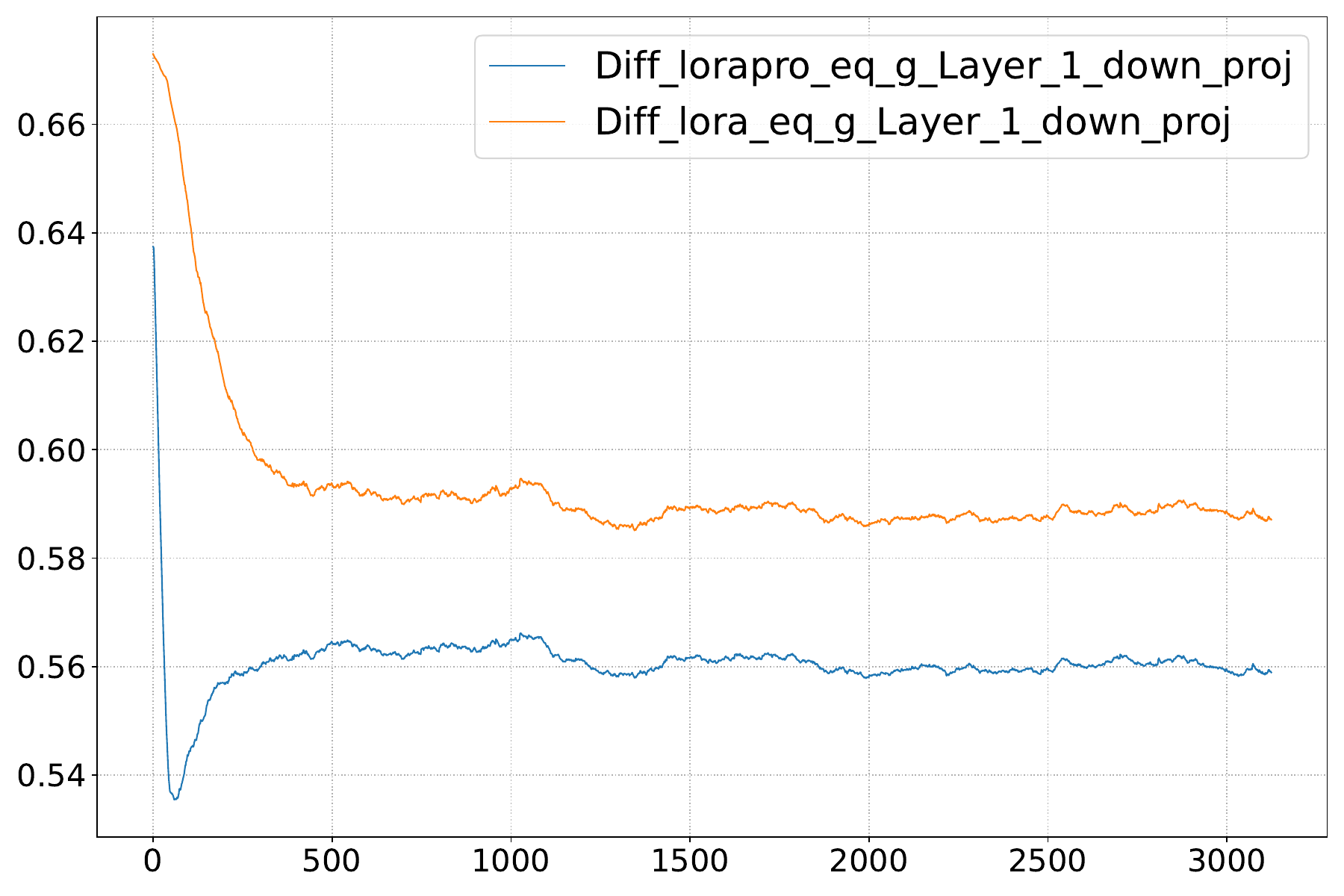}
    \end{minipage}
    \begin{minipage}{0.325\textwidth}
        \centering
        \includegraphics[width=1.0\textwidth]{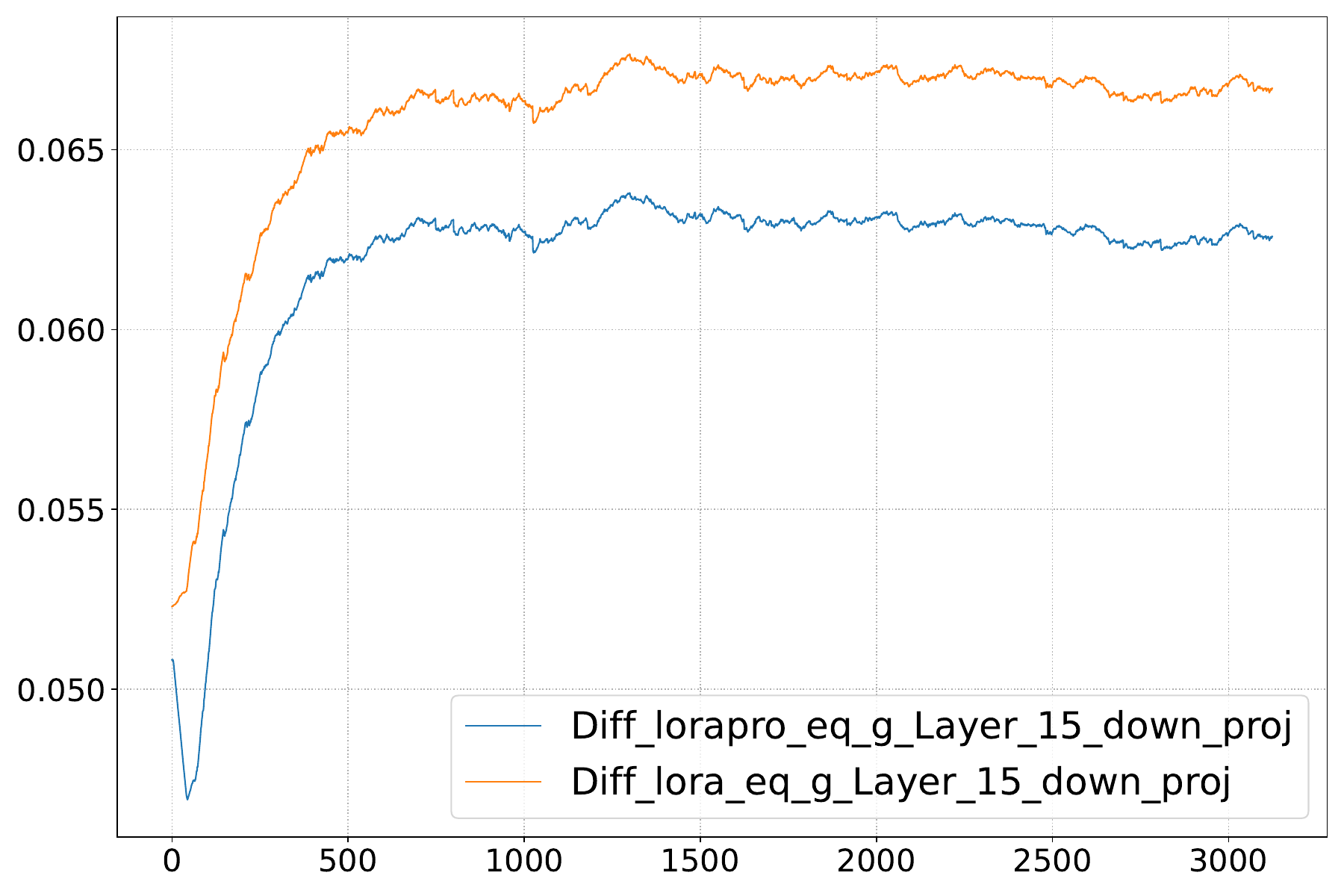}
    \end{minipage}
    \begin{minipage}{0.325\textwidth}
        \centering
        \includegraphics[width=1.0\textwidth]{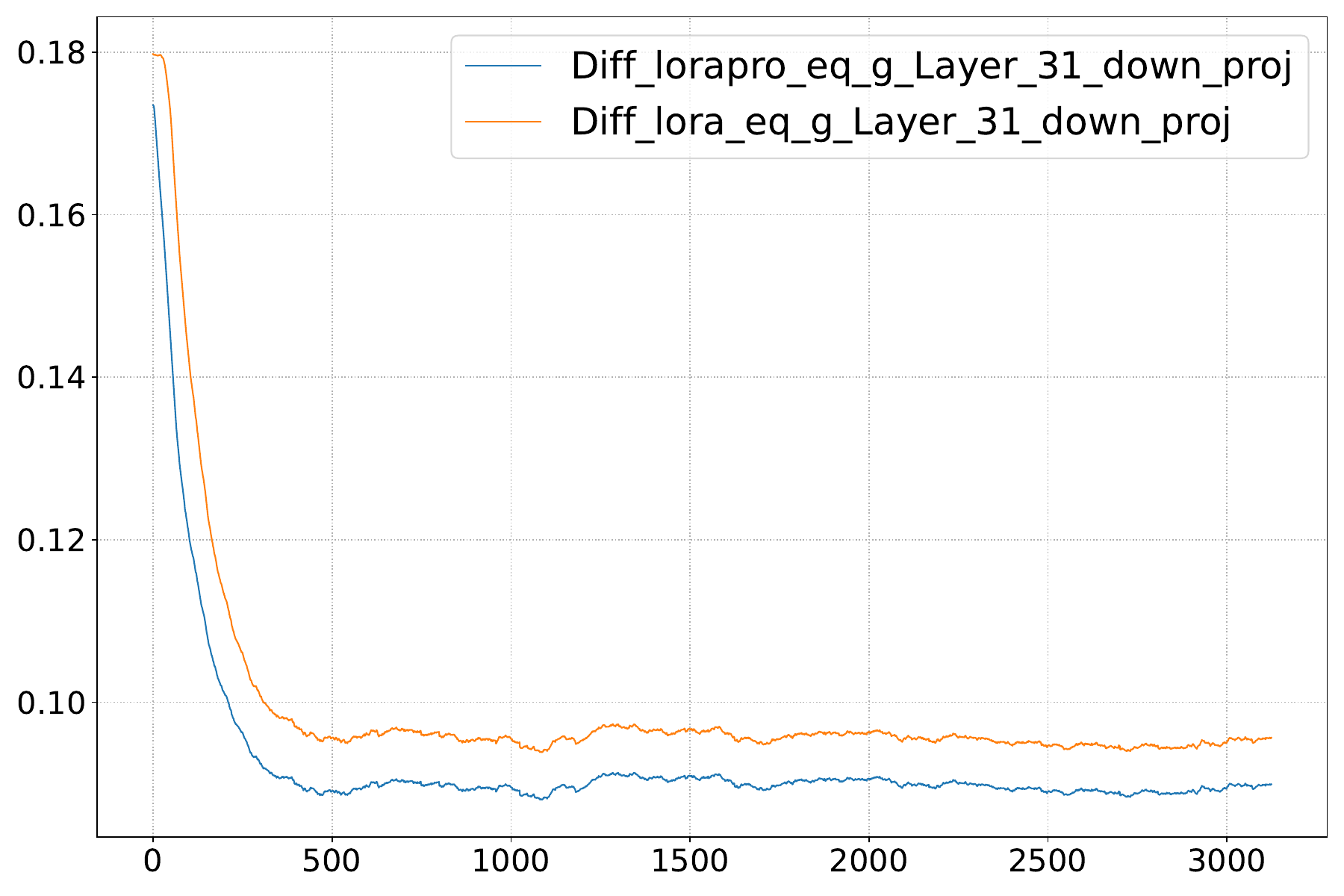}
    \end{minipage}
    \\
    \begin{minipage}{0.325\textwidth}
        \centering
        \includegraphics[width=1.0\textwidth]{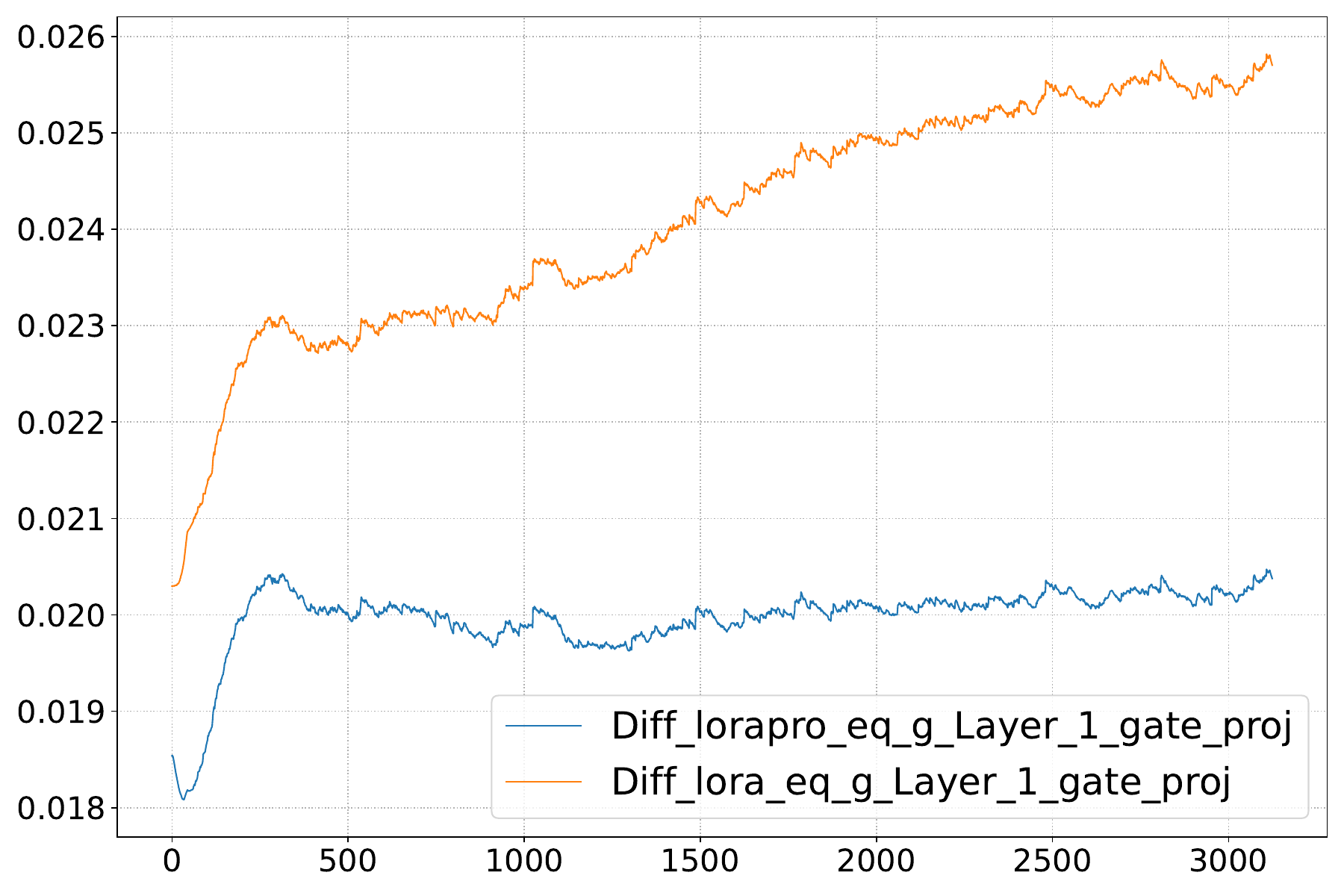}
    \end{minipage}
    \begin{minipage}{0.325\textwidth}
        \centering
        \includegraphics[width=1.0\textwidth]{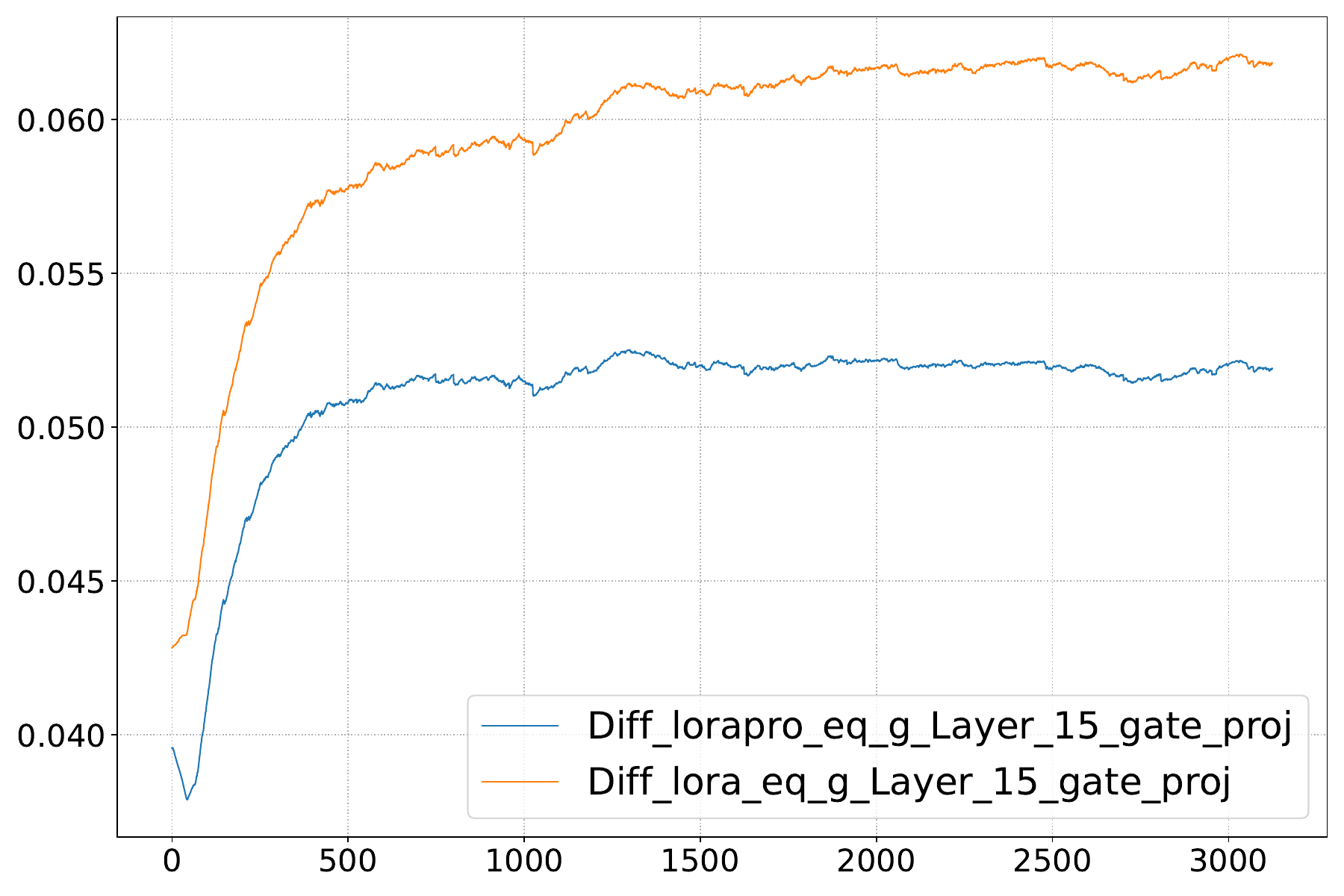}
    \end{minipage}
    \begin{minipage}{0.325\textwidth}
        \centering
        \includegraphics[width=1.0\textwidth]{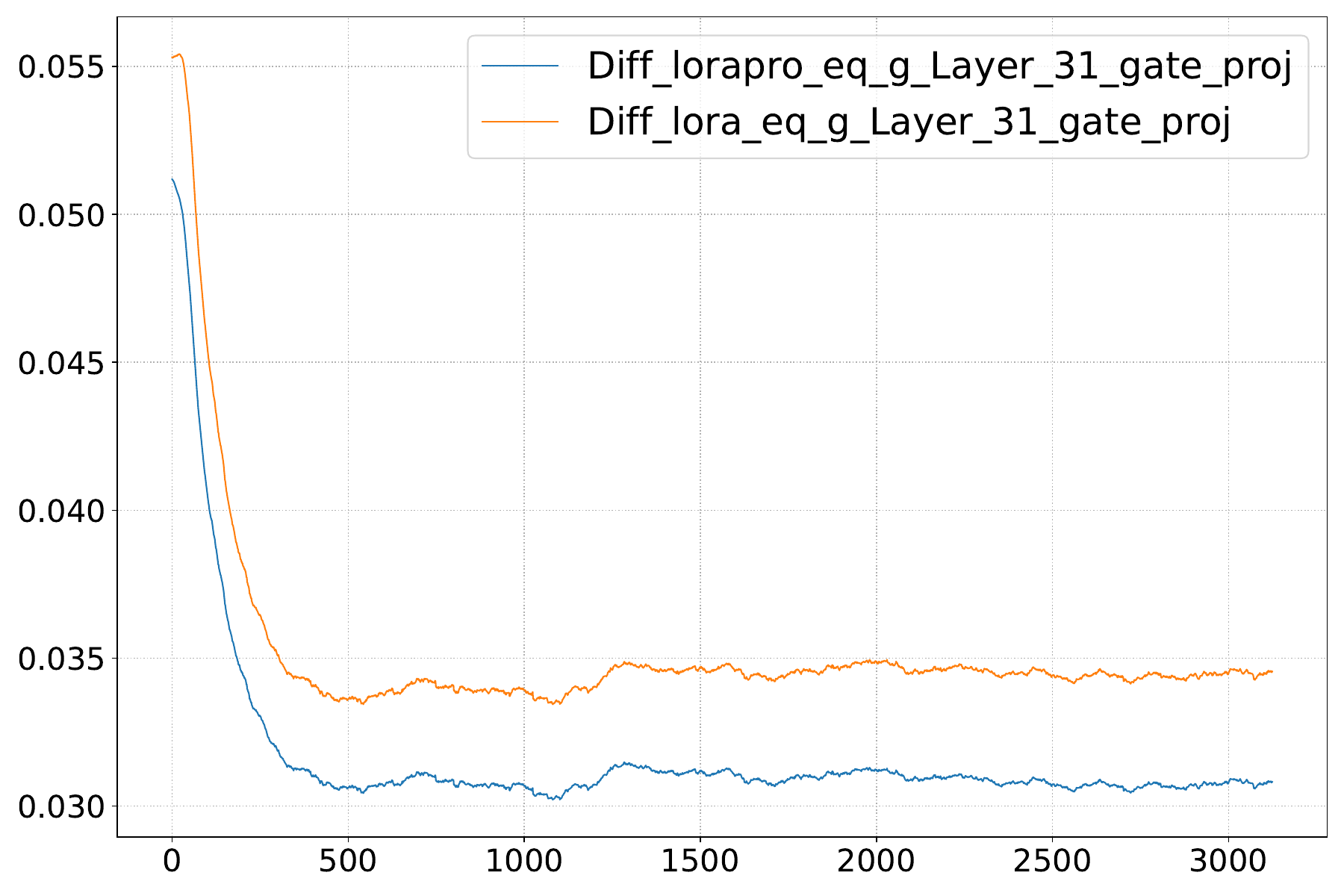}
    \end{minipage}
    \caption{
    Visualization of the differences between the equivalent gradients of LoRA, LoRA-Pro, and the full-parameter gradients during training, i.e., $\|\tilde{g} - g\|_F$.
    The rows illustrate the differences across various modules, including Q, K, V, O, Up, Down, and Gate. The columns show the differences at different depths, categorized as shallow (1), medium (15), and deep layers (31).
    }
    \label{fig:discrepancy}
\end{figure}

\subsection{Experiments Results with Different Learning Rates}
To demonstrate the effectiveness of LoRA-Pro, we evaluated its performance on GSM8K under learning rates of 1e-5 and 5e-5, comparing it with LoRA and LoRA-GA. The results, presented in Table~\ref{tab:lr}, show that LoRA-Pro maintains its advantages under both learning rates, highlighting its robustness to variations in learning rate.

\begin{table}[htbp]
  \centering
  \caption{{Performance comparison of LoRA, LoRA-GA, LoRA-Pro on GSM8K with learning rates 1e-5, 2e-5, and 5e-5.}}
    \begin{tabular}{c|lll}
    \toprule
    \multicolumn{1}{l|}{GSM8K} & \multicolumn{1}{c}{LoRA} & \multicolumn{1}{c}{LoRA-GA} & \multicolumn{1}{c}{LoRA-Pro} \\
    \midrule
    \multicolumn{1}{l|}{1e-5} & \multicolumn{1}{c}{36.65±0.82} & \multicolumn{1}{c}{50.25±0.62} & \multicolumn{1}{c}{56.48±0.27} \\
    \multicolumn{1}{l|}{2e-5} & 42.08±0.04 & 53.60±0.30 & 57.57±0.50 \\
    \multicolumn{1}{l|}{5e-5} & 46.41±0.16 & 52.89±0.19 & 58.76±1.86 \\
    \bottomrule
    \end{tabular}%
  \label{tab:lr}%
\end{table}%

\subsection{Additional Experiments on Latest Models}
\begin{table}[!htbp]
  \centering
  \caption{{Performance comparison of LoRA, LoRA-GA, and LoRA-Pro with Llama-2-7B and Llama-3.1-8B.}}
    \begin{tabular}{l|ccc}
    \toprule
    GSM8K & LoRA	 & LoRA-GA & LoRA-Pro \\
    \midrule
    Llama-2-7B & 42.08±0.04 & 53.60±0.30 & 57.57±0.50 \\
    Llama-3.1-8B & 71.04±0.26 & 72.20±1.15 & 75.49±0.42 \\
    \bottomrule
    \end{tabular}%
  \label{tab:latest_model}%
\end{table}%
To further demonstrate the effectiveness of LoRA-Pro, we conducted additional experiments using the latest model, LLaMA-3.1-8B~\citep{dubey2024llama}.
We fine-tuned the model using these three methods,LoRA, LoRA-GA, and LoRA-Pro,on the MetaMath100k dataset and evaluated its performance on the GSM8k dataset.
All results are averaged over three different random seeds.

As shown in Table~\ref{tab:latest_model}, LoRA-Pro demonstrates a clear advantage over both LoRA and LoRA-GA when applied to the LLaMA-3.1-8B model, further highlighting its effectiveness.

\end{document}